\setlist{leftmargin=5mm}
\newcommand{\Note}[1]{}
\renewcommand{\Note}[1]{\hl{[#1]}}
\numberwithin{equation}{section}
\newtheorem{theorem}{Theorem}
\newtheorem{othertheorem}{othertheorem}[section]
\newtheorem{lemma}[othertheorem]{Lemma}
\theoremstyle{definition}
\newtheorem{definition}[othertheorem]{Definition}
\newtheorem{remark}[othertheorem]{Remark}
\newtheorem{assumption}[othertheorem]{Assumption}
\theoremstyle{definition}
\newtheorem{fact}[othertheorem]{Fact}
\renewcommand{\v}{\bm{v}}
\newcommand{\g}{\bm{g}}
\newcommand{\I}{\mathbf{I}}
\newcommand{\R}{\mathbb{R}}
\newcommand{\w}{\bm{w}}
\newcommand{\m}{\mathbf{m}}
\newcommand{\E}{\mathbb{E}}
\renewcommand{\P}{\mathbb{P}}
\newcommand{\M}{\mathcal{M}}
\title{Automatic Clipping: Differentially Private Deep Learning Made Easier and Stronger}
\author{%
  Zhiqi Bu \\
  AWS AI\\
  \texttt{zhiqibu@amazon.com}
  \\
  \And
  Yu-Xiang Wang\\
AWS AI, UC Santa Barbara\\
  \texttt{yuxiangw@cs.ucsb.edu}
  \And
  Sheng Zha\\
  AWS AI\\
  \texttt{zhasheng@amazon.com}  
  \And
 George Karypis\\
  AWS AI\\
  \texttt{gkarypis@amazon.com}  
}
\begin{document}

\maketitle



\begin{abstract}


Per-example gradient clipping is a key algorithmic step that enables practical differential private (DP) training for deep learning models. The choice of clipping threshold $R$, however, is vital for achieving high accuracy under DP. We propose an easy-to-use replacement, called automatic clipping, that eliminates the need to tune $R$ for any DP optimizers, including DP-SGD, DP-Adam, DP-LAMB and many others.
The automatic variants are as private and computationally efficient as existing DP optimizers, but require no DP-specific hyperparameters and thus make DP training as amenable as the standard non-private training. We give a rigorous convergence analysis of automatic DP-SGD in the non-convex setting, showing that it can enjoy an asymptotic convergence rate that matches the standard SGD, under a symmetric gradient noise assumption of the per-sample gradients (commonly used in the non-DP literature). We demonstrate on various language and vision tasks that automatic clipping outperforms or matches the state-of-the-art, and can be easily employed with minimal changes to existing codebases\footnote{Code for our experiments is available at FastDP library \href{https://github.com/awslabs/fast-differential-privacy}{https://github.com/awslabs/fast-differential-privacy}.}.
\end{abstract}

\section{Introduction}
Deep learning has achieved impressive progress in a wide range of tasks. These successes are made available, in part, by the collection of large datasets, sometimes containing sensitive private information of individual data points. Prior works have illustrated that deep learning models pose severe privacy risks to individual subjects in the training data and are susceptible to various practical attacks. For example, machine learning services such as Google Prediction API and Amazon
Machine Learning can leak membership information from the purchase records \cite{shokri2017membership}; the GPT2 language models auto-complete texts that contain someone's full name, phone number, email address, etc., from the training data that it memorizes, if invoked by specific prefixes \cite{carlini2021extracting}.

Differential privacy (DP) \cite{dwork2008differential,dwork2014algorithmic,dwork2006calibrating} is a formal definition of privacy that has been shown to prevent the aforementioned privacy risks in deep learning \cite{abadi2016deep}. At a high level, the key difference between the DP deep learning and the standard one is whether the gradient is privately released. In other words, while the standard optimizers update on $\sum_i \g_i$, the DP optimizers update on the \textit{private gradient}:
\vspace{-0.2cm}
\begin{align}
\text{DP Optimizer} (\{\g_i\}_{i=1}^B )&=\text{Optimizer}(\overbrace{\sum\nolimits_i \g_i\cdot \texttt{Clip}(\|\g_i\|;R)+\sigma R\cdot\mathcal{N}(0,\I)}^{\text{private gradient}})
\label{eq:DP optimizers}
\\
\text{Standard Optimizer} (\{\g_i\}_{i=1}^B )&=\text{Optimizer}(\sum\nolimits_i \g_i)
\label{eq:NonDP optimizers}
\end{align}
Here $\g_i\in\R^d$ is the per-sample gradient of loss $l_i$, $\mathcal{N}$ is the standard normal, $\sigma$ is the noise multiplier, and $R$ is the clipping threshold. The clipping function $\texttt{Clip}:\R^d\to\R$ is defined such that $\|\g_i\cdot \texttt{Clip}(\g_i;R)\|\leq R$. For instance, the DP-SGD in \cite{abadi2016deep} is 
\begin{align}
\text{DP-SGD}_\text{Abadi}:\quad
\w_{t+1}=\w_t-\eta\Big(\sum\nolimits_i \frac{\partial l_i}{\partial \w_t}\min\Big(R/\Big\|\frac{\partial l_i}{\partial \w_t}\Big\|,1\Big)+\sigma R\cdot\mathcal{N}(0,\I)\Big)
\label{eq:abadi dpsgd}
\end{align}

\vspace{-0.2cm}
In comparison to the regular training \eqref{eq:NonDP optimizers}, two additional DP-specific hyperparameters $R$ and $\sigma$ need to be determined in DP learning \eqref{eq:DP optimizers}. On the one hand, setting the noise multiplier $\sigma$ is easy and can be derived analytically prior to the training. Whenever the privacy budget $(\epsilon,\delta)$ is determined, one can apply off-the-shelf privacy accounting tools in \Cref{sec: dp intro} to determine $\sigma$, based on the subsampling probability $p$ 
and the number of iterations $T$:

\vspace{-0.4cm}
$$\text{privacy\_accountant}(\sigma, p, T;\delta)=\epsilon$$

\vspace{-0.1cm}
On the other hand, the choice of clipping threshold $R$ is crucial to the performance of DP models, yet the hyperparameter tuning is much labor-intensive. Recent advances of DP deep learning on ImageNet \cite{kurakin2022toward} and on E2E datasets \cite{li2021large}, using ResNet18 and GPT2 respectively, illustrate that the performance is very sensitive to $R$. We have reproduced their results in \Cref{fig:my motivation}. Observe that on ImageNet, ResNet18 can drop from the highest 45\% accuracy to 31\% if $R$ is chosen 2 times larger, and to 0.1\% if $R$ is chosen 4 times larger. Similar drastic drop can also be observed in \cite[Figure 3]{kurakin2022toward} even if the noise multiplier $\sigma=0$. Unlike the noise multiplier $\sigma$, the clipping threshold $R$ cannot be inferred from the privacy budget $(\epsilon,\delta)$ and have to be tuned. Consequently, DP training necessarily requires an expensive 2D grid search for $(R,\eta)$, like \Cref{fig:my motivation}, whereas the regular training only requires an easy 1D grid search for $\eta$. Even worse, the difficulty of tuning a per-layer clipping threshold vector \cite{mcmahan2017learning}, i.e. one clipping threshold for one layer, may increase exponentially as the number of layers increases.

To save the effort of tuning $R$, previous researches have proposed different approaches. In \cite{andrew2021differentially,pichapati2019adaclip,golatkar2022mixed,he2022exploring}, researchers advocate to use data-adaptive information to select $R$, such as a specified quantile of the gradient norm distribution.
These adaptive clipping methods can be a little ad-hoc: they often replace the need to tune $R$ by the need to tune one or more new hyperparameters, e.g. the quantile to use and the ratio to split the privacy budget between the quantile decision and the gradient perturbation.
Another approach used by the practitioners is to replace the single 2D grid search by multiple cheaper 1D grid searches. For example, the researchers propose, in \cite[Section 3.3]{kurakin2022toward} to fine-tune $\eta$ with non-DP SGD, fix $\eta$ and sweep over various values of the clipping threshold $R$ with DP-SGD, then further fix $R$ and do one more grid search on $\eta$. However, tuning $R$ formally in a data-dependent way (e.g. through cross-validation) introduces additional privacy loss \cite{papernot2021hyperparameter}, and most existing empirical work does not privately conduct hyperparameter tuning.

We take a completely different route by proposing a new clipping principle that removes $R$, instead of coming up with methods to find the appropriate $R$. We term our method as \emph{automatic clipping} and the DP optimizers using it as \emph{automatic DP optimizers}. Our contributions are:

\vspace{-0.15cm}
\begin{enumerate}
    \item We propose the automatic clipping in \eqref{eq:Auto without R} that expunges the clipping threshold from general DP optimizers, making DP training as amenable as regular training. In large-scale tasks (GPT-level) like \Cref{fig:my motivation}, our automatic clipping can reduce the cost of ablation study by $5\times$\footnote{The hyperparameter tuning of $(R,\eta)$ takes days (e.g. GPT2 \cite{li2021large}) to months (e.g. GPT3-175B) on large foundation models, highlighting the significance of our method to expunge the additional $R$.}.
    \item We show that automatic DP optimizers are as private and efficient as existing DP optimizers.
    \item We show in \Cref{thm:upper bounding grad norm without r} that automatic DP-SGD converges in the non-convex setting, at the same asymptotic convergence rate as the standard SGD. Our theoretical analysis successfully explains the training behaviors of deep learning in previous empirical works.
    \item We demonstrate the superiority of automatic clipping on a variety of vision and language tasks, especially with large models including ResNet, RoBERTa and GPT2.
    \item In \Cref{app:codebase}, we include simple code snippets that demonstrate how easy it is to switch from Abadi's clipping to our automatic clipping in popular codebases, e.g. Opacus and ObJAX.
\end{enumerate}

\section{Preliminaries}
\subsection{Differential Privacy}
\label{sec: dp intro}

We consider the $(\epsilon,\delta)$-DP in \Cref{def:DP}, where smaller $(\epsilon,\delta)$ means stronger privacy guarantee.

\begin{definition}[\cite{dwork2006calibrating}]\label{def:DP}
A randomized algorithm $M$ is $ (\varepsilon, \delta)$-differentially private (DP) if for any two neighboring datasets $S,S^{\prime}$ (i.e. if one can obtain $S^\prime$ by adding or removing one data point from $S$), and for any event $E$,
\begin{align}
 \mathbb{P}[M(S) \in E] \leqslant \mathrm{e}^{\varepsilon} \mathbb{P}\left[M\left(S^{\prime}\right) \in E\right]+\delta.
\end{align}
\end{definition}

In words, DP restricts the influence of an arbitrary sample, so that the information contributed by such sample is limited and less vulnerable to privacy attacks. In deep learning, DP is achieved by applying the \emph{subsampled Gaussian mechanism} to privatize the minibatch gradients during training. 

As illustrated in \Cref{eq:DP optimizers}, the subsampled Gaussian mechanism involves (I) sampling a minibatch by including each data point iid with probability $p$ (II) per-sample gradient clipping to bound the $l_2$ norm sensitivity at $R$ and (III) adding independent Gaussian noise proportional to $R$ and $\sigma$, where $\sigma$ is derived from the privacy budget $(\epsilon,\delta)$. This can be realized by leveraging a variety of modern privacy accounting tools, such as Renyi DP (or moments accountant) \cite{abadi2016deep,mironov2017renyi,wang2019subsampled}, Privacy Loss distribution (Fourier accountants) \cite{koskela2020computing,gopi2021numerical,zhu2021optimal}, or Gaussian DP \cite{dong2019gaussian,bu2020deep}. 

\subsection{Differentially Private optimizers with general clipping operations}
\label{subsec:DP optimizer with general clipping}
Privately released stochastic gradients (through the Gaussian mechanism) can be used by various off-the-shelf optimizers, including DP-SGD in \eqref{eq:abadi dpsgd}, DP-HeavyBall, DP-AdaGrad, DP-Adam, DP-FedAvg/FedSGD \cite{mcmahan2017learning}, etc. To improve the performance of DP optimizers, previous researches on the per-sample clipping can be classified into two categories. 

The first category, where the majority of researches lie in, works with Abadi's clipping and focuses on better design of $R$. To name a few examples, one can adaptively design $R_t$ for each iteration $t$ \cite{andrew2021differentially,pichapati2019adaclip,golatkar2022mixed}, or design the per-layer clipping threshold vector $\bm R\in\R^{L}$ for $L$ layers \cite{abadi2016deep,mcmahan2017learning} so as to apply a different clipping threshold for each layer.

Fewer works fall into the second category that proposes new clipping methods. In fact, any function $\texttt{Clip}:\R^d\to\R$ satisfying $\|\texttt{Clip}(\g)\cdot\g\|\leq R$ can serve as a valid clipping function besides Abadi's. For example, the global clipping \cite{bu2021convergence} proposes $\texttt{Clip}_\text{global}(\g_i):=\mathbb{I}(\|\g_i\|<R)$ to mitigate the bias of the private gradient and alleviate the mis-calibration issue of DP classifiers. Another example is the re-parameterized clipping \cite{de2022unlocking}, $\texttt{Clip}_\text{re-param}(\g_i):=\min(1/\|\g_i\|,1/R)$, which is equivalent to Abadi's clipping under a re-scaled learning rate. Our automatic clipping belongs to this category. We note that different clipping methods work orthogonally to optimizers, network architectures and gradient norm computation (see \Cref{sec:discussion}).


\section{Motivation}

\subsection{Small clipping threshold often works best}

\begin{figure}[!htb]
\centering
\vspace{-0.4cm}
    \includegraphics[width=0.36\textwidth]{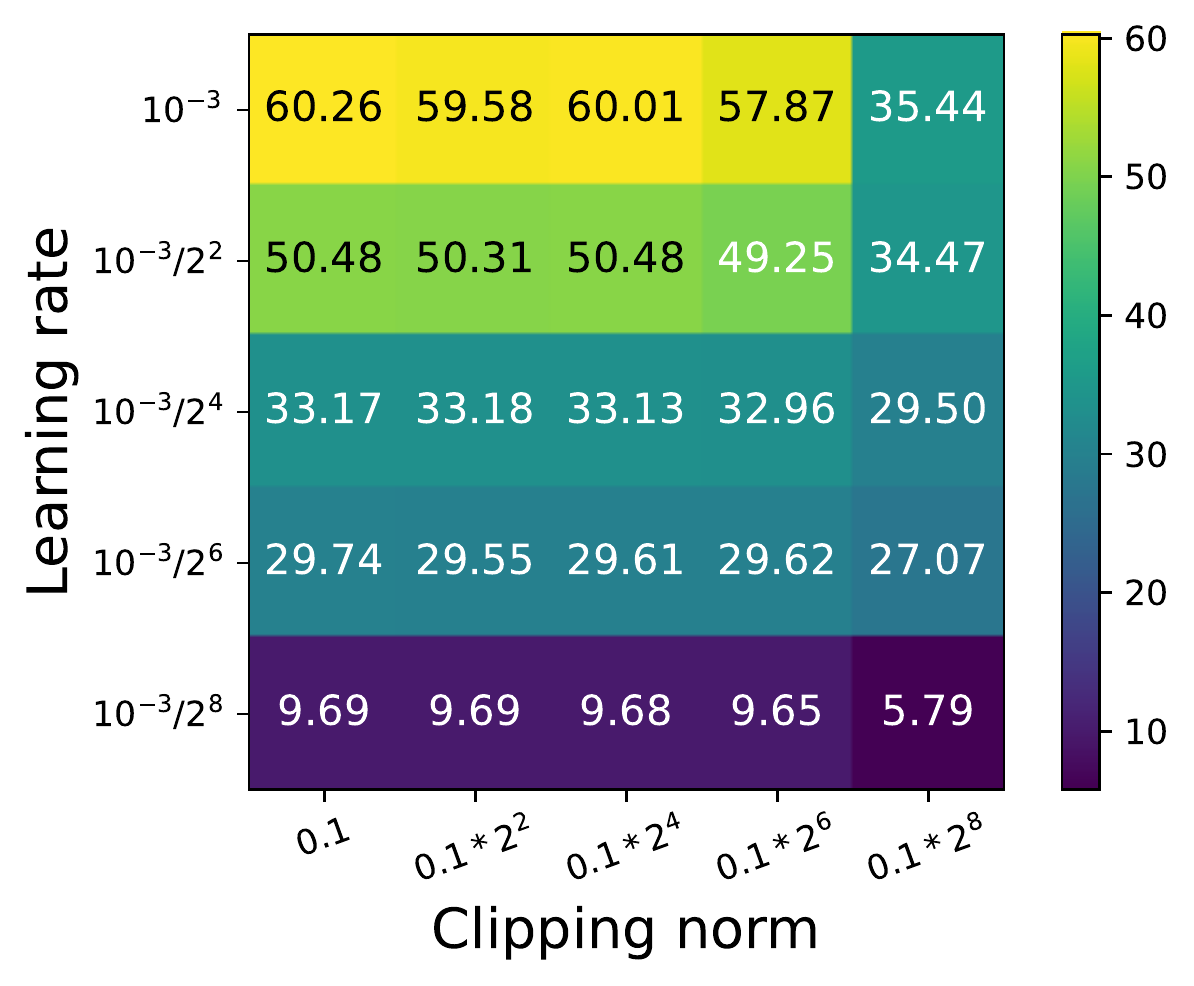}
    \includegraphics[width=0.36\textwidth]{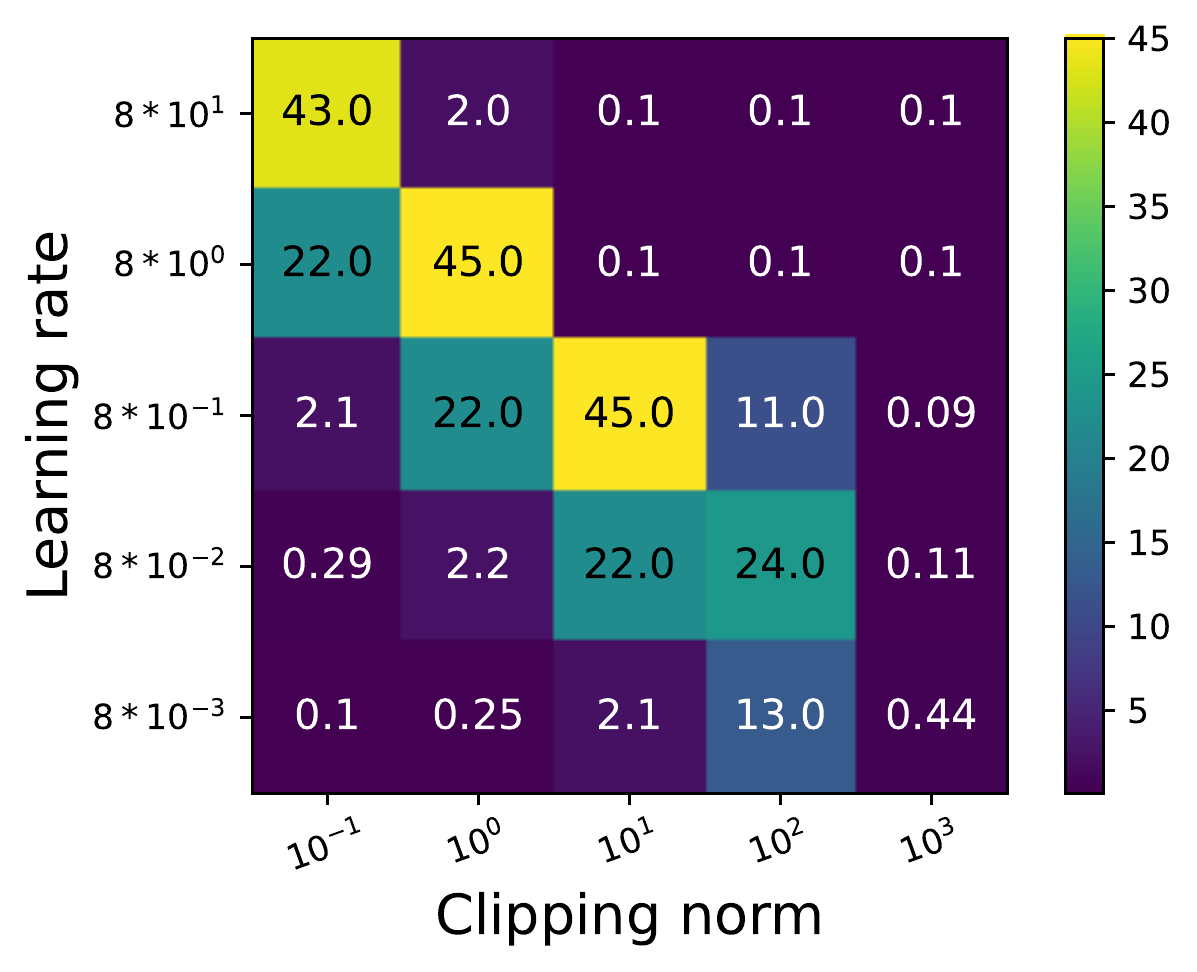}
\vspace{-0.25cm}
    \caption{Ablation study of clipping threshold and learning rate. Left: BLEU score of GPT2 on E2E dataset \cite{li2021large}, with DP-AdamW. Right: Test accuracy of ResNet18 on ImageNet \cite{kurakin2022toward}, with DP-SGD.}
    \label{fig:my motivation}
\end{figure}
\vspace{-0.2cm}
One intriguing observation that we can make about the recent studies on DP learning with large models is that the state-of-the-art (SOTA) results are often achieved with very small clipping threshold $R$.  This observation is consistent in both vision and language tasks. In \cite{li2021large}, GPT2 (about 800 million parameters) and RoBERTa models (over 300 millions parameters) achieve the best results under DP on QNLI, MNLI, SST-2, QQP, E2E, and DART datasets, with each per-sample gradient clipped to length $R=0.1$. In \cite{kurakin2022toward,de2022unlocking,mehta2022large}, ResNets and Vision Transformers achieve the best DP results on ImageNet with $R=1$; in \cite{tramer2020differentially}, the best DP results on CIFAR10 use $R=0.1$ with ResNeXt-29 and SimCLRv2 \cite{chen2020simple}. The effectiveness of small clipping threshold together with proper learning rate is depicted in \Cref{fig:my motivation}.

Intuitively, smaller $R$ implies that the Abadi's clipping \eqref{eq:Auto-V} is effective, which means $\min\big(R/\|\g_i\|,1\big)=R/\|\g_i\|$. Given that the clipping threshold $R$ is so small compared to the number of parameters in large models, and that strong DP is guaranteed when the number of training iterations is small (i.e. $\|\g_i\|$ has not converged to small values yet), we expect and empirically observe that the clipping happens on a large proportion of per-sample gradients at all iterations. For instance, we find in the GPT2 generation experiments in \cite{li2021large} that 100\% of per-sample gradients are clipped at all iterations; in classification tasks such as  QQP, QNLI, and MNLI, the percentage of clipping is about $20\sim60\%$ on average (more details in \Cref{app:freq of clipping}).

\subsection{Per-sample gradient normalization as new clipping}
\label{sec:normal motivation}
In the small clipping threshold regime, we can approximately view
\begin{align}
\texttt{Clip}_\text{Abadi}(\g_i;R)=\min\left(R/||\g_i||,1\right)\approx R/||\g_i||=:\texttt{Clip}_\text{AUTO-V}(\g_i;R)
\label{eq:Auto-V}
\end{align}
and thus derive a novel private gradient $\sum_i R\frac{\g_i}{\|\g_i\|}+\sigma R\cdot\mathcal{N}(0,\I)$. Here AUTO-V stands for the vanilla automatic clipping, which essentially performs the normalization on each per-sample gradient. As a specific example, we can write the $R$-dependent automatic DP-SGD as
\begin{align}
R\text{-dependent } \text{DP-SGD}_\text{AUTO-V}: \w_{t+1}=\w_t-\eta\Big(\sum_{i} R\frac{\partial l_i}{\partial \w_t}/\|\frac{\partial l_i}{\partial \w_t}\|+\sigma R\cdot\mathcal{N}(0,\I)\Big)
\label{eq:R-dep AUTO-V dpsgd}
\end{align}

\vspace{-0.4cm}
We may view our AUTO-V clipping as to maximize the dot-product similarity (a commonly used similarity measure, e.g. in the attention block in transformers \cite{vaswani2017attention}) between the clipped gradient and the regular gradient. Suppose we want to
\begin{align}
    \max_{C_i} \left\langle{\textstyle\sum}_i C_i \g_i,{\textstyle\sum}_j\g_j\right\rangle
    \quad
    \text{ s.t. } 0\leq C_i\leq R/\|\g_i\|
\label{eq:clipping max}
\end{align}

\vspace{-0.15cm}
Note that the constraint is a sufficient condition for clipping, as discussed in \Cref{subsec:DP optimizer with general clipping}. It is not hard to see that the optimal clipping factor (though violating DP guarantee\footnote{In DP literature, per-sample clipping depend only on individual gradient $\g_i$ separately, hence does not allow the use of $\sum_j \g_j$, which changes the sensitivity when adding or removing one data point from the mini-batch.}) regarding \eqref{eq:clipping max} is
\begin{align}
C_i =
R/\|\g_i\|\cdot\mathbb{I}(\langle\g_i,\sum\nolimits_j\g_j\rangle>0),
\end{align}

\vspace{-0.15cm}
If the per-sample gradients are indeed concentrated in the sense $\forall i, \langle \g_i,\sum_j\g_j\rangle\geq 0$, then AUTO-V is the optimal per-sample gradient clipping. We compare with Abadi's clipping in \Cref{fig:dot product}, where this similarity is significantly magnified by our AUTO-V clipping. In fact, the dot-product similarity in \eqref{eq:clipping max} closely resembles the convergence of DP optimization for \Cref{thm:upper bounding grad norm without r} in \eqref{eq:lipschitz expand}. 
\begin{figure}[!htb]
\vspace{-0.1cm}
\centering
\includegraphics[width=0.33\linewidth]{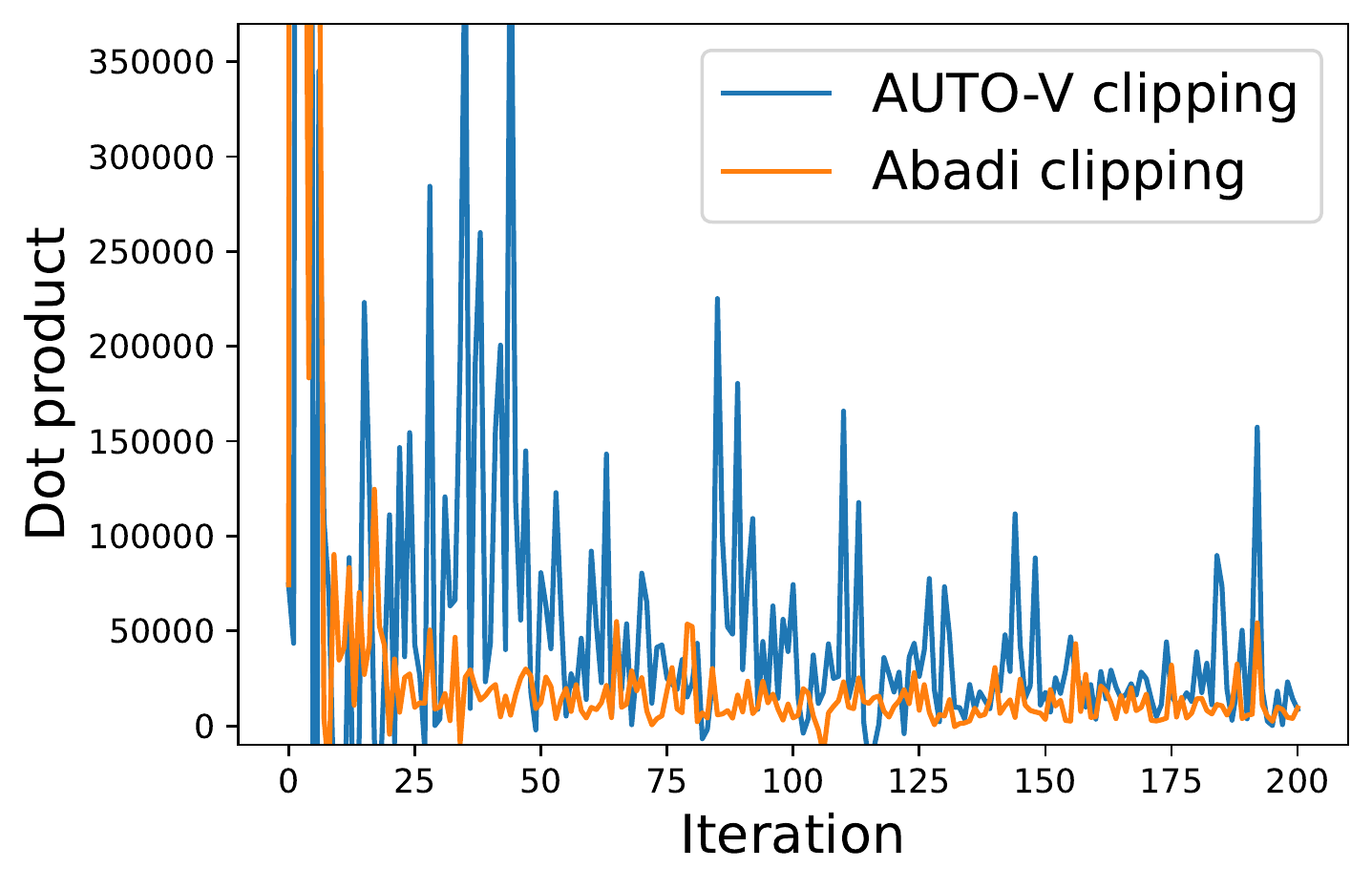}
\includegraphics[width=0.33\linewidth]{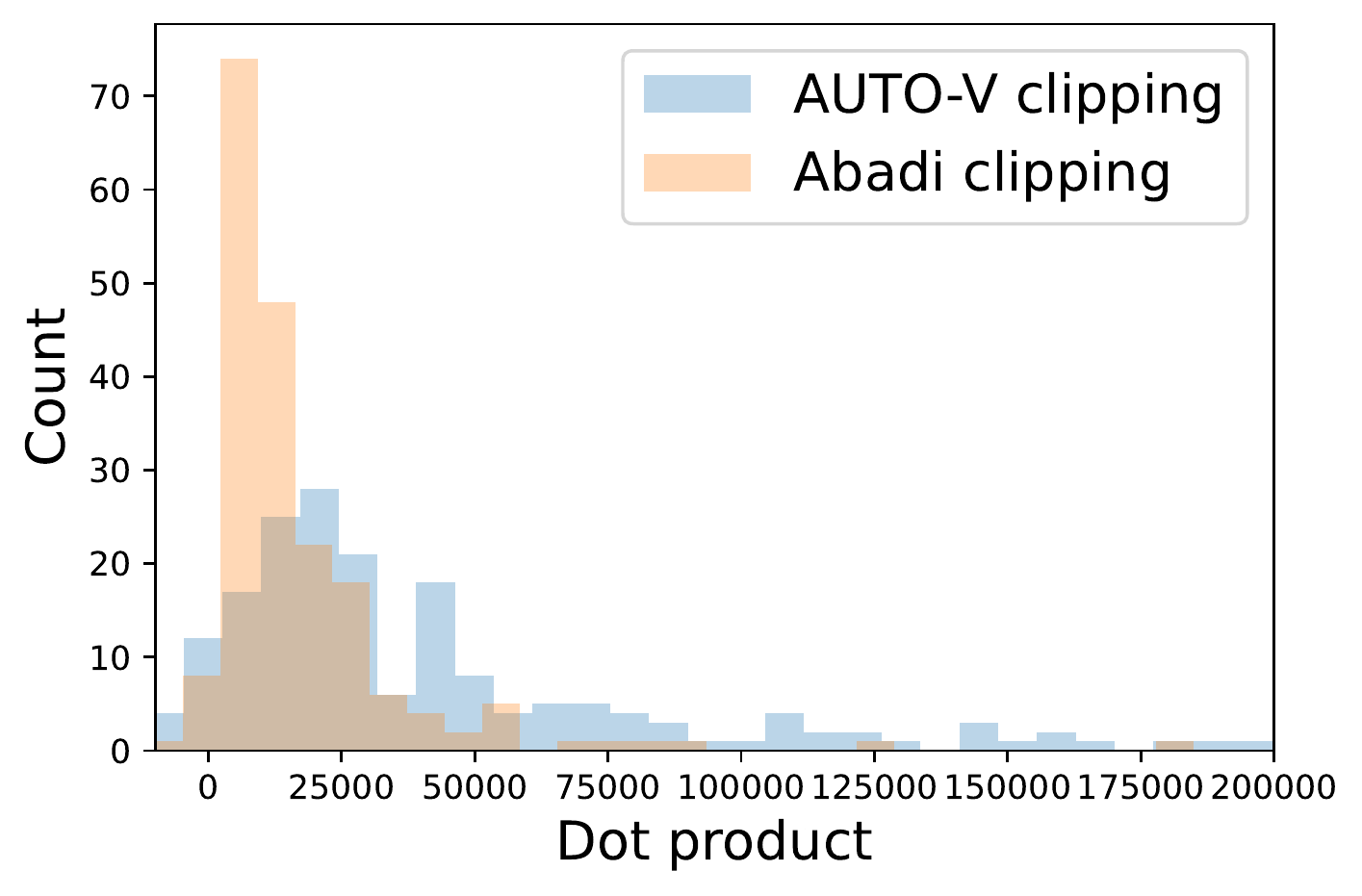}
\vspace{-0.2cm}
    \caption{RoBERTa-base with DP-Adam ($\epsilon=3$) on SST2 dataset, as in \Cref{sec:NLP classification}.}
    \label{fig:dot product}
\end{figure}

\newpage
\subsection{Stability constant breaks scale-invariance and remains stationary}
\label{subsec:lazy region and stability}

\begin{wrapfigure}{r}{0.4\textwidth}
    \vspace{-0.55cm}
    \centering
\includegraphics[width=0.93\linewidth]{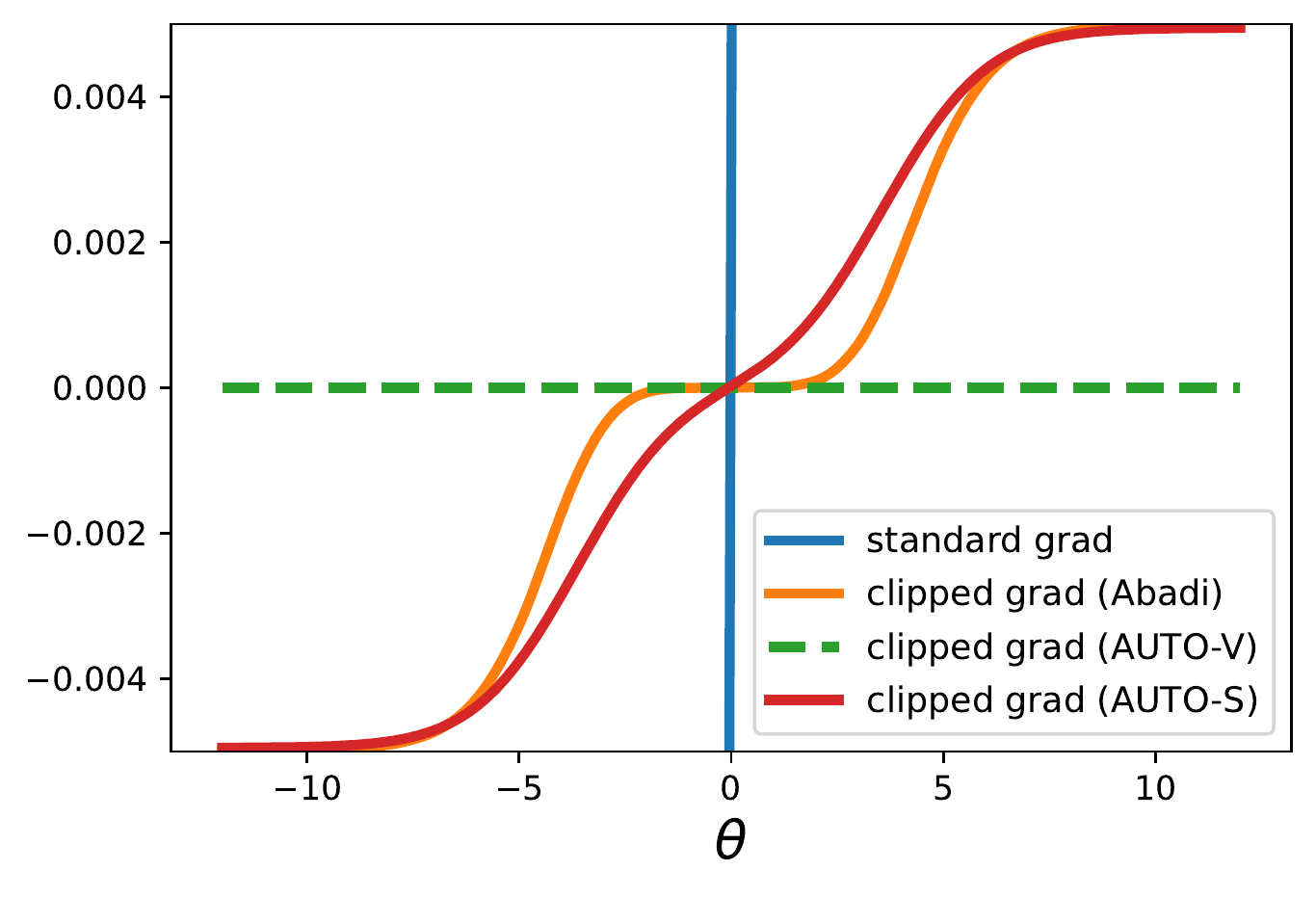}
    \vspace{-0.3cm}
    \caption{Gradient (scalar) at each $\theta$.}
    \label{fig:lazy zone}
    \vspace{-0.4cm}
\end{wrapfigure}

One potential drawback of AUTO-V clipping is that all gradients lose their magnitudes information completely, since $\|\g_i\cdot \texttt{Clip}_\text{AUTO-V}(\g_i;R)\|=R, \forall i$. This scale-invariance in AUTO-V and partially in Abadi's clipping (when $\|\g_i\|>R$) leads to the "lazy region" issue: the parameters will not be updated by DP-GD even if the true gradients are non-zero. In \Cref{fig:lazy zone}, we illustrate such issue in a logistic regression\footnote{The settings are in \Cref{app:lazy region examples}, where the lazy region issues also emerge in the mean estimation problem. We note that the lazy region is also discussed in \cite[Example 2]{chen2020understanding}.} for AUTO-V and Abadi's clipping, when the trainable parameter $\theta\in [-2,2]$, as the gradients from two classes cancel each other.

To preserve the magnitude information and thus escape the lazy region, we propose the AUTO-S clipping, with a positive stability constant $\gamma$:
\begin{align}
\texttt{Clip}_\text{AUTO-S}(\g_i;R):=R/(||\g_i||+\gamma)
\label{eq:Auto-S}
\end{align}

\vspace{-0.2cm}
We visualize in \Cref{fig:norm after clipping} that AUTO-S allows larger per-sample gradients to have larger magnitudes after the clipping, while still allowing smaller gradients to vanish after ``clipping’’. That is, as $\g_i\to 0$, the existence of $\gamma$ allows the clipped gradient $C_i\g_i\to\g_i/\gamma$ rather than having a magnitude $R$ as in AUTO-V. We elaborate this point in \Cref{sec:privacy}.
 This is critical in our convergence analysis and allows DP-SGD$_\text{AUTO-S}$ (but not DP-SGD$_\text{AUTO-V}$) to converge to zero gradient norms in \Cref{sec:convergence}.


\section{Automatic DP Training}
One may wonder why our clipping \eqref{eq:Auto-V}\eqref{eq:Auto-S} is automatic at all, if the hyperparameter $R$ is still present and there is an additional parameter $\gamma$ to choose. It turns out that any constant choice of $R>0$  is equivalent to choosing $R=1$, and common deep learning optimizers are insensitive to the choice of $\gamma$ (e.g. for any $\gamma>0$, we show that the gradient norm converges to zero at the same asymptotic rate in \Cref{thm:upper bounding grad norm without r}; see also the ablation study in \Cref{fig:easy to choose gamma}). Consequently, we set $\gamma=0.01$ as the default. Specifically, let us redefine the $R$-independent clipping function:
\vspace{-0.15cm}
\begin{align}
\texttt{Clip}_\text{AUTO-S}(\g_i):=1/(||\g_i||+\gamma).
\label{eq:Auto without R}
\end{align}
With this clipping, we can design automatic DP optimizers similar to \eqref{eq:DP optimizers}:
\begin{align}
\text{Automatic DP Optimizer} (\{\g_i\}_{i=1}^B )&=\text{Optimizer}\underbrace{\big(\sum_{i} \frac{\g_{t,i}}{||\g_{t,i}||+\gamma}+\sigma\cdot\mathcal{N}(0,\I)\big)}_{\text{denoted as }\hat\g_t}
\label{eq:trully auto DP optim}
\end{align}

\vspace{-0.2cm}
Clearly, the new private gradient $\hat \g_t$ from our automatic clipping is $R$-independent, in contrast to the one used in \eqref{eq:DP optimizers}. A concrete example (in the case of $\gamma = 0$) that is comparable to \eqref{eq:R-dep AUTO-V dpsgd} will be
\vspace{-0.15cm}
\begin{align}
R\text{-independent } \text{DP-SGD}_\text{AUTO-V}:\quad\w_{t+1}=\w_t-\eta\Big(\sum_{i} \frac{\partial l_i}{\partial \w_t}/\Big\|\frac{\partial l_i}{\partial \w_t}\Big\|+\sigma\cdot\mathcal{N}(0,\I)\Big)
\label{eq:R-indep AUTO-V dpsgd}
\end{align}

\vspace{-0.3cm}
Leveraging the private gradient $\hat \g_t$ in \eqref{eq:trully auto DP optim}, we can train DP neural networks without tuning DP-specific hyperparamters $R$ and $\sigma$, as demonstrated in \Cref{alg:dp auto procedure}.

\vspace{-0.4cm}
\begin{minipage}{\textwidth}
\begin{algorithm}[H]
	\caption{Automatic Deep Learning with DP}\label{alg:dp auto procedure}
    \textbf{Parameters:} initial weights $\w_0$, learning rate $\eta_t$, sampling probability $p$, number of iterations $T$.
\begin{algorithmic}[1]
    \State Compute $\sigma$ such that  $\epsilon_{\text{Accountant}}(\delta,\sigma,p,T) \leq \epsilon$ from any privacy accountant.
	\For{iteration $t = 1, \cdots, T$}
	\State{Sample a batch $B_t$ by including each data point i.i.d. with probability $p$ 
	}
	\State Apply automatic clipping to per-sample gradients $\{\g_{i}\}_{i\in B_t}$:  $\hat \g_i=\g_i/(\|\g_i\|_2+0.01)$.
	\State Add Gaussian noise to the sum of clipped gradients: $\hat \g=\sum_i \hat\g_i+\sigma\cdot\mathcal{N}(0,\I)$.
	\State Update $\w_t$ by any optimizer on the private gradient $\hat\g$ with learning rate $\eta_t$.
	\EndFor
	\end{algorithmic}
\end{algorithm}
\end{minipage}

We will elaborate two distinct reasons in the next sub-sections for the following statement:
$$\boxed{\text{DP Optimizer}_\text{Abadi}\approx R\text{-dependent DP Optimizer}_\text{AUTO}\equiv R\text{-independent DP Optimizer}_\text{AUTO}}$$
which expunges the DP hyperparameters, only leaving us the regular hyperparameters such as learning rate, weight decay, etc. The significant save in the tuning effort is illustrated in \Cref{fig:AUTO only 1D grid search}.
\begin{figure}[!htb]
\vspace{-0.3cm}
\centering
\includegraphics[width=0.31\linewidth]{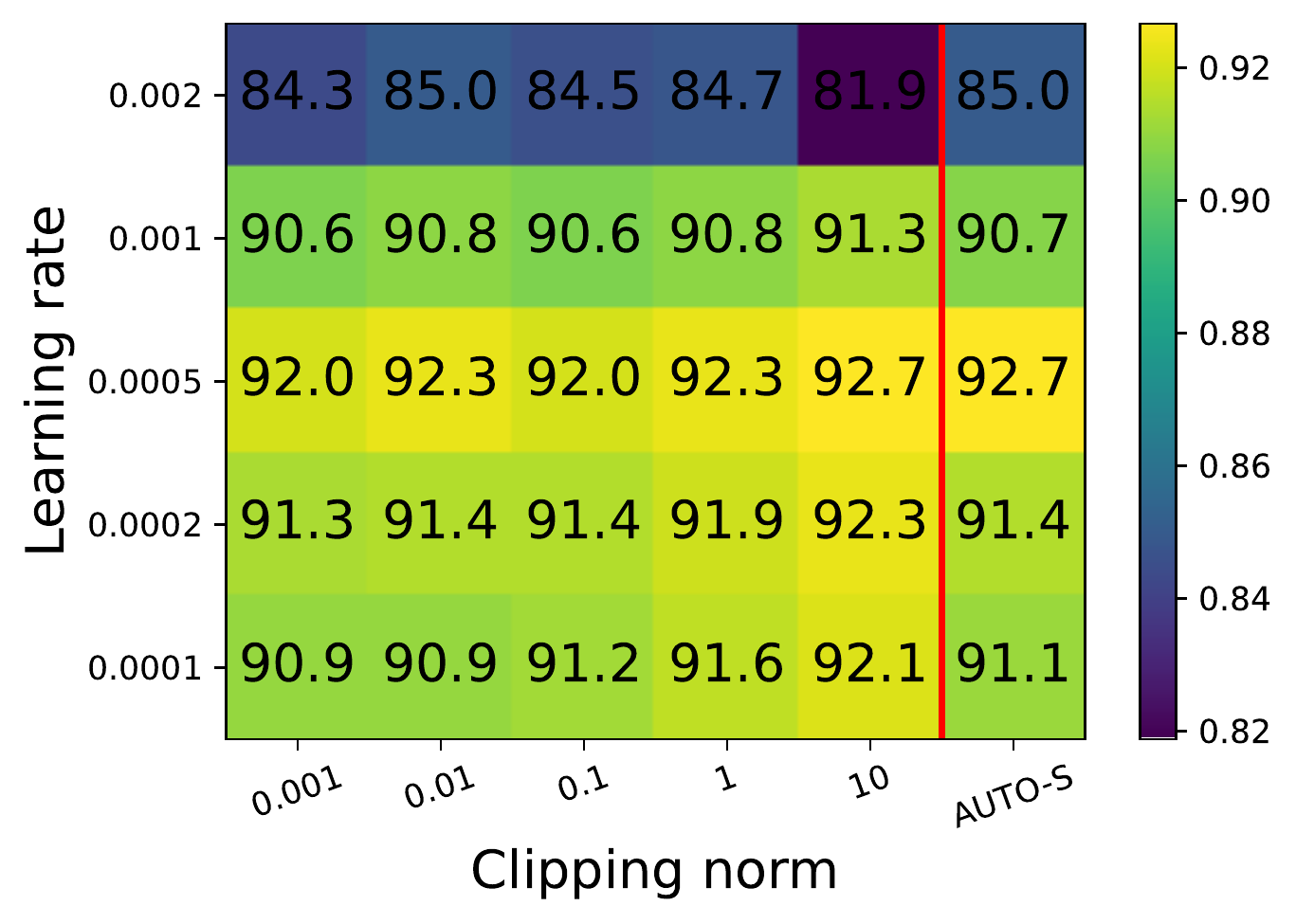}
    \includegraphics[width=0.31\linewidth]{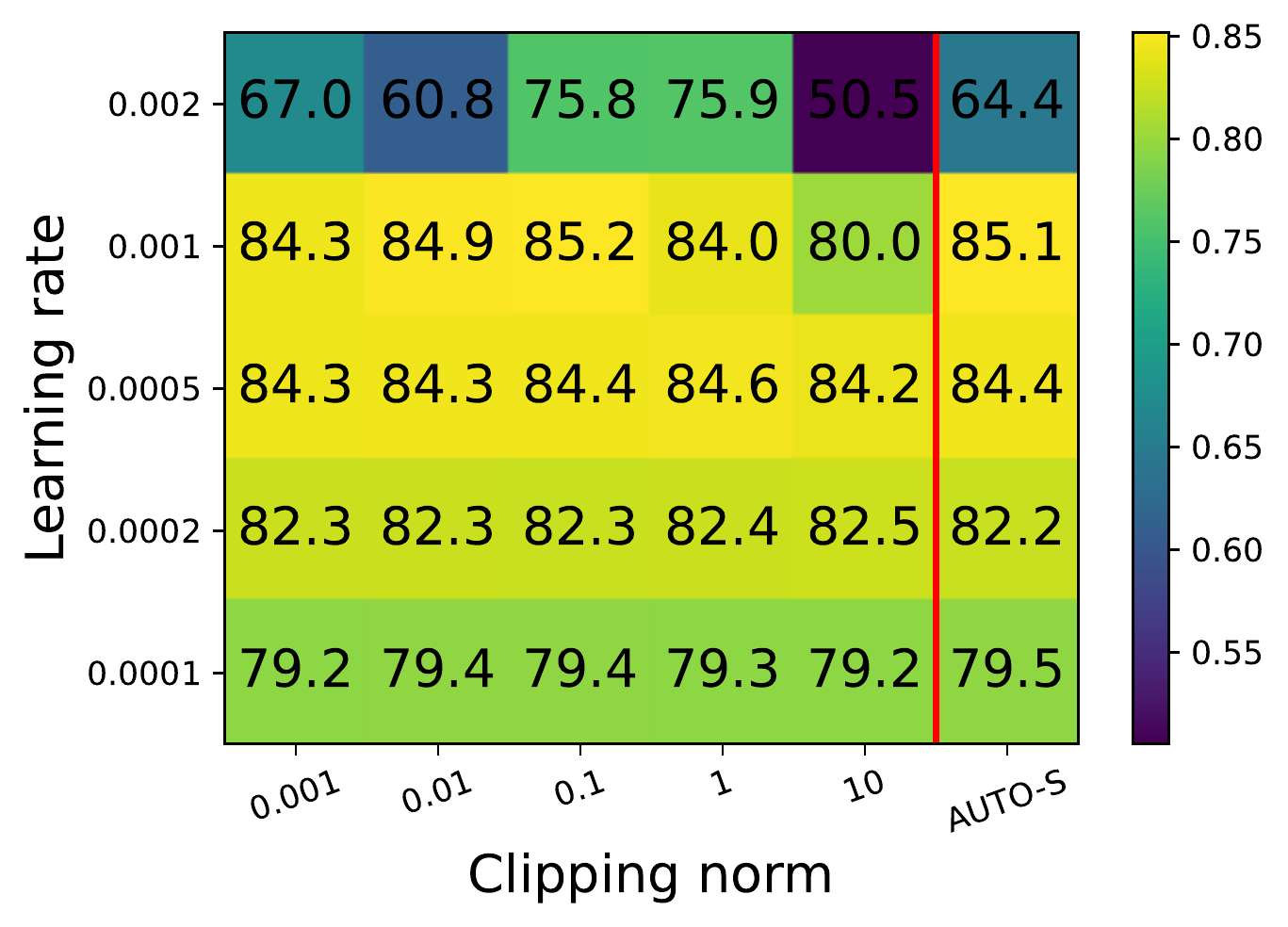}
    \includegraphics[width=0.31\linewidth]{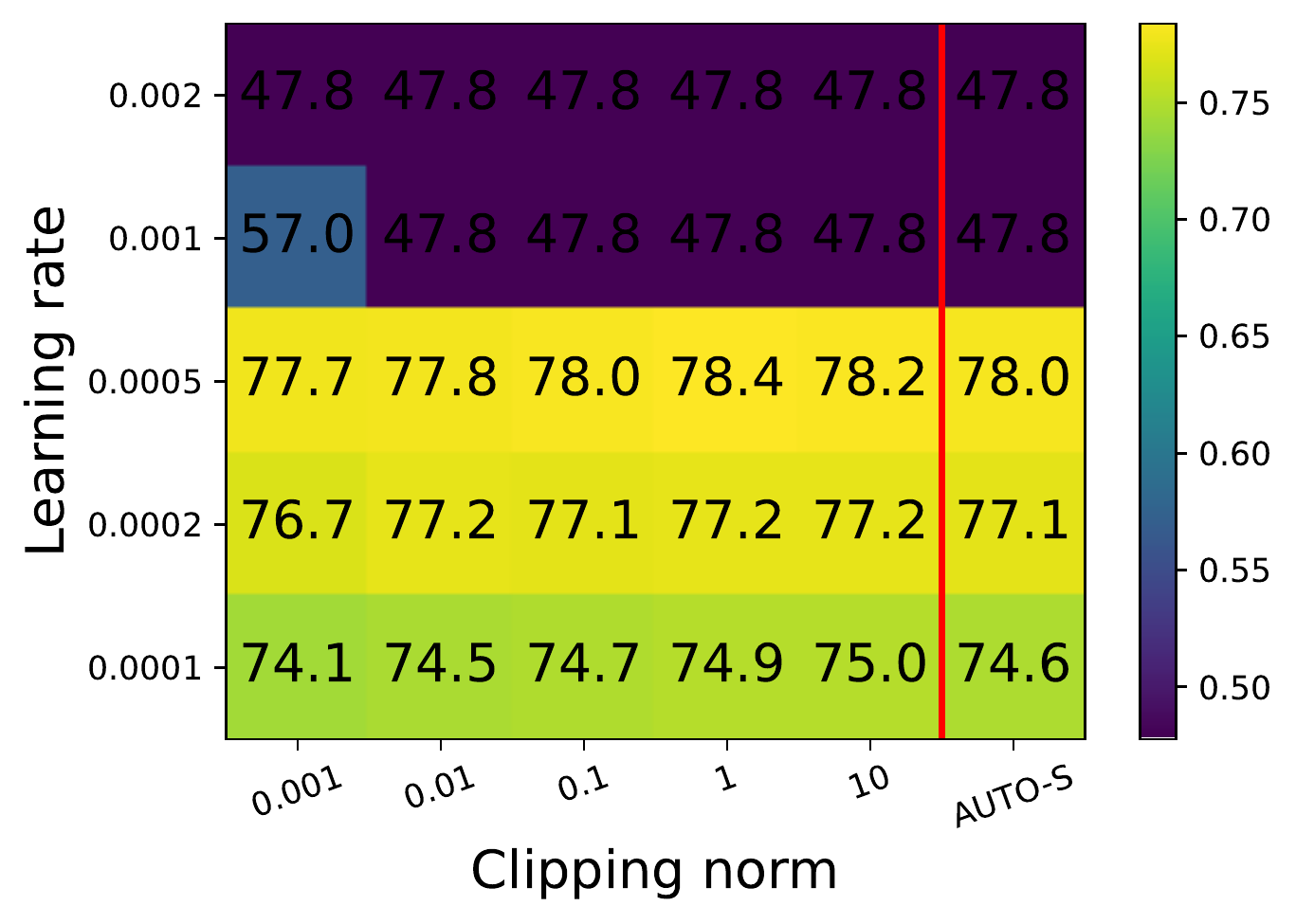}
\vspace{-0.2cm}
    \caption{Test accuracy of RoBERTa-base by different clipping thresholds $R$ and learning rates $\eta$. This is trained with DP-Adam (Abadi and AUTO-S) on SST2 (left, 3 epochs), QNLI (middle, 1 epoch), and MNLI (right, 1 epoch), under $\epsilon=3$. Notice by only searching along $\eta$, instead of over $(R,\eta)$, we can save the cost of hyperparameter tuning by $5\times$.}
    \label{fig:AUTO only 1D grid search}
\end{figure}

\subsection{Non-adaptive optimizer couples clipping threshold with learning rate}
With $R$-dependent automatic clipping, DP-SGD becomes
\vspace{-0.1cm}
$$\w_{t+1}=\w_t-\eta\Big(\sum_{i}\g_{t,i}\cdot\frac{R}{||\g_{t,i}||+\gamma}+\sigma R\cdot\mathcal{N}(0,\I)\Big)=\w_t-\eta R\hat \g_t.$$

\vspace{-0.35cm}
We can view $\eta_\text{effective}\equiv\eta R$ as a whole: increasing $R$ has the same effect as increasing $\eta$, which explains the diagonal pattern in \Cref{fig:my motivation}(lower plot) where $\text{DP-SGD}_\text{Abadi}$ is applied with small clipping threshold. We extend to general non-adaptive optimizers in \Cref{thm: non-adaptive automatic}\footnote{This coupling of $\eta$ and $R$ is also partially observed in \cite[Appendix B.1]{de2022unlocking} through a re-parameterization trick of Abadi's clipping. Unlike AUTO-S/V, the coupling is not strict (e.g. doubling $R$ is not equivalent to doubling $\eta$, thus still necessitating tuning both $(\eta,R)$), and the relationship to weight decay was not discussed.}.

\begin{theorem}
\label{thm: non-adaptive automatic}
Non-adaptive $R$-dependent automatic DP optimizers (including SGD, Heavyball\cite{polyak1964some} and NAG\cite{nesterov1983method}), with learning rate $\eta$ and weight decay $\lambda$, is equivalent to $R$-independent automatic DP optimizers, with learning rate $\eta R$ and weight decay $\lambda/R$.
\end{theorem}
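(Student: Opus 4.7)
The plan is to exploit the fact that the $R$-dependent automatic private gradient factors exactly as $R$ times the $R$-independent one. Writing $\hat{\g}_t = \sum_i \g_{t,i}/(\|\g_{t,i}\|+\gamma) + \sigma\cdot\mathcal{N}(0,\I)$ for the $R$-independent private gradient used in \eqref{eq:trully auto DP optim}, the $R$-dependent private gradient appearing in every update is $R\hat{\g}_t$. Since non-adaptive optimizers (SGD, Heavyball, NAG) are all linear in the gradient and in any auxiliary buffer, this scalar factor of $R$ can be absorbed into a rescaled learning rate, provided the weight decay term is rescaled oppositely to keep the net coefficient in front of $\w_t$ unchanged.

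First I would dispose of plain SGD with decoupled-style weight decay. The $R$-dependent update reads $\w_{t+1} = \w_t - \eta(R\hat{\g}_t + \lambda \w_t) = \w_t - (\eta R)\hat{\g}_t - \eta\lambda\w_t$. Setting $\eta' := \eta R$ and $\lambda' := \lambda/R$ in the $R$-independent SGD update $\w_{t+1} = \w_t - \eta'(\hat{\g}_t + \lambda'\w_t)$ yields the same $w_{t+1} = \w_t - \eta R\hat{\g}_t - \eta\lambda \w_t$. This is a one-line check and fixes the correspondence. Next I would treat Heavyball by introducing a rescaled momentum buffer $\m'_t := \m_t/R$ (both sides initialize at $0$, so this is consistent); dividing the recursion $\m_{t+1} = \mu\m_t + (R\hat{\g}_t + \lambda\w_t)$ through by $R$ gives $\m'_{t+1} = \mu\m'_t + \hat{\g}_t + (\lambda/R)\w_t$, and the parameter update becomes $\w_{t+1} = \w_t - \eta R \m'_{t+1}$, which is exactly $R$-independent Heavyball with $(\eta R, \lambda/R)$. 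NAG follows by the same buffer-rescaling trick: its look-ahead point and its momentum update are both linear in the gradient and buffer, so applying $\m'_t := \m_t/R$ makes the two trajectories coincide iterate-by-iterate.

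The main obstacle is essentially bookkeeping rather than anything deep: one has to (i) be careful about which convention of weight decay is used (L2 penalty folded into the gradient versus decoupled), since only with weight decay entering additively as $\lambda\w_t$ inside the parenthesis does the stated $\lambda\mapsto\lambda/R$ rescaling work, and (ii) verify for NAG that the look-ahead evaluation point $\w_t - \eta\mu\m_t$ is preserved under the simultaneous rescaling $\m'_t=\m_t/R$, $\eta'=\eta R$, which it is because $\eta'\mu\m'_t = \eta\mu\m_t$. Adaptive optimizers (Adam, LAMB, etc.) are excluded from the statement precisely because their preconditioners are nonlinear in the gradient and a global scalar $R$ cancels in the normalization, breaking this clean equivalence; I would note this distinction explicitly but not prove it here.
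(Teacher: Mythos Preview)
Your proposal is correct and follows essentially the same approach as the paper: both exploit that the $R$-dependent private gradient equals $R\hat{\g}_t$ and absorb the scalar into the learning rate while rescaling weight decay to $\lambda/R$. The paper only writes out the SGD case explicitly and asserts Heavyball and NAG follow ``in a similar manner,'' whereas you go slightly further by spelling out the momentum-buffer rescaling $\m'_t = \m_t/R$ for Heavyball and checking the look-ahead point for NAG; this extra bookkeeping is helpful but not a different idea.
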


\subsection{Adaptive optimizer can be insensitive to clipping threshold}

Adaptive automatic DP optimizers are different than the non-adaptive ones, as the clipping threshold cancels out instead of being coupled with learning rate. To see this, we scrutinize $\text{DP-Adam}_\text{Abadi}$ (which is similar to $\text{DP-Adam}_\text{AUTO-V}$) in \Cref{fig:my motivation}(upper plot), where columns to the left are almost identical. Further evidence is observed in \cite[Table 5]{mehta2022large} that shrinking $R$ has zero effect on LAMB. We now give a simple explanation using AdaGrad \cite{duchi2011adaptive}:
$$\w_{t+1}=\w_{t}-\eta \frac{\g_t}{\sqrt{\sum_{\tau<t}\g_\tau^2}}$$

\vspace{-0.25cm}
where $\g_t=\sum_i \g_{t,i}$ is the gradient sum. In $R$-dependent DP-AdaGrad$_\text{AUTO-V}$, the private gradient is $R\hat \g_t$ in place of the standard gradient sum $\g_t$:

\vspace{-0.3cm}
$$\w_{t+1}=\w_{t}-\eta \frac{R\hat \g_t}{\sqrt{R^2\sum_{\tau\leq t}\hat \g_\tau^2}}=\w_{t}-\eta \frac{\hat\g_t}{\sqrt{\sum_{\tau< t}\left(\hat\g_\tau\right)^2}}.$$

\vspace{-0.15cm}
We generalize to other adaptive optimizers in \Cref{thm: adaptive automatic}
and to the per-layer clipping style in \Cref{app:automatic per-layer}.

\begin{theorem}
\label{thm: adaptive automatic}
Adaptive $R$-dependent automatic DP optimizers (e.g. AdaGrad\cite{duchi2011adaptive}, AdaDelta\cite{zeiler2012adadelta}, AdaMax/Adam\cite{kingma2014adam}, NAdam\cite{dozat2016incorporating}, RAdam\cite{liu2019variance}, LARS\cite{you2017scaling}, LAMB\cite{you2019large}), with learning rate $\eta$ and weight decay $\lambda$ is equivalent to $R$-independent automatic DP optimizers with learning rate $\eta$ and weight decay $\lambda/R$. With decoupled weight decay\cite{loshchilov2017decoupled}, $R$-dependent automatic DP-AdamW is equivalent to $R$-independent automatic DP-AdamW with the same $\eta$ and $\lambda$.
\end{theorem}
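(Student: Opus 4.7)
The plan is to isolate the single structural property that drives the result, namely that every listed adaptive optimizer is \emph{positively homogeneous of degree zero} in its input gradient sequence: if one feeds $\{c\,\mathbf{g}_\tau\}_{\tau\le t}$ instead of $\{\mathbf{g}_\tau\}_{\tau\le t}$ for any constant $c>0$, the parameter update is unchanged. Once this scale-invariance is established, the $R$-factor in the private gradient $R\hat{\mathbf{g}}_t$ can be pulled out and absorbed. The weight-decay bookkeeping is then what distinguishes coupled from decoupled regularization, and must be handled separately in the two cases.

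\textbf{Step 1: verify scale-invariance of each optimizer.} I would go through the list (AdaGrad, AdaDelta, AdaMax/Adam, NAdam, RAdam, LARS, LAMB) and check directly that the update rule has the form $\Delta\mathbf{w}_t=-\eta\,\mathcal{N}_t/\mathcal{D}_t$, where both $\mathcal{N}_t$ and $\mathcal{D}_t$ are built from first and second moment statistics of past gradients and scale with the same positive power of $c$ under $\mathbf{g}_\tau\mapsto c\mathbf{g}_\tau$. For AdaGrad this was already shown in the excerpt; for Adam the first moment $\mathbf{m}_t$ scales by $c$ and $\sqrt{\mathbf{v}_t}$ scales by $c$; for LARS/LAMB the trust-ratio $\|\mathbf{w}_t\|/\|\cdot\|$ cancels the gradient scale explicitly. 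This step is mechanical but is the workhorse of the argument.

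\textbf{Step 2: coupled weight decay.} With an $L_2$ penalty added after privatization, the effective per-step gradient fed into the adaptive optimizer in the $R$-dependent version is
\[
R\hat{\mathbf{g}}_t+\lambda\mathbf{w}_t=R\bigl(\hat{\mathbf{g}}_t+(\lambda/R)\mathbf{w}_t\bigr).
\]
Applying Step 1 with $c=R$ shows that running the adaptive optimizer with learning rate $\eta$ on this effective gradient produces the identical trajectory as running it with learning rate $\eta$ on $\hat{\mathbf{g}}_t+(\lambda/R)\mathbf{w}_t$, which is exactly the $R$-independent version with weight decay $\lambda/R$. Since the trajectory agrees step-by-step, the two optimizers are equivalent.

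\textbf{Step 3: decoupled weight decay (AdamW).} Here the update splits as
\[
\mathbf{w}_{t+1}=\mathbf{w}_t-\eta\,\text{Adam}(R\hat{\mathbf{g}}_t)-\eta\lambda\mathbf{w}_t,
\]
so scale-invariance (Step 1) gives $\text{Adam}(R\hat{\mathbf{g}}_t)=\text{Adam}(\hat{\mathbf{g}}_t)$ and the weight-decay term is untouched by $R$; hence the $R$-independent variant with the \emph{same} $\eta$ and $\lambda$ is equivalent. I expect the main obstacle to be a small-print issue rather than a conceptual one: the numerical-stability constant $\epsilon$ that usually sits in $\sqrt{\mathbf{v}_t}+\epsilon$ breaks exact scale-invariance. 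I would address it either by stating the theorem for the idealized $\epsilon=0$ rule (standard in theoretical analyses and typically inconsequential for the regime where $R\sqrt{\hat v_t}\gg\epsilon$) or by noting that rescaling $\epsilon\mapsto R\epsilon$ on the $R$-independent side restores exact equivalence. The per-layer extension mentioned in \Cref{app:automatic per-layer} would follow by applying the same scale-invariance argument coordinate-block by coordinate-block.
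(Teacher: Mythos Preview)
Your proposal is correct and follows essentially the same idea as the paper: the $R$ factor in the numerator and denominator of the adaptive update cancels, so the private gradient $R\hat{\g}_t$ produces the same update as $\hat{\g}_t$. The paper carries this out by a direct computation for DP-AdamW only (showing $\tilde{\m}_t=R\sum_\tau\beta_1^{t-\tau}(1-\beta_1)\hat{\g}_\tau$ and $\tilde{\v}_t=R^2\sum_\tau\beta_2^{t-\tau}(1-\beta_2)\hat{\g}_\tau^2$, hence $R$ cancels in $\tilde{\m}_t/\sqrt{\tilde{\v}_t}$) and then states that the other listed optimizers follow ``in a similar manner''; your abstraction of this as degree-zero positive homogeneity of the update map in the gradient history is exactly the property being used and makes Step~1 clean for all the variants at once. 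Your explicit split into Steps~2 and~3 (coupled $L_2$ penalty versus decoupled weight decay) is the right way to recover both conclusions of the theorem---$\lambda\to\lambda/R$ in the coupled case via $R\hat{\g}_t+\lambda\w_t=R(\hat{\g}_t+(\lambda/R)\w_t)$, and $\lambda$ unchanged in the AdamW case---whereas the paper only works through the decoupled case explicitly. Your caveat about the numerical-stability constant in $\sqrt{\v_t}+\epsilon$ is a legitimate fine-print point that the paper simply omits (its AdamW formula has no $\epsilon$), and your proposed resolutions are the standard ones.
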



\subsection{Automatic clipping is equally private and maximizes utility}
\label{sec:privacy}

\begin{wrapfigure}{r}{0.48\textwidth}
\vspace{-0.5cm}
    \centering
    \includegraphics[width=0.8\linewidth]{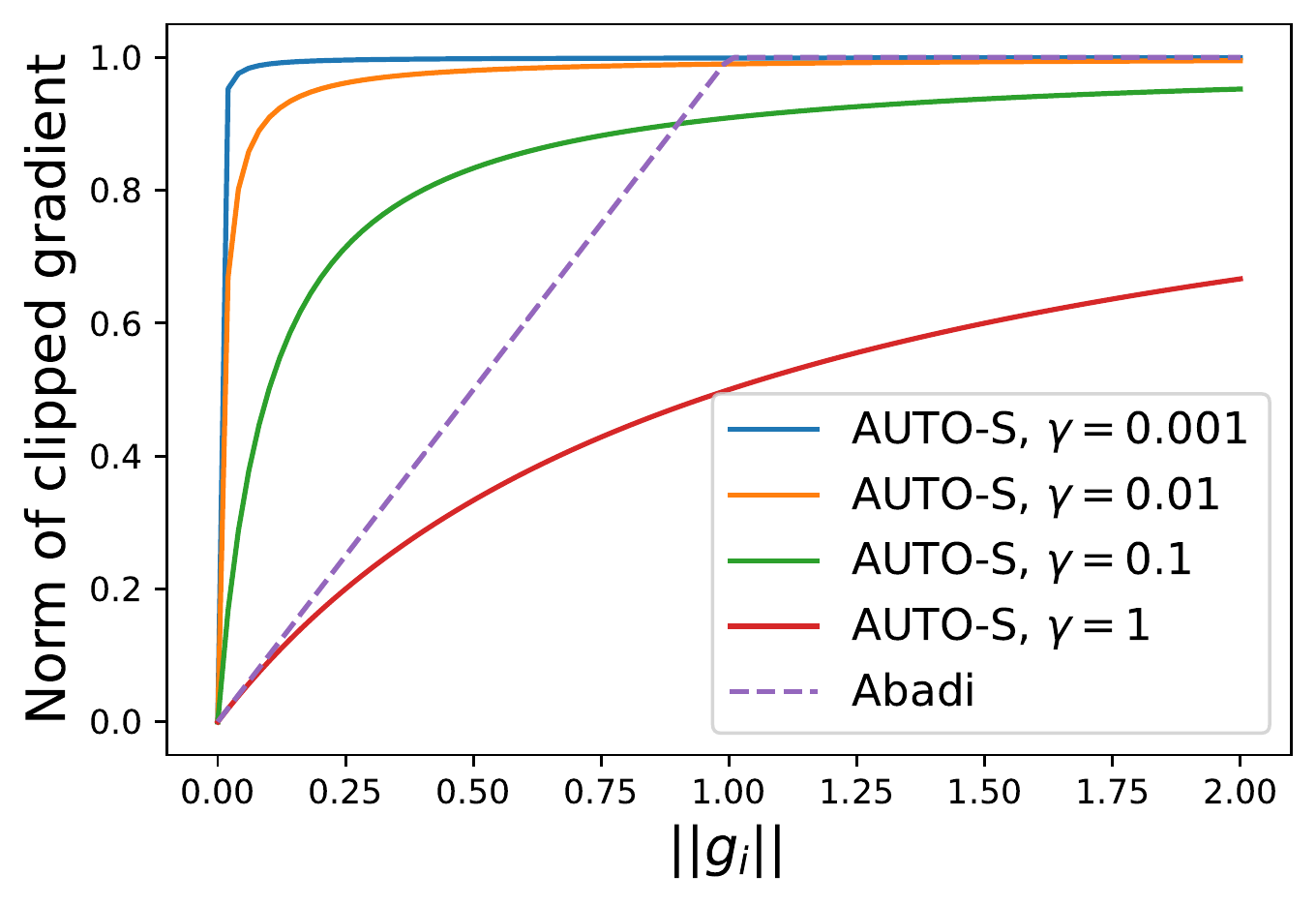}
    \vspace{-0.3cm}
    \caption{Per-sample gradient norms before and after different clippings at $R=1$.}
    \label{fig:norm after clipping}
\vspace{-0.5cm}
\end{wrapfigure}
In \Cref{thm:auto is private} (proved in \Cref{app:proof DP}), we show that the new private gradient $\hat \g_t$ in \eqref{eq:trully auto DP optim} has the same level of privacy guarantee as the existing one in \eqref{eq:DP optimizers}, since the global sensitivity remains the same (see \Cref{fig:norm after clipping}). We note that as long as $\gamma>0$, the magnitude information of per-sample gradients is preserved by AUTO-S, in the sense that $\|\g_i\|>\|\g_j\|\Longleftrightarrow \|C_i\g_i\|>\|C_j\g_j\|$, whereas this can be violated in both the AUTO-V and Abadi's clipping (as depicted by the flat curve in \Cref{fig:norm after clipping} when $\|\g_i\|>1$).
\\\\
Additionally, note that when $\gamma$ is small, almost all data points ``max out'' the signal relative to the amount of noise we add. 
To say it differently, for the same amount of noise, AUTO-S with small $\gamma$ allows more signal to be pushed through a differentially private channel. 
Towards the end of the training, i.e., 
at the limit when $\|\g_i\| \rightarrow 0$ for all $i$, then we have $\sum_i \frac{\g_{i}}{\|\g_i\| +\gamma} \rightarrow \frac{1}{\gamma}\sum_i \g_{i}$. In words, the clipped gradients become closer to the standard SGD, thus do not suffer from the instability of AUTO-V. 

\begin{theorem}
\label{thm:auto is private}
Under the noise multiplier $\sigma$, number of iterations $T$, subsampling probability $B/n$, DP optimizers using AUTO-V or AUTO-S clipping satisfy $(\epsilon_{\text{Accountant}}(\delta,\sigma,B/n,T),\delta)$-DP, where $\epsilon_{\text{Accountant}}$ is any valid privacy accountant for DP-SGD under Abadi's clipping.
\end{theorem}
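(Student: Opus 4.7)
The plan is to reduce the privacy analysis of AUTO-V and AUTO-S to the already-established privacy analysis of DP-SGD under Abadi's clipping. The crucial observation is that any valid privacy accountant $\epsilon_{\text{Accountant}}(\delta,\sigma,p,T)$ for DP-SGD is agnostic to the specific clipping rule: it only uses (i) the $\ell_2$-sensitivity of each per-example clipped gradient, (ii) the noise-to-sensitivity ratio $\sigma$, (iii) the Poisson subsampling rate $p=B/n$, and (iv) the number of compositions $T$. So my task reduces to verifying that AUTO-V and AUTO-S yield the same per-sample $\ell_2$-sensitivity bound as Abadi's clipping.

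First I would make the sensitivity calculation explicit. Consider two neighboring mini-batches differing in one record with per-sample gradient $\g_i$. Under AUTO-V with scale $R$, the contribution of this record to the summed clipped gradient is $R\g_i/\|\g_i\|$, whose norm is exactly $R$ (and $0$ if $\g_i=\mathbf{0}$, under the standard convention). Under AUTO-S, the contribution is $R\g_i/(\|\g_i\|+\gamma)$, whose norm equals $R\|\g_i\|/(\|\g_i\|+\gamma) \le R$ for every $\gamma\ge 0$. Thus in both cases the $\ell_2$-sensitivity of the summed-clipped-gradient map is bounded by $R$, which is precisely the sensitivity bound enforced by Abadi's clipping. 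The $R$-independent form in Algorithm~\ref{alg:dp auto procedure} is just the $R=1$ specialization, so adding $\sigma\cdot\mathcal{N}(0,\I)$ is identical to adding $\sigma R\cdot\mathcal{N}(0,\I)$ with $R=1$.

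Given the sensitivity bound, I would invoke the Gaussian mechanism to obtain the per-step Rényi/Gaussian-DP guarantee, then invoke Poisson-subsampling amplification (exactly as used for Abadi's DP-SGD), then compose across $T$ iterations via any accountant listed in Section~\ref{sec: dp intro} (Rényi/moments, PLD/Fourier, or $f$-DP). Since every step in this pipeline depends on the clipping rule only through the sensitivity, and the sensitivity is identical, the resulting $(\epsilon,\delta)$ bound is identical. Finally, post-processing closes the argument: the optimizer update, whether SGD, Adam, LAMB, etc., is a data-independent function of the released noisy gradient sum, so it inherits the same DP guarantee for all $T$ released gradients and hence for the final model.

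The only mildly non-routine step is the edge case $\g_i=\mathbf{0}$, where $R/\|\g_i\|$ is undefined for AUTO-V; I would handle it by defining $\texttt{Clip}_{\text{AUTO-V}}(\mathbf{0}):=0$ (which is the limit from nonzero gradients in the contribution $\g_i\cdot R/\|\g_i\|$ only when using AUTO-S with $\gamma>0$; for AUTO-V we adopt the convention explicitly). This edge case never arises for AUTO-S because $\gamma>0$ keeps the denominator bounded away from zero. With this convention the sensitivity bound holds for every record, not merely almost every record, and the reduction to Abadi's accountant goes through verbatim. I do not anticipate a true technical obstacle; the content of the theorem is really that switching the clipping rule does not change the sensitivity, and the rest of the privacy machinery is black-box.
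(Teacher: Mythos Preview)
Your proposal is correct and follows essentially the same approach as the paper: both arguments reduce to observing that AUTO-V and AUTO-S bound the per-sample $\ell_2$ sensitivity by $R$ (or $1$ in the $R$-independent form), so the noise-to-sensitivity ratio equals $\sigma$ and the privacy analysis is identical to that of the Poisson-subsampled Gaussian mechanism composed $T$ times under Abadi's clipping. Your write-up is in fact more explicit than the paper's terse proof, additionally spelling out the post-processing step and the $\g_i=\mathbf{0}$ edge case.
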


\section{Convergence analysis of DP-SGD with automatic clipping}
\label{sec:convergence}

\subsection{Convergence theory of DP-SGD to stationary points}
We highlight that automatic clipping can be more amenable to analysis than Abadi's clipping in \cite{chen2020understanding}, since we no longer need to decide whether each per-sample gradient is clipped.

To analyze the convergence of automatic DP-SGD \eqref{eq:trully auto DP optim} in the non-convex setting, we follow the standard assumptions in the SGD literature \cite{ghadimi2013stochastic,allen2018natasha,bottou2018optimization}, including a symmetry assumption on the gradient noise, which is empirically verified in \cite[Figure 3]{chen2020understanding} and commonly used in the standard non-DP literature \cite{mandt2017stochastic,smith2018don,chaudhari2018stochastic,xie2020diffusion}. We refer the curious readers to \Cref{app:grad symm} for details.

\begin{assumption}[Lower bound of loss]\label{assumption: lower bounding loss}
For all $\w$ and some constant $\mathcal{L}_*$, we have $\mathcal{L}(\w)\geq\mathcal{L}_*$.
\end{assumption}

\begin{assumption}[Smoothness]
\label{assumption: Lipschitz}
Let $ \g(\w) $ denote the gradient of the objective $ \mathcal{L}(\w) $. Then $ \forall \w, \v $, there is an non-negative constant $L$ such that
\vspace{-0.2cm}
\begin{align}
 \mathcal{L}(\v)-\left[\mathcal{L}(\w)+\g(\w)^\top (\v-\w)\right]
 \leq \frac{L}{2} \|\w-\v\|^{2}.
\end{align}
\end{assumption}

\begin{assumption}[Gradient noise]
\label{assumption: tilde g}
The per-sample gradient noise $\tilde\g_{t,i}-\g_t$ is i.i.d. from some ditribution such that
$$\E(\tilde\g_{t,i}-\g_t)=0, \E\|\tilde\g_{t,i}-\g_t\|^2\leq \xi^2,$$

\vspace{-0.2cm}
and $\tilde \g_{t,i}$ is centrally symmetric about $\g_t$ in distribution:
$\tilde \g_{t,i}-\g_t\overset{\mathcal{D}}{=}\g_t-\tilde \g_{t,i}.$
\end{assumption}

We show in \Cref{thm:upper bounding grad norm without r} that DP-SGD with AUTO-S clipping allows the true gradient norm to converge to zero, though the clipped gradient may still be biased, but not so with AUTO-V clipping. 

\begin{theorem}\label{thm:upper bounding grad norm without r}
Under \Cref{assumption: lower bounding loss}, \ref{assumption: Lipschitz}, \ref{assumption: tilde g}, running DP-SGD with automatic clipping for $T$ iterations and setting the learning rate $\eta\propto 1/\sqrt{T}$ give\footnote{The upper bound takes an implicit form of $\mathcal{G}(\cdot;\xi,\gamma)$ because it is a lower envelope of functions $\frac{\xi}{r}+\mathcal{F}(\cdot;r,\xi,\gamma)$ over all possible $r>0$, whose forms are detailed in \Cref{thm: convergence DPSGD AUTO}. Notice that $\mathcal{G}$ results only from the clipping operation, not from the noise addition.}
\begin{align}
\begin{split}
\min_{0\leq t\leq T}\E(\|\g_t\|)
&\leq \mathcal{G}\left(\frac{4}{\sqrt{T}}\sqrt{(\mathcal{L}_0-\mathcal{L}_*)L\left(1+\frac{\sigma^2 d}{B^2} \right)};\xi,\gamma\right)
:=\min_{r>0} \frac{\xi}{r}+\mathcal{F}\left(\cdots;r,\xi,\gamma\right).
\end{split}
\label{eq:min grad norm without r}
\end{align}
Here $\cdots$ represents the first argument of $\mathcal{G}$, and $\mathcal{G}$ is increasing and positive. As $T\to\infty$, we have $\min_t\E(\|\g_t\|)=O(T^{-1/4})$ for AUTO-S, the same rate as the standard SGD given in \Cref{thm: convergence nonDP SGD}.
\end{theorem}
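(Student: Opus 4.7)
The plan is to combine the classical smoothness-based descent argument with a careful bias analysis of the per-sample normalization, exploiting the symmetry assumption. First, I apply \Cref{assumption: Lipschitz} along the update $\w_{t+1}-\w_t=-\eta\hat\g_t$ to get
\begin{align}
\mathcal{L}(\w_{t+1})-\mathcal{L}(\w_t)\leq -\eta\,\g_t^\top\hat\g_t+\tfrac{L\eta^2}{2}\|\hat\g_t\|^2.
\end{align}
Taking expectation over the minibatch sampling and the Gaussian noise, the quadratic term is bounded by $\tfrac{L\eta^2}{2}(B^2+\sigma^2 d)$ because every normalized per-sample gradient has norm at most $1$ and the Gaussian noise is isotropic. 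The descent term reduces, by independence across $i\in B_t$, to lower bounding the scalar
\begin{align}
\g_t^\top\,\E\!\left[\tfrac{\tilde\g_{t,i}}{\|\tilde\g_{t,i}\|+\gamma}\right],
\end{align}
which is the heart of the theorem.

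To control this expectation I would use \Cref{assumption: tilde g}: since $\tilde\g_{t,i}$ is symmetric around $\g_t$, one can pair each realization $\tilde\g$ with its equally likely reflection $2\g_t-\tilde\g$ and evaluate the sum of their normalized versions against $\g_t$. I would then split on the event $E_r=\{\|\tilde\g_{t,i}-\g_t\|\leq r\}$ for a free parameter $r>0$. On $E_r$ a first-order expansion of $\tilde\g/(\|\tilde\g\|+\gamma)$ around $\g_t$ shows that the paired inner product is at least a quantity like $\|\g_t\|^2/(\|\g_t\|+\gamma+r)$ up to lower-order corrections; this is where the stability constant $\gamma$ enters crucially, because the correction remains bounded even as $\|\g_t\|\to 0$. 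On $E_r^c$, Markov's inequality applied to $\E\|\tilde\g_{t,i}-\g_t\|^2\leq\xi^2$ gives $\P(E_r^c)\leq\xi^2/r^2$, and since the normalized gradient has unit norm the contribution of this event to the inner product is at most $\|\g_t\|\cdot\xi^2/r^2$. Collecting the two pieces produces a bound of the form $\g_t^\top\E[\hat\g_t/B]\geq \|\g_t\|\,\Psi(\|\g_t\|,\xi,\gamma,r)-\xi^2/r^2\cdot\|\g_t\|$ for an explicit increasing function $\Psi$.

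Plugging this back, rearranging, summing from $t=0$ to $T-1$, and dividing by $T$ yields
\begin{align}
\min_{0\leq t\leq T}\E\|\g_t\|\cdot\Psi(\cdots)\leq \tfrac{\mathcal{L}_0-\mathcal{L}_*}{\eta T}+\tfrac{L\eta}{2}(1+\sigma^2 d/B^2)+\tfrac{\xi^2}{r^2}\cdot\min_t\E\|\g_t\|,
\end{align}
and choosing $\eta=\tfrac{c}{\sqrt{T}}$ with the constant $c$ optimized to balance the first two terms on the right gives the $1/\sqrt{T}$ factor inside $\mathcal{G}$. Taking the infimum over $r>0$ assembles the stated $\mathcal{G}=\min_{r>0}\{\xi/r+\mathcal{F}(\cdot;r,\xi,\gamma)\}$, and the asymptotic $\mathcal{G}(u)=O(u^{1/2})$ for AUTO-S (as $u\to 0$, valid because $\gamma>0$ keeps $\Psi$ bounded away from degeneracy) yields the $O(T^{-1/4})$ rate. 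The main obstacle is the bias analysis in the middle paragraph: without Lipschitzness of individual losses and using only symmetry plus bounded variance, one must extract a lower bound on $\g_t^\top\E[\tilde\g/(\|\tilde\g\|+\gamma)]$ that actually scales with $\|\g_t\|$ near stationarity. This is also precisely where AUTO-V ($\gamma=0$) breaks: the normalized gradient retains unit magnitude even as $\g_t\to 0$, so $\Psi$ cannot be pushed to zero and the argument only controls $\min_t\E\|\g_t\|$ up to a nonvanishing floor determined by $\xi$.
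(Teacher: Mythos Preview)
Your high-level skeleton matches the paper: descent via \Cref{assumption: Lipschitz}, bounding the quadratic term by $L\eta^2(B^2+\sigma^2 d)$ using $\|C_i\tilde\g_{t,i}\|\le 1$, exploiting the symmetry of \Cref{assumption: tilde g} to pair $\tilde\g$ with its reflection, introducing a free threshold $r$ and Markov-bounding the bad event, telescoping, choosing $\eta\propto 1/\sqrt{T}$, and finally optimizing over $r$. That is exactly the architecture of the paper's proof.

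Two specifics, however, deviate from the paper and would not reproduce the stated bound. First, the paper thresholds on the \emph{noise-to-signal ratio} $S:=\|\Delta_t\|/\|\g_t\|<r$ (so $r$ is dimensionless) and applies Markov to the \emph{first} moment $\E\|\Delta_t\|\leq\xi$; this yields $\P(S<r)\geq 1-\xi/(r\|\g_t\|)$ and, after multiplying by $\|\g_t\|$, the additive floor $\|\g_t\|-\xi/r$. That is precisely where the $\xi/r$ summand in $\mathcal{G}=\min_{r>0}\{\xi/r+\mathcal{F}\}$ comes from. Your thresholding on the absolute norm $\|\Delta_t\|\leq r$ together with second-moment Markov gives a multiplicative $\xi^2/r^2$ correction instead, leading to a bound of the shape $\min_t\E\|\g_t\|\cdot(\Psi-\xi^2/r^2)\leq O(1/\sqrt{T})$, which does not invert into the additive $\xi/r+\mathcal{F}$ form stated in the theorem.

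Second, your ``first-order expansion'' on $E_r$ is the real gap. The paper does not linearize; it writes the paired inner product explicitly as $\|\g_t\|\cdot f(c,S;\gamma/\|\g_t\|)$ with $f(c,S;\Gamma)=\frac{1+Sc}{\sqrt{S^2+2Sc+1}+\Gamma}+\frac{1-Sc}{\sqrt{S^2-2Sc+1}+\Gamma}$, where $c=\cos\theta$ is the angle between $\g_t$ and $\Delta_t$. It then proves (with nontrivial algebra) that $f$ is strictly decreasing in $S$ for all $c\in(0,1)$ and decreasing in $c$ for $S>1$, so that on $\{S<r\}$ one gets the clean lower bound $\min_{c}f(c,r;\gamma/\|\g_t\|)\cdot(\|\g_t\|-\xi/r)$. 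For $r>1,\gamma>0$ the minimum is at $c=1$ and has the closed form $\frac{\gamma}{(r-1)\|\g_t\|+\gamma}-\frac{\gamma}{(r+1)\|\g_t\|+\gamma}$; this is the function $\mathcal{M}$ whose (convex-envelope) inverse defines $\mathcal{F}$, and a Taylor expansion of $\mathcal{M}^{-1}$ near $0$ then gives the $O(T^{-1/4})$ rate. A naive first-order expansion of $\tilde\g/(\|\tilde\g\|+\gamma)$ does not obviously produce a uniform lower bound over all $c\in(0,1]$ and all $S\in(0,r)$, which is exactly what the monotonicity lemmas supply. Your own diagnosis that this step is ``the main obstacle'' is correct; the paper's resolution is this explicit $f$-analysis rather than a perturbative expansion.
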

\begin{figure}[!htb]
\vspace{-0.4cm}
    \centering
    \hspace{-0.5cm}
    \includegraphics[width=0.3\linewidth]{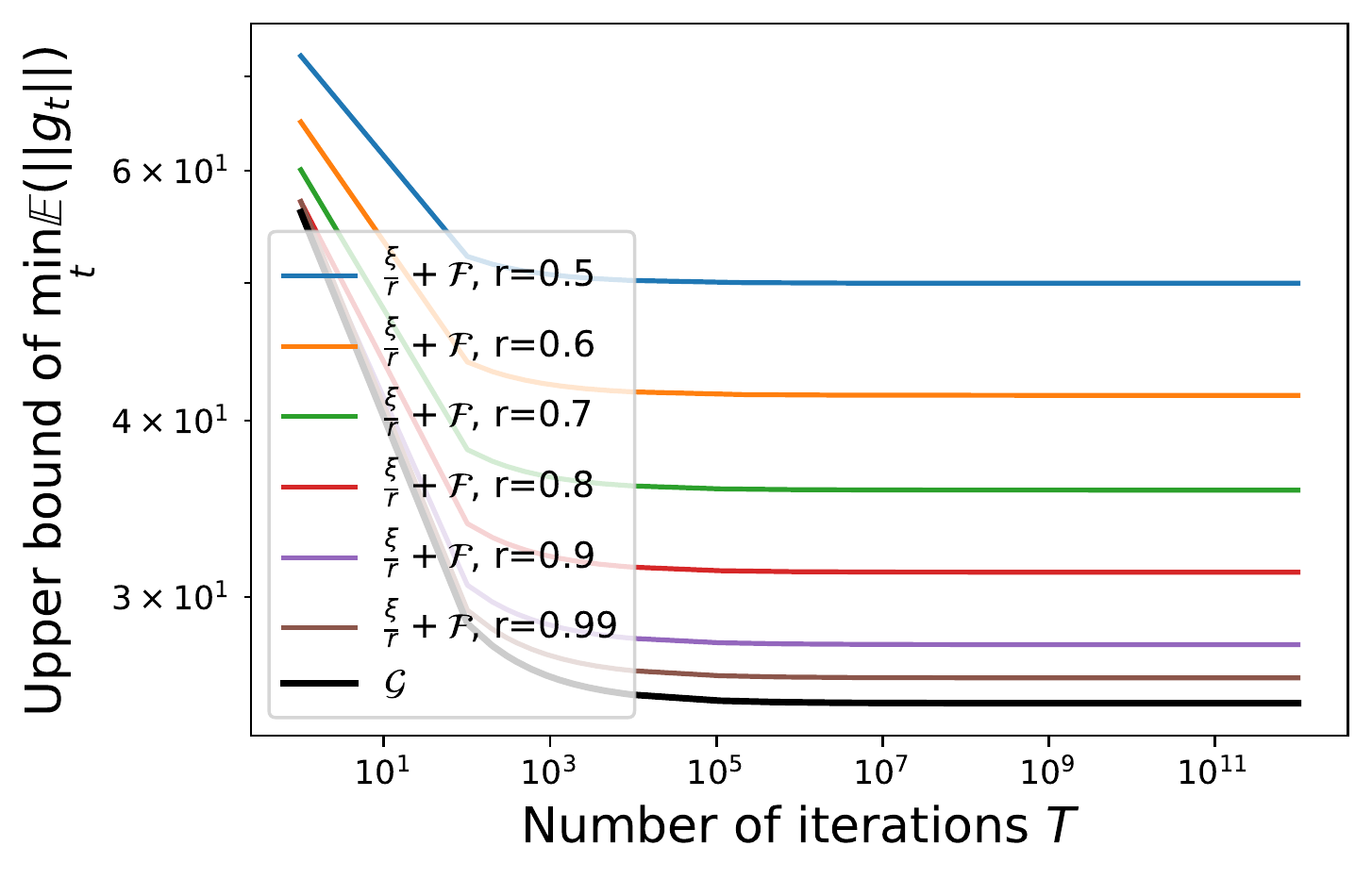}
    \hspace{-0.2cm}
    \includegraphics[width=0.29\linewidth]{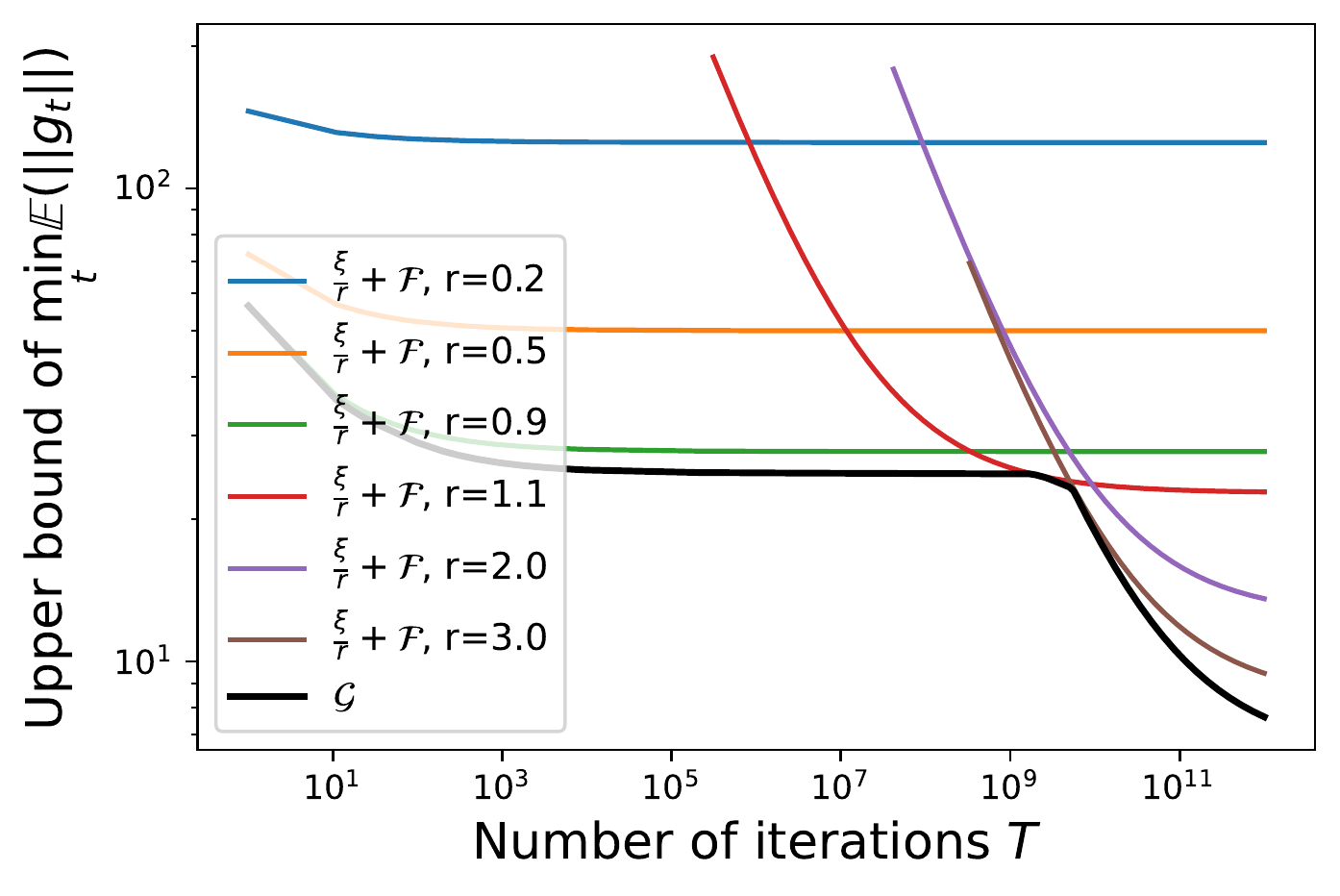}
    \hspace{-0.2cm}
    \includegraphics[width=0.295\linewidth]{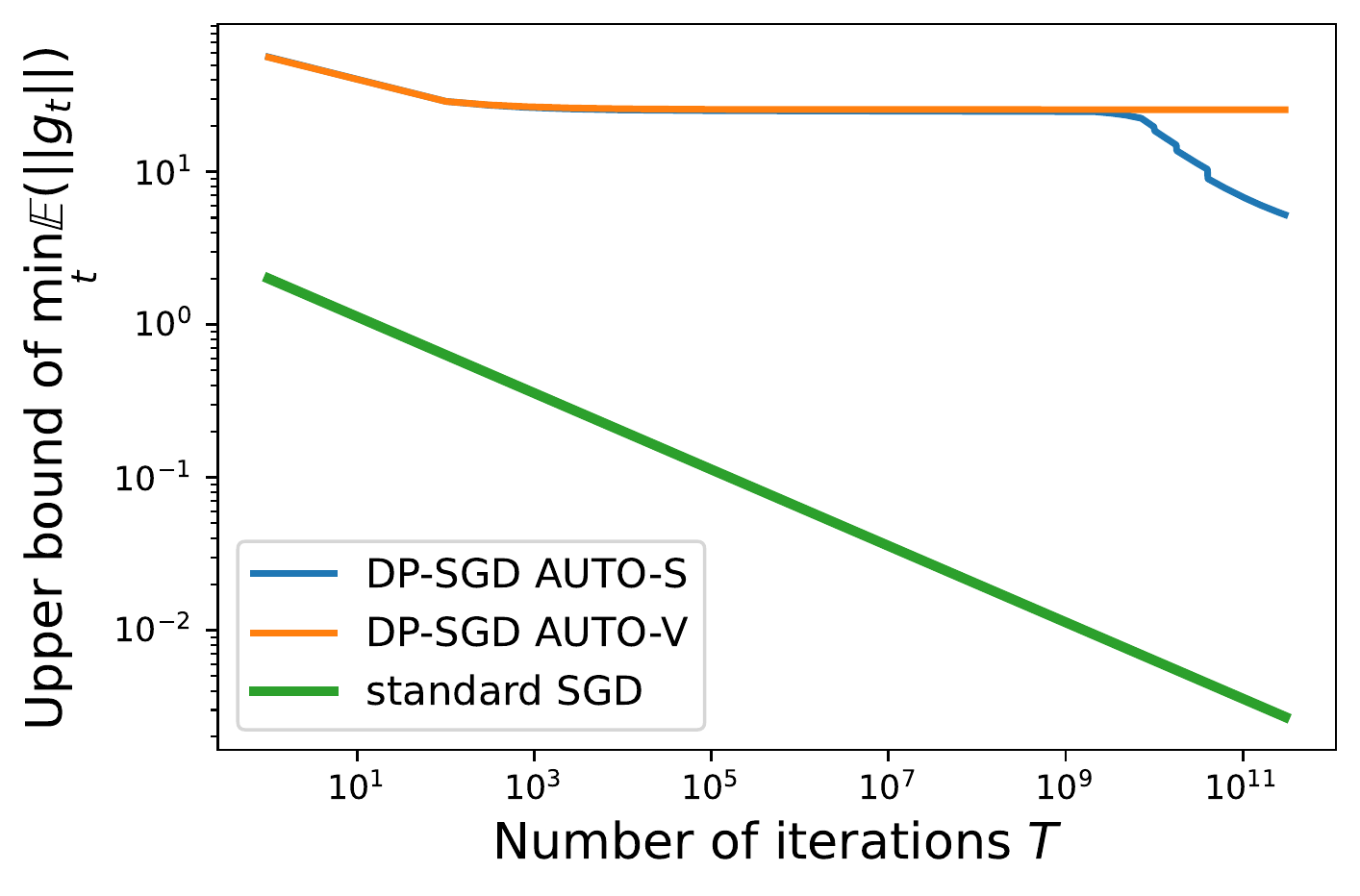}
    \hspace{-0.5cm}
\vspace{-0.2cm}
    \caption{Left: DP-SGD with AUTO-V clipping. Middle: DP-SGD with AUTO-S clipping. Right: Log-log plot of convergence rate in comparison to standard SGD. Here $\xi=25,\gamma=0.01$, and the $O(1/\sqrt{T})$ term is set to 10 for DP-SGD and to 2 for standard SGD.}
    \label{fig: G plots}
\end{figure}


\begin{remark}
\label{rem:S better than V}
We show in \Cref{thm: convergence DPSGD AUTO} and in \Cref{fig: G plots} that the upper bound \eqref{eq:min grad norm without r} has $\mathcal{G}\geq\xi$ for AUTO-V ($\gamma=0$), and $\mathcal{G}$ only reduces to zero for AUTO-S ($\gamma>0$). 
We provide real data evidence in \Cref{fig:grad norm reduced by stability} that strictly positive $\gamma$ reduces the gradient norm significantly.
\end{remark}

\subsection{Analysis of factors affecting the convergence}
We now analyze the many factors that affect the convergence in \Cref{thm:upper bounding grad norm without r}, from a unified viewpoint of both the convergence and the privacy.

We start with the stability constant $\gamma$ and the learning rate $\eta_t$, both only affect the convergence not the privacy. We empirically observe in \Cref{fig:G dependence on xi and gamma} that small $\gamma$ benefits the convergence at initial iterations (when the privacy guarantee is strong) but larger $\gamma$ converges faster asymptotically. For $\eta_t$, the optimal is in fact the miminizer of the hyperbola in \eqref{eq:DP-SGD hyperbola lr}, that is unique and tunable.

Next, we focus on the hyperparameters that affect both convergence and privacy: the batch size $B$, the noise multiplier $\sigma$, and the number of iterations $T$. These hyperparameters have to be considered along the privacy-accuracy tradeoff, not just from a convergence perspective.

{Recall that given a fixed privacy budget $(\epsilon,\delta)$, we rely on modern privacy accountant for computing the appropriate combinations of parameter $\sigma,T,B$. The exact expression of the bound as a function of $(\epsilon,\delta)$ is somewhat messy. For this reason, we illustrate our analysis in terms of the surrogate parameter $\mu$ for $\mu$-GDP} \cite{dong2019gaussian}, which implies $(\epsilon,\delta)$-DP with $\epsilon = \mu^2 + \mu\sqrt{2\log(1/\delta)})$.  \cite{bu2020deep} showed that DP-SGD's privacy guarantee asymptotically converges to $\mu$-GDP (as $T\rightarrow\infty$) with $\mu=\frac{B}{n}\sqrt{T(e^{1/\sigma^2}-1)}$.
{We can alternatively leverage $\rho$-tCDP} \cite{bun2018composable} {for similar conclusions, using $\rho$ in place of $\mu^2$ in} \eqref{eq:min grad norm without r with privacy}.

\begin{theorem}
Under \Cref{assumption: lower bounding loss}, \ref{assumption: Lipschitz}, \ref{assumption: tilde g}, fixing the asymptotic $\mu(\epsilon,\delta)$-GDP parameter, running DP-SGD with automatic clipping for $T$ iterations and setting the learning rate $\eta\propto 1/\sqrt{T}$ give
\begin{align}
\min_{0\leq t\leq T}\E(\|\g_t\|)
&\leq \mathcal{G}\left(4\sqrt{(\mathcal{L}_0-\mathcal{L}_*)L\left(\frac{1}{T}+\frac{d}{\mu^2 n^2}+O\big(\frac{1}{B^2 T}\big) \right)};\xi,\gamma\right)
\label{eq:min grad norm without r with privacy}
\end{align}
\end{theorem}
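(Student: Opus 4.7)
The plan is to reduce this statement to Theorem~1 (the previously established bound \eqref{eq:min grad norm without r}) by eliminating the noise multiplier $\sigma$ in favor of the GDP parameter $\mu$. Theorem~1 gives, for arbitrary $\sigma, B, T$, an upper bound on $\min_t \E\|\g_t\|$ of the form $\mathcal{G}\!\left(\frac{4}{\sqrt{T}}\sqrt{(\mathcal{L}_0-\mathcal{L}_*)L(1+\sigma^2 d/B^2)};\xi,\gamma\right)$ with $\mathcal{G}$ increasing in its first argument. Since $\mathcal{G}$ is monotone, it suffices to substitute for $\sigma$ the value dictated by the fixed privacy budget and control the resulting expression.

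First, I would invoke the asymptotic $\mu$-GDP identity $\mu = \frac{B}{n}\sqrt{T(e^{1/\sigma^2}-1)}$ quoted from \cite{bu2020deep}, and invert it to obtain
\begin{equation*}
\sigma^2 \;=\; \frac{1}{\log\!\left(1+\frac{\mu^2 n^2}{B^2 T}\right)}.
\end{equation*}
In the meaningful regime where $B^2 T \gg \mu^2 n^2$ (which is exactly the regime in which the subsampled Gaussian mechanism has nontrivial utility), the argument of the logarithm is close to $1$, so I would Taylor expand $1/\log(1+x) = 1/x + 1/2 + O(x)$ as $x\to 0$ and apply it with $x=\mu^2 n^2/(B^2 T)$. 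This yields
\begin{equation*}
\sigma^2 \;=\; \frac{B^2 T}{\mu^2 n^2} \;+\; \frac{1}{2} \;+\; O\!\left(\frac{\mu^2 n^2}{B^2 T}\right).
\end{equation*}

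Second, I would plug this expansion into the first argument of $\mathcal{G}$ in \eqref{eq:min grad norm without r}. Expanding $\frac{1}{T}\bigl(1 + \tfrac{\sigma^2 d}{B^2}\bigr) = \frac{1}{T} + \frac{\sigma^2 d}{B^2 T}$ and substituting gives
\begin{equation*}
\frac{1}{T} + \frac{\sigma^2 d}{B^2 T} \;=\; \frac{1}{T} + \frac{d}{\mu^2 n^2} + \frac{d}{2B^2 T} + O\!\left(\frac{d\,\mu^2 n^2}{B^4 T^2}\right) \;=\; \frac{1}{T} + \frac{d}{\mu^2 n^2} + O\!\left(\frac{1}{B^2 T}\right),
\end{equation*}
where the last step absorbs the lower-order correction (treating $d$ as a constant hidden inside $O$, consistent with the statement). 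Monotonicity of $\mathcal{G}$ then transfers this substitution inside the bound, producing exactly \eqref{eq:min grad norm without r with privacy}.

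The only genuinely delicate step is the Taylor expansion: one must be explicit about the regime in which $\mu^2 n^2/(B^2 T)$ is small enough for the expansion to hold and for the higher-order remainder to be dominated by $O(1/(B^2 T))$. I anticipate this to be the main obstacle because, strictly speaking, the statement's $O$-notation conflates $T\to\infty$ asymptotics with a particular scaling of $B,n,\mu$; I would resolve it by stating the bound in the asymptotic regime $T\to\infty$ with $\mu, n, B$ fixed (or $B$ growing no faster than $\sqrt{T}$), which is the natural setting for privacy accounting via $\mu$-GDP. Everything else is a direct consequence of Theorem~1 and the known $\mu$-GDP formula, so no new convergence analysis is needed.
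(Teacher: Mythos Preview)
Your proposal is correct and is essentially the paper's intended argument: the theorem is presented in the paper as an immediate corollary of Theorem~1 obtained by substituting the asymptotic $\mu$-GDP relation $\mu=\frac{B}{n}\sqrt{T(e^{1/\sigma^2}-1)}$ into the bound \eqref{eq:min grad norm without r}, and the paper gives no derivation beyond that. Your inversion $\sigma^2=1/\log(1+\mu^2 n^2/(B^2T))$ and the expansion $\frac{1}{T}+\frac{\sigma^2 d}{B^2T}=\frac{1}{T}+\frac{d}{\mu^2 n^2}+O(1/(B^2T))$ are exactly the computation the paper leaves implicit, and your appeal to the monotonicity of $\mathcal{G}$ to push the substitution through is the right justification.
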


\vspace{-0.35cm}
To show that our analysis matches the training behaviors observed in SOTA empirical work \cite{li2021large,kurakin2022toward,de2022unlocking,tramer2020differentially,mehta2022large,yu2021large}, we minimize the first argument of $\mathcal{G}$ in \eqref{eq:min grad norm without r with privacy}, denoted as $X(B,T,\mu,d,L,\mathcal{L}_0)$. 
\vspace{-0.2cm}
\begin{enumerate}
    \item \textbf{[Train longer with larger noise]} Fixing the expected batch size $B$, we see that $X$ is decreasing in $T$. Hence larger $T$ and consequently larger $\sigma$ are preferred.
    \item \textbf{[Larger batch size helps]} Fixing number of iterations $T$ or epochs $E=BT/n$, we see that $X$ is decreasing in $B$. Hence larger $B$ and consequently larger $\sigma$ are preferred.
    
    \item \textbf{[Pretraining is critical]} Pretraining can boost the DP accuracy through a much smaller initial loss $\mathcal{L}_0$ and from a smooth (small $L$) and flat (small $\xi$, c.f. \Cref{fig:G dependence on xi and gamma}(left)) initialization.
    
    \item \textbf{[Learning rate needs tuning]} The optimal learning rate by minimizing \eqref{eq:DP-SGD hyperbola lr} is  
    $\sqrt{\frac{(\mathcal{L}_0-\mathcal{L}_*)\mu^2 n^2}{L(\mu^2n^2 + dT)}}$. 
    This indicates that one should use larger learning rate for smaller model $d$, weaker privacy (larger $\mu$ or small $\epsilon$), or smaller iteration budget $T$. 
\end{enumerate}


\section{Experiments}
\label{sec:experiments}
We evaluate our automatic DP training on image classification, sentence classification, and table-to-text generation tasks. Detailed settings including hyperparameters can be found in \Cref{app:experiemnt settings}.

\subsection{Image classification}
\label{sec:CV classification}
For MNIST/FashionMNIST, we use the same setup as in \cite{papernot2020tempered,tramer2020differentially,shamsabadi2021losing} with a simple CNN. For CIFAR10, we use the same setup as in \cite{tramer2020differentially} with pretrained SimCLRv2 \cite{chen2020simple}. For \href{https://github.com/fastai/imagenette/}{ImageNette}, a 10-class sub-task of ImageNet \cite{deng2009imagenet}, we use the same setup as in \cite{klause2022differentially} without the learning rate decay. For CelebA \cite{liu2015faceattributes}, the real human face dataset, we train ResNet9 \cite{he2016deep} with group normalization to replace the batch normalization. Notice that CelebA contains high-resolution (178x218) images, each with 40 labels. We consider CelebA for either multi-class classification on one label, e.g. `Smiling' and `Male', or for multi-label/multi-task problem to learn all labels simultaneously. 

\vspace{-0.6cm}
\begin{table}[H]
\centering
\setlength\tabcolsep{3pt}
\caption{Average test accuracy and 95\% confidence interval on image tasks over 5 runs.}
\resizebox{0.9\linewidth}{!}{
    \begin{tabular}{|c|c|c|c|c|c|}
          \hline
\multirow{3}{*}{Task}& \multirow{3}{*}{Model}&\multirow{3}{*}{$(\epsilon,\delta)$} &\multicolumn{3}{c|}{Accuracy \%}  \\
         \cline{4-6}
         &&&\multirow{2}{*}{Abadi's clipping}&\multirow{2}{*}{AUTO-S clipping}&non-DP
         \\
         &&&&&($\epsilon=\infty$)\\
        \hline
         MNIST& 4-layer CNN&$(3,1e$-$5)$&$98.04\pm0.09$&$98.15\pm 0.07$&$99.11\pm 0.07$
         \\
         \hline
         FashionMNIST& 4-layer CNN&$(3,1e$-$5)$&$86.04\pm0.26$&$86.36\pm 0.18$&$89.57\pm 0.13$
         \\
         \hline
         CIFAR10 pretrained& SimCLRv2&$(2,1e$-$5)$&$92.44\pm 0.13$&$92.70\pm 0.02$&$94.42\pm 0.01$
         \\
          \hline
         ImageNette& ResNet9&$(8,1e$-$4)$&$60.29\pm 0.53$&$60.71\pm 0.48$&$71.11\pm 0.37$
         \\
         \hline
         CelebA [Smiling]& ResNet9&$(8,5e$-$6)$&$90.75\pm 0.11$&$91.08\pm 0.08$&$92.61\pm0.20$
         \\
         \hline
        CelebA [Male]& ResNet9&$(8,5e$-$6)$&$95.54\pm 0.14$&$95.70\pm 0.07$&$97.90\pm 0.04$   
         \\
         \hline
         CelebA Multi-label& ResNet9&$(3,5e$-$6)$&$86.81\pm 0.03$&$87.05\pm 0.01$&$90.30\pm 0.02$
         \\
         \hline
         CelebA Multi-label& ResNet9&$(8,5e$-$6)$&$87.52\pm 0.15$&$87.58\pm 0.04$&$90.30\pm 0.02$
         \\\hline
    \end{tabular}
}
    \label{tab:CV results}
\end{table}

\vspace{-0.5cm}
In \Cref{tab:CV results}, we observe that AUTO-S clipping outperforms existing clipping in all datasets with statistical significance. Interestingly, the standard deviation from different runs is smaller for automatic DP optimizers, indicating better reproducibility and stability. We additionally experiment 40 binary classification problems on CelebA with respect to each label, and observe that the mean accuracy further improves to 91.63\% at $\epsilon=8$ for AUTO-S (see \Cref{app:celebA more experiments}).

\subsection{Sentence classification}
\label{sec:NLP classification}
On five benchmark language datasets (MNLI(m/mm)\cite{N18-1101}, QQP\cite{WinNT}, QNLI\cite{rajpurkar2016squad}, SST2\cite{socher2013recursive}), we compare our automatic DP training with re-parameterized gradient perturbation (RGP, \cite{yu2021large}) and full-parameter finetuning (full, \cite{li2021large}) using RoBERTa models \cite{liu2019roberta}. These methods use the same experimental setup. For language models, our automatic training is based on the codebase of \cite{li2021large}.

\begin{table}[H]
\hspace{-0.2cm}
\setlength\tabcolsep{2.2pt}
\caption{Test accuracy on language tasks with RoBERTa-base (12 blocks, 125 million parameters).}
\resizebox{\linewidth}{!}{
\begin{tabular}{|c|cccc|cccc|cccc|}
\hline
\multirow {2}{*}{Method}&\multicolumn {4}{c}{$\epsilon=3$}& \multicolumn{4}{c}{$\epsilon=8$}&\multicolumn{4}{c|}{$\epsilon=\infty$ (non-DP)}
\\\cline{2-13}
&MNLI&QQP&QNLI&SST2&MNLI&QQP&QNLI&SST2&MNLI&QQP&QNLI&SST2
\\\hline
RGP \cite{yu2021large} &-&-&-&-&80.5/79.6&85.5&87.2&91.6&83.6/83.2&89.3&91.3&92.9
\\\cline{1-13}
full \cite{li2021large}&82.45/82.99&85.56&\textbf{87.42}&91.86&83.20/83.46&86.08&\textbf{87.94}&92.09&\multirow{3}{*}{85.91/86.14}	&\multirow{3}{*}{87.34}&\multirow{3}{*}{91.40}&\multirow{3}{*}{94.49}
\\
full AUTO-V&81.21/82.03&84.72&86.56&91.86&82.18/82.64&\textbf{86.23}&87.24&92.09&&&&
\\
full AUTO-S&\textbf{83.22/83.21}&\textbf{85.76}&86.91&\textbf{92.32}&\textbf{83.82/83.55}&\textbf{86.58}&87.85&\textbf{92.43}&&&&
\\\hline
\end{tabular}
}
\label{tab:sentence roberta base}

\hspace{-0.2cm}
\setlength\tabcolsep{2.2pt}
\vspace{-0.2cm}
\caption{Test accuracy on language tasks with RoBERTa-large (24 blocks, 355 million parameters).}
\resizebox{\linewidth}{!}{
\begin{tabular}{|c|cccc|cccc|cccc|}
\hline
\multirow {2}{*}{Method}&\multicolumn {4}{c}{$\epsilon=3$}& \multicolumn{4}{c}{$\epsilon=8$}&\multicolumn{4}{c|}{$\epsilon=\infty$ (non-DP)}
\\\cline{2-13}
&MNLI&QQP&QNLI&SST2&MNLI&QQP&QNLI&SST2&MNLI&QQP&QNLI&SST2
\\\hline
RGP \cite{yu2021large}&-&-&-&-&86.1/86.0&86.7&90.0&93.0&-&-&-&-
\\\cline{1-13}
full \cite{li2021large}&\textbf{86.43}/86.46&86.43&90.76&93.04&87.02/\textbf{87.26}&87.47&91.10&93.81&\multirow{3}{*}{90.33/90.03}&\multirow{3}{*}{87.90}&\multirow{3}{*}{93.61}&\multirow{3}{*}{96.21}
\\
full AUTO-V&85.33/85.61&\textbf{86.61}&89.99&\textbf{93.12}&85.91/86.10&86.86&90.55&93.35&&&&
\\
full AUTO-S&86.27/\textbf{86.67}&\textbf{86.76}&\textbf{91.01}&\textbf{93.92}&\textbf{87.07}/87.16&\textbf{87.47}&\textbf{91.45}&\textbf{94.61}&&&&
\\\hline
\end{tabular}
}
    \label{tab:sentence roberta large}
\end{table}

\vspace{-0.7cm}
In \Cref{tab:sentence roberta base} and \Cref{tab:sentence roberta large}, we note that full parameter finetuning with AUTO-S outperforms or at least matches SOTA on all tasks. We use \textit{exactly the same} hyperparameters as in \cite{li2021large}.


\subsection{Table-to-text generation}
\label{sec:NLP generation}
We compare our automatic DP training with a variety of fine-tuning methods, for table-to-text generation task on E2E dataset \cite{dusek.etal2020:csl}, where the goal is to generate texts about different aspects of a restaurant's data. We measure the success on this task by BLEU, ROUGE-L (in \Cref{tab:E2E GPT selected}), METEOR, NIST, CIDEr (extended in \Cref{tab:E2E GPT extended}), with higher value meaning better model quality.

\vspace{-0.3cm}
\begin{table}[!htb]
    \centering
\setlength\tabcolsep{2pt}
\caption{Test performance on E2E dataset with GPT2. Additional performance measures are included in \Cref{tab:E2E GPT extended}. The best two GPT2 models for each row are marked in bold.}
\resizebox{\linewidth}{!}{
\begin{tabular}{|l|c|c|c|cccccccc|}
\hline
&DP&GPT2&GPT2&\multicolumn{8}{c|}{GPT2}
\\
Metric&guarantee&large&medium&&&&&&&&
\\\cline{3-12}
&&full&full&full&full&full&LoRA&RGP&prefix&top2&retrain
\\
&&AUTO-S&AUTO-S&AUTO-S&AUTO-V&\cite{li2021large}&\cite{hu2021lora}&\cite{yu2021large}&\cite{li2021prefix}&&
\\\hline
\multirow{3}{*}{BLEU}&$\epsilon=3$&\textbf{64.180}&\textbf{63.850}&\textbf{61.340}&\textbf{61.519}&\textbf{61.519}&58.153&58.482&47.772&25.920&15.457
\\
&$\epsilon=8$&\textbf{64.640}&\textbf{64.220}&\textbf{63.600}&63.189&63.189&\textbf{63.389}&58.455&49.263&26.885&24.247
\\
&non-DP&66.840&68.500&69.463&69.463&69.463&69.682&68.328&68.845&65.752&65.731
\\\hline\hline
\multirow{3}{*}{ROGUE-L}&$\epsilon=3$&\textbf{67.857}&\textbf{67.071}&\textbf{65.872}&65.670&65.670&\textbf{65.773}&65.560&58.964&44.536&35.240
\\
&$\epsilon=8$&\textbf{68.968}&\textbf{67.533}&\textbf{67.073}&66.429&66.429&\textbf{67.525}&65.030&60.730&46.421&39.951
\\
&non-DP&70.384&71.458&71.359&71.359&71.359&71.709&68.844&70.805&68.704&68.751
\\\hline
\end{tabular}
}
\label{tab:E2E GPT selected}
\end{table}

Competitive methods include low-rank adaption (LoRA), prefix-tuning (prefix), RGP, only fine-tuning the top 2 Transformer blocks
(top2), and training from scratch (retrain), as were recorded in \cite{li2021large}. Again, we use the \textit{exactly the same} hyperparameters as in \cite{li2021large}. For GPT2 (124 million parameters), GPT2 medium (355 million), and GPT2 large (774 million), \Cref{tab:E2E GPT selected} shows that AUTO-S is scalable with stronger performance on larger models. Our automatic full-parameter finetuning has the best overall performance. Additionally, we highlight that AUTO-S and methods like LoRA are not mutually exclusive and can be combined to yield strong performance, since AUTO-S modifies the optimizers and LoRA modifies the architecture.

\section{Related works}
\label{sec:related}

While other DP works also normalize the per-sample gradients (instead of clipping them) or use small clipping threshold (making the clipping similar to normalization), our work is very different in terms of theoretical analysis, algorithm design and experiments. In fact, the concurrent work \cite{yang2022normalized} gives the same algorithm as AUTO-S, although its theoretical analysis and experiment design is fundamentally different from ours. \cite{das2021convergence} proposes to normalize the per-user (not per-sample) gradient in the federated learning setting, and analyzes the convergence in a convex, non-deep-learning setting. 

On the other hand, many works apply the per-sample gradient clipping with small $R$ for good utility \cite{abadi2016deep,li2021large,mehta2022large,kurakin2022toward,de2022unlocking}. These works have led to valuable insights, but also some false or incomplete conclusions, due to the lack of rigorous theoretical analysis. For instance, since $R$ is present in the (re-parameterized) per-sample clipping, it cannot avoid the hyperparameter tuning as the choice of $R$ is not robust; even if a sufficiently small $R$ is used, the clipping does not reveal the stability constant in AUTO-S, which enjoys theoretical and empirical advantages in \Cref{rem:S better than V} and \Cref{sec:experiments}. We devote \Cref{app:more related} to more instances (e.g. Footnote 5) and a thorough comparison. 

\section{Discussion}
\label{sec:discussion}
In this work, we propose the automatic clipping as a drop-in replacement to the standard per-example clipping for differentially private training. This is the first technique that eliminates the need to tune 
the clipping threshold $R$, thus making DP deep learning as easy as regular learning. Our AUTO-S method enjoys both theoretical guarantee of convergence in non-convex problems (under various conditions), and strong empirical performance that advances DP learning on computer vision and language tasks.

We are excited about the future of automatic DP training, especially along with other working techniques, such as general optimizers (e.g. \cite{bu2021fast,du2021dp}), clipping styles (all-layer or per-layer or adaptive clipping), architecture modifications (e.g. LoRA, RGP, prefix), and data augmentation (e.g. adversarial training \cite{goodfellow2014explaining} and multiple augmentation \cite{de2022unlocking}). Thus, we expect to achieve comparable results to all SOTA in a lightweight fashion.

\section*{Acknowledgement}
We would like to thank Xuechen Li for updating his codebase and for his quick response in technical details to reproduce the results, which was crucial for benchmarking our experiments.

\bibliographystyle{plain}
\bibliography{reference}


\appendix

\clearpage
\section{Proof of differential privacy}
\label{app:proof DP}
\begin{proof}[Proof of \Cref{thm:auto is private}]
Define the \textbf{$ \ell_{2} $ sensitivity} of any function $ g $ to be
 $\Delta g=\sup _{S, S'}\|g(S)-g(S')\|_{2}$ where the supreme is over all neighboring $(S,S')$. Then the \textbf{Gaussian mechanism} $\hat g(S) =g(S) +\sigma\Delta g\cdot\mathcal{N}(0,\mathbf{I})$.

$\sigma$ denotes the ``Noise multiplier'', which corresponds to the noise-level when a Gaussian mechanism is applied to a query with sensitivity $1$.

Observe that automatic clipping (AUTO-V and AUTO-S \eqref{eq:Auto without R}) ensures the bounded global-sensitivity of the stochastic gradient as in Abadi's clipping. Aligning the noise-multiplier (rather than the noise-level itself) ensures that the the noise-to-sensitivity ratio $\frac{\sigma\Delta g}{\Delta g}=\sigma$ is fixed regardless of $\Delta g$. The Gaussian mechanism's privacy guarantees are equivalent. Thus from the privacy accountant perspective, DP-SGD with both Abadi's clipping and our autoclipping method can be equivalently represented as the adaptive composition of $T$ Poisson sampled Gaussian Mechanism with sampling probability $B/n$ and noise multiplier $\sigma$.
\end{proof}

\section{Proof of automaticity}

\subsection{Non-adaptive DP optimizers}
\label{app:proof automaticity non-adpative}

\begin{proof}[Proof of \Cref{thm: non-adaptive automatic}]
We prove \Cref{thm: non-adaptive automatic} by showing that, DP-SGD using $R$-dependent AUTO-S with learning rate $\eta$ and weight decay $\lambda$ is equivalent to $R$-independent AUTO-S with learning rate $\eta R$ and weight decay $\lambda/R$. We claim other non-adaptive optimizers such as HeavyBall and NAG can be easily shown in a similar manner.

Recall the standard SGD with weight decay is
$$
\w_{t+1}=\w_t-\eta\left(\sum_{i\in B_t} \frac{\partial l_i}{\partial \w_t}+\lambda \w_t\right)
$$

Replacing the standard gradient $\sum_i \frac{\partial l_i}{\partial \w_t}$ with the private gradient, we write the $R$-dependent case as
\begin{align*}
\w_{t+1}
&=\w_t-\eta\left(\sum_{i\in B_t} \frac{\partial l_i}{\partial \w_t}\cdot R/\|\frac{\partial l_i}{\partial \w_t}\|_2+\sigma R\cdot\mathcal{N}(0,\I)+\lambda \w_t\right)
\\
&=\w_t-\eta R\left(\sum_{i\in B_t} \frac{\partial l_i}{\partial \w_t}/\|\frac{\partial l_i}{\partial \w_t}\|_2+\sigma \cdot\mathcal{N}(0,\I)\right)-\eta \lambda\w_t
\end{align*}
which is clearly equivalent to the $R$-independent case:
\begin{align*}
\w_{t+1}
&=\w_t-\eta'\left(\sum_{i\in B_t} \frac{\partial l_i}{\partial \w_t}/\|\frac{\partial l_i}{\partial \w_t}\|_2+\sigma \cdot\mathcal{N}(0,\I)+\lambda' \w_t\right)
\end{align*}
if we use $\eta'=\eta R$ and $\lambda'=\lambda/R$.
\end{proof}

\subsection{Adaptive DP optimizers}
\label{app:proof automaticity adpative}

\begin{proof}[Proof of \Cref{thm: adaptive automatic}]
We prove \Cref{thm: adaptive automatic} by showing that, DP-AdamW using $R$-dependent AUTO-S with learning rate $\eta$ and weight decay $\lambda$ is equivalent to $R$-independent AUTO-S with the same learning rate $\eta$ and weight decay $\lambda/R$. This is the most complicated case. We claim other adaptive optimizers such as AdaDelta, Adam with weight decay (not AdamW), and NAdam can be easily shown in a similar manner.

Recall the standard AdamW is
$$\w_{t+1}=\w_t-\eta\left(\frac{\m_t/(1-\beta_1)}{\sqrt{\v_t/(1-\beta_2)}}+\lambda\w_t\right)$$
where $\beta_1,\beta_2$ are constants, $\g_t:=\sum_i \frac{\partial l_i}{\partial \w_t}$ is the standard gradient,
$$\m_t=\beta_1 \m_{t-1}+(1-\beta_1)\g_t \longrightarrow \m_t=\sum_\tau \beta_1^{t-\tau}(1-\beta_1)\g_\tau,$$
$$\v_t=\beta_2 \v_{t-1}+(1-\beta_2)\g_t^2 \longrightarrow \v_t=\sum_\tau \beta_2^{t-\tau}(1-\beta_2)\g_\tau^2.$$

Replacing the standard gradient with the private gradient $R\tilde \g_t:=R(\sum_i \frac{\partial l_i}{\partial \w_t}/\|\frac{\partial l_i}{\partial \w_t}\|_2+\sigma \cdot\mathcal{N}(0,I))$, we write the $R$-dependent DP-AdamW as
$$\w_{t+1}=\w_t-\eta\left(\frac{\tilde\m_t/(1-\beta_1)}{\sqrt{\tilde\v_t/(1-\beta_2)}}+\lambda\w_t\right)$$
where 
$$\tilde\m_t=\beta_1 \tilde\m_{t-1}+(1-\beta_1)R\tilde\g_t \longrightarrow \tilde\m_t=\sum_\tau \beta_1^{t-\tau}(1-\beta_1)R\tilde\g_\tau,$$
$$\tilde\v_t=\beta_2 \tilde\v_{t-1}+(1-\beta_2)R^2\tilde\g_t^2 \longrightarrow \tilde\v_t=\sum_\tau \beta_2^{t-\tau}(1-\beta_2)R^2\tilde\g_\tau^2.$$

Clearly, the $R$ factor in the numerator and denominator of $\frac{\tilde\m_t/(1-\beta_1)}{\sqrt{\tilde\v_t/(1-\beta_2)}}$ cancel each other. Therefore we claim that the $R$-dependent DP-AdamW is in fact completely independent of $R$.
\end{proof}

\subsection{Automatic per-layer clipping}
\label{app:automatic per-layer}
In some cases, the per-layer clipping is desired, where we use a clipping threshold vector $\bm{R}=[R_1,\cdots,R_L]$ and each layer uses a different clipping threshold. We claim that DP optimizers under automatic clipping works with the per-layer clipping when $\bm{R}$ is tuned proportionally, e.g. $\bm{R}=R\cdot[a_1,\cdots,a_L]$, but not entry-wise (see counter-example in \Cref{fact:per-layer may not compatible with AUTO}). One special case is the \textit{uniform per-layer clipping} when $R_1=\cdots=R_L=R/\sqrt{L}$. This is widely applied as only one norm $R$ requires tuning, instead of $L$ norms in $\bm{R}$, particularly in the case of deep models with hundreds of layers. The corresponding DP-SGD with AUTO-S in \eqref{eq:Auto-S} gives
$$\w_{t+1}^{(l)}=\w_{t}^{(l)}-\eta\left(\sum_{i\in B_t} \frac{R}{\sqrt{L}}\frac{\g_{t,i}^{(l)}}{||\g_{t,i}^{(l)}||+\gamma}+\sigma R\cdot\mathcal{N}(0,\I)\right)$$
Here the superscript $(l)$ is the layer index. Clearly $R$ couples with the learning rate $\eta$ and the same analysis as in \Cref{thm: non-adaptive automatic} follows. The adaptive optimizers can be similarly analyzed from \Cref{thm: adaptive automatic}.

\begin{fact}\label{fact:per-layer may not compatible with AUTO}
Changing one clipping threshold in the clipping threshold vector $\bm R$ (i.e. not proportionally) can break the coupling with learning rate.
\end{fact}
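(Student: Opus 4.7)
The plan is to exhibit a simple two-layer counter-example that shows the learning-rate coupling genuinely fails once the per-layer clipping vector $\bm R$ is perturbed non-proportionally. First I would write out the automatic per-layer DP-SGD update explicitly as
\[
  \w_{t+1}^{(l)} \;=\; \w_t^{(l)} - \eta R_l \, U_t^{(l)},
\]
where $U_t^{(l)} := \sum_{i\in B_t} \g_{t,i}^{(l)}/(\|\g_{t,i}^{(l)}\| + \gamma) + \sigma \cdot \mathcal{N}(0,\I)$ is independent of $R_l$, by exactly the same factorization used in the proof of \Cref{thm: non-adaptive automatic}. In the proportional case $\bm R \mapsto c\bm R$, every layer's update acquires the common factor $c$, which is cleanly absorbed by $\eta' = c\eta$, recovering the coupling claimed in \Cref{thm: non-adaptive automatic}.

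Next I would show that under the non-proportional perturbation $R_1 \mapsto c R_1$ with $c \ne 1$ and $R_2$ left unchanged, no single rescaling $\eta \mapsto \eta'$ can reproduce the new update on both layers at once. Matching layer $1$ requires $\eta' R_1 = \eta \cdot c R_1$, i.e.\ $\eta' = c\eta$; matching layer $2$ requires $\eta' R_2 = \eta R_2$, i.e.\ $\eta' = \eta$. These constraints are incompatible as soon as $c \ne 1$ and $U_t^{(2)} \ne 0$, which is easy to arrange with any non-degenerate batch (and with probability one over the injected Gaussian noise). The same obstruction rules out absorbing the discrepancy into a scalar weight decay $\lambda'$, since weight decay acts uniformly across all layers, whereas the mismatch is strictly layer-specific.

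The argument is essentially a one-line pigeonhole: a single scalar cannot simultaneously satisfy two different multiplicative constraints. The only place to tread carefully is the Gaussian noise term, whose standard deviation in layer $l$ scales with $R_l$; under non-proportional perturbation the per-layer noise magnitudes move by different factors, which only reinforces the impossibility rather than offering a loophole through noise rescaling. No further technical obstacle arises beyond this bookkeeping, and the fact follows immediately from the explicit update formula.
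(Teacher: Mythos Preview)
Your pigeonhole idea on the signal terms is sound, but the update formula you write down is not the per-layer DP-SGD used in the paper, and the discrepancy sits exactly in the noise term you flag as ``the only place to tread carefully.'' In per-layer clipping the global $\ell_2$ sensitivity of the clipped gradient is $\|\bm R\|=\sqrt{\sum_l R_l^2}$, so the Gaussian noise added to \emph{every} layer has standard deviation $\sigma\|\bm R\|$, not $\sigma R_l$. The paper's update is
\[
  \w_{t+1}^{(l)} \;=\; \w_t^{(l)} - \eta\Bigl( R_l \sum_{i\in B} \tfrac{\g_{t,i,l}}{\|\g_{t,i,l}\|} + \sigma\sqrt{R_1^2+R_2^2}\cdot \mathcal{N}(0,\I) \Bigr),
\]
so your factorization $\eta R_l\,U_t^{(l)}$ with $U_t^{(l)}$ independent of $\bm R$ is false: the noise piece inside would have to be $\sigma\|\bm R\|/R_l$, which depends on the whole vector. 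The analogy to \Cref{thm: non-adaptive automatic} breaks precisely because all-layer clipping has a single $R$ scaling both signal and noise, whereas per-layer clipping does not.

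Your argument can be repaired by running it on the signal component alone: after $R_1\mapsto cR_1$ the layer-$1$ signal scales by $c$ while the layer-$2$ signal is unchanged, forcing the incompatible pair $\eta'=c\eta$ and $\eta'=\eta$. The paper instead argues via an invariant: with the concrete numbers $(R_1,R_2)=(9,12)\to(16,12)$ it computes the per-layer noise-to-signal ratio $\sigma\|\bm R\|/R_l$ and shows it moves from $5/3$ to $5/4$ in layer $1$ and from $5/4$ to $5/3$ in layer $2$. Since any learning-rate rescaling multiplies signal and noise by the same factor, this ratio is invariant under coupling, and the change proves the coupling is broken. The paper's route thus hinges on the shared noise magnitude $\|\bm R\|$---exactly the piece your write-up mis-specifies---while your (corrected) route can avoid the noise entirely.
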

\begin{proof}[Proof of \Cref{fact:per-layer may not compatible with AUTO}]
We prove by a counter-example of $\bm R$ in $\R^2$. Consider DP-SGD with per-layer clipping thresholds $(R_1,R_2)=(9,12)$:
$$\w_{t+1}^{(l)}=\w_{t}^{(l)}-\eta \left(\sum_{i\in B} \frac{R_l\g_{t,i,l}}{||\g_{t,i,l}||}+\sigma\sqrt{R_1^2+R_2^2}\cdot\mathcal{N}(0,\I)\right)$$
Increasing $R_1$ from 9 to 16 changes the update for the first layer
$$\eta\left(\sum_{i\in B} \frac{9\g_{t,i,l}}{||\g_{t,i,l}||}+15\sigma\cdot\mathcal{N}(0,1)\right)\to \eta\left(\sum_{i\in B} \frac{16\g_{t,i,l}}{||\g_{t,i,l}||}+20\sigma\cdot\mathcal{N}(0,\I)\right)$$
The noise-to-signal ratio decreases from 5/3 to 5/4 for this layer, and increases from 5/4 to 5/3 for the second layer. This breaks the coupling with learning rate, since the coupling does not change the noise-to-signal ratio.
\end{proof}

\section{Main results of convergence for DP-SGD with automatic clipping}
\subsection{Main proof of convergence for DP-SGD (the envelope version)}
\label{app: DPSGD convergence with AUTO}

\begin{proof}[Proof of \Cref{thm:upper bounding grad norm without r}]
In this section, we prove two parts of \Cref{thm:upper bounding grad norm without r}. 

The first part of \Cref{thm:upper bounding grad norm without r} is the upper bound on $\min_t \E(\|\g_t\|)$, which is a direct result following from \Cref{thm: convergence DPSGD AUTO}, and we prove it in \Cref{app: DPSGD convergence with AUTO(non-envelope)}.

\begin{theorem}
\label{thm: convergence DPSGD AUTO}
Under \Cref{assumption: lower bounding loss}, \ref{assumption: Lipschitz}, \ref{assumption: tilde g}, running DP-SGD with automatic clipping for $T$ iterations gives
\begin{align}
\min_t\E(\|\g_t\|)\leq\frac{\xi}{r}+ \mathcal{F}\left(\frac{4}{\sqrt{T}}\sqrt{(\mathcal{L}_0-\mathcal{L}_*)L\left(1+\frac{\sigma^2 d}{B^2} \right)};r,\xi,\gamma\right)
\label{eq:min grad norm with r}
\end{align}
where
\begin{itemize}
    \item for $r< 1, \gamma=0$ and $\eta \propto 1/\sqrt{T}$, 
    $\mathcal{F}(x)=\frac{x}{\min_{0<c<1} f(c,r)}$
    and $f(c,r):=\frac{(1+rc)}{\sqrt{r^2+2rc+1}}+\frac{(1-rc)}{\sqrt{r^2-2rc+1}}$; for $r\geq 1, \gamma=0$ and $\eta \propto 1/\sqrt{T}$, 
    $\mathcal{F}(x)=\infty$;
    \item for $r\geq 1, \gamma>0$ and $\eta \propto 1/\sqrt{T}$, $\mathcal{F}$ is the convex envelope of \eqref{eq:M inverse complex formula}, and is strictly increasing.
\end{itemize}
\end{theorem}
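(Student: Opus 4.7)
My plan is to follow the standard non-convex SGD template (descent lemma, sum, and optimize the learning rate), but with the crucial twist that the ``gradient'' being applied at each step is the \emph{normalized/clipped} per-sample gradient $\tilde\g_{t,i}/(\|\tilde\g_{t,i}\|+\gamma)$ rather than the true gradient $\g_t$. The work splits cleanly into two pieces: (i) for a fixed auxiliary parameter $r>0$, derive a per-$r$ upper bound on $\min_t\E\|\g_t\|$ of the form $\xi/r+\mathcal{F}(x;r,\xi,\gamma)$ with $x=\frac{4}{\sqrt T}\sqrt{(\mathcal{L}_0-\mathcal{L}_*)L(1+\sigma^2 d/B^2)}$; (ii) take the pointwise infimum over $r>0$ to obtain the envelope $\mathcal{G}$, and then extract the asymptotic rate.

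\textbf{Step 1: Descent lemma and per-iteration expansion.} Starting from the $L$-smoothness Assumption, I would write
\begin{align*}
\mathcal{L}(\w_{t+1})-\mathcal{L}(\w_t) \le -\eta\,\g_t^\top \hat\g_t + \tfrac{L\eta^2}{2}\|\hat\g_t\|^2,
\end{align*}
plug in $\hat\g_t=\sum_i \tilde\g_{t,i}/(\|\tilde\g_{t,i}\|+\gamma)+\sigma\mathcal{N}(0,\I)$, and take expectation over both the noise and the Poisson sample. The Gaussian contributes $\tfrac{L\eta^2}{2}\sigma^2 d$ to the quadratic term while the sum of normalized per-sample gradients has norm at most $B$ (since each summand has norm at most $1$), yielding a clean bound $\frac{L\eta^2}{2}(B^2+\sigma^2 d)$.

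\textbf{Step 2: Lower-bounding the signal via symmetric noise.} This is where I expect the main obstacle. I need a lower bound on $\g_t^\top\E\bigl[\tilde\g_{t,i}/(\|\tilde\g_{t,i}\|+\gamma)\bigr]$ in terms of $\|\g_t\|$. The natural approach, exploiting Assumption on the symmetric gradient noise, is a pairing argument: for each noisy sample $\tilde\g_{t,i}$, pair it with its reflection $2\g_t-\tilde\g_{t,i}$ (which has the same distribution). Their contributions to the inner product combine into
\begin{align*}
\tfrac12\,\g_t^\top\!\left(\frac{\tilde\g_{t,i}}{\|\tilde\g_{t,i}\|+\gamma}+\frac{2\g_t-\tilde\g_{t,i}}{\|2\g_t-\tilde\g_{t,i}\|+\gamma}\right),
\end{align*}
which one can show is nonnegative and, when $\|\tilde\g_{t,i}-\g_t\|$ is not too large relative to $\|\g_t\|$, is of order $\|\g_t\|/(\|\g_t\|+\gamma+\xi)$. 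To quantify ``not too large'' I would introduce a threshold parameterized by $r$: conditioning on the event $\{\|\tilde\g_{t,i}-\g_t\|\le r\|\g_t\|\}$ versus its complement. Markov's inequality on $\E\|\tilde\g_{t,i}-\g_t\|^2\le\xi^2$ then bounds the probability of the bad event by $\xi^2/(r\|\g_t\|)^2$, and this is where the surplus term $\xi/r$ in the envelope originates. The case $\gamma=0$ (AUTO-V) degenerates: the good-event contribution saturates to a constant and the bad event can only be controlled when $r<1$, giving the explicit function $f(c,r)$ in \Cref{thm: convergence DPSGD AUTO}; for $\gamma>0$ (AUTO-S) the denominator $\|\g_t\|+\gamma$ yields a bound that decays to $0$ as $\|\g_t\|\to 0$.

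\textbf{Step 3: Summation, learning-rate tuning, and envelope.} Summing the per-iteration descent from $t=0$ to $T-1$, telescoping, and using $\mathcal{L}(\w_T)\ge \mathcal{L}_*$ yields
\begin{align*}
\frac{1}{T}\sum_{t=0}^{T-1}\text{(signal lower bound in }\|\g_t\|)\;\le\;\frac{\mathcal{L}_0-\mathcal{L}_*}{\eta T}+\frac{L\eta}{2}(B^2+\sigma^2 d).
\end{align*}
Setting $\eta\propto 1/\sqrt{T}$ balances the two terms and produces the quantity $x=\frac{4}{\sqrt T}\sqrt{(\mathcal{L}_0-\mathcal{L}_*)L(1+\sigma^2 d/B^2)}$ (after normalizing by $B$). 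Inverting the signal lower bound via $\mathcal{F}$ (the convex envelope construction handles the fact that the relation between the signal and $\|\g_t\|$ is not monotone-invertible directly) gives the per-$r$ bound $\min_t\E\|\g_t\|\le \xi/r+\mathcal{F}(x;r,\xi,\gamma)$. Taking $\min_{r>0}$ yields the envelope $\mathcal{G}$ in the statement. For the asymptotic rate claim, I would observe that for AUTO-S ($\gamma>0$), $\mathcal{F}(x;r,\xi,\gamma)\to 0$ as $x\to 0$ for every fixed $r$, and the optimal choice scales like $r\asymp x^{-1/2}$, yielding $\mathcal{G}(x)=O(\sqrt x)=O(T^{-1/4})$, matching standard SGD. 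For AUTO-V ($\gamma=0$) the $\mathcal{F}$-term is bounded below by a positive constant, so $\mathcal{G}\ge \xi$ as noted in Remark~\ref{rem:S better than V}, which is precisely why the stability constant is needed.
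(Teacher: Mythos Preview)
Your proposal is essentially correct and follows the same architecture as the paper's own proof: descent lemma with the quadratic term bounded via $\|C_i\tilde\g_{t,i}\|\le 1$, the symmetry-pairing argument to lower-bound $\g_t^\top\E[\tilde\g_{t,i}/(\|\tilde\g_{t,i}\|+\gamma)]$, the introduction of the auxiliary threshold $r$ on the noise-to-signal ratio with a Markov bound on the bad event, and then telescoping with $\eta\propto 1/\sqrt{T}$ followed by the convex-envelope inversion. One small slip: to land exactly on the additive $\xi/r$ term in the statement you should apply Markov to the \emph{first} moment $\E\|\Delta_t\|\le\xi$ (which follows from Jensen and the second-moment bound), giving $\|\g_t\|\,\P(\|\Delta_t\|\le r\|\g_t\|)\ge \|\g_t\|-\xi/r$; your second-moment application yields $\|\g_t\|-\xi^2/(r^2\|\g_t\|)$ instead, which does not match the stated form.
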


\begin{figure}[!htb]
    \centering
    \includegraphics[width=0.4\linewidth]{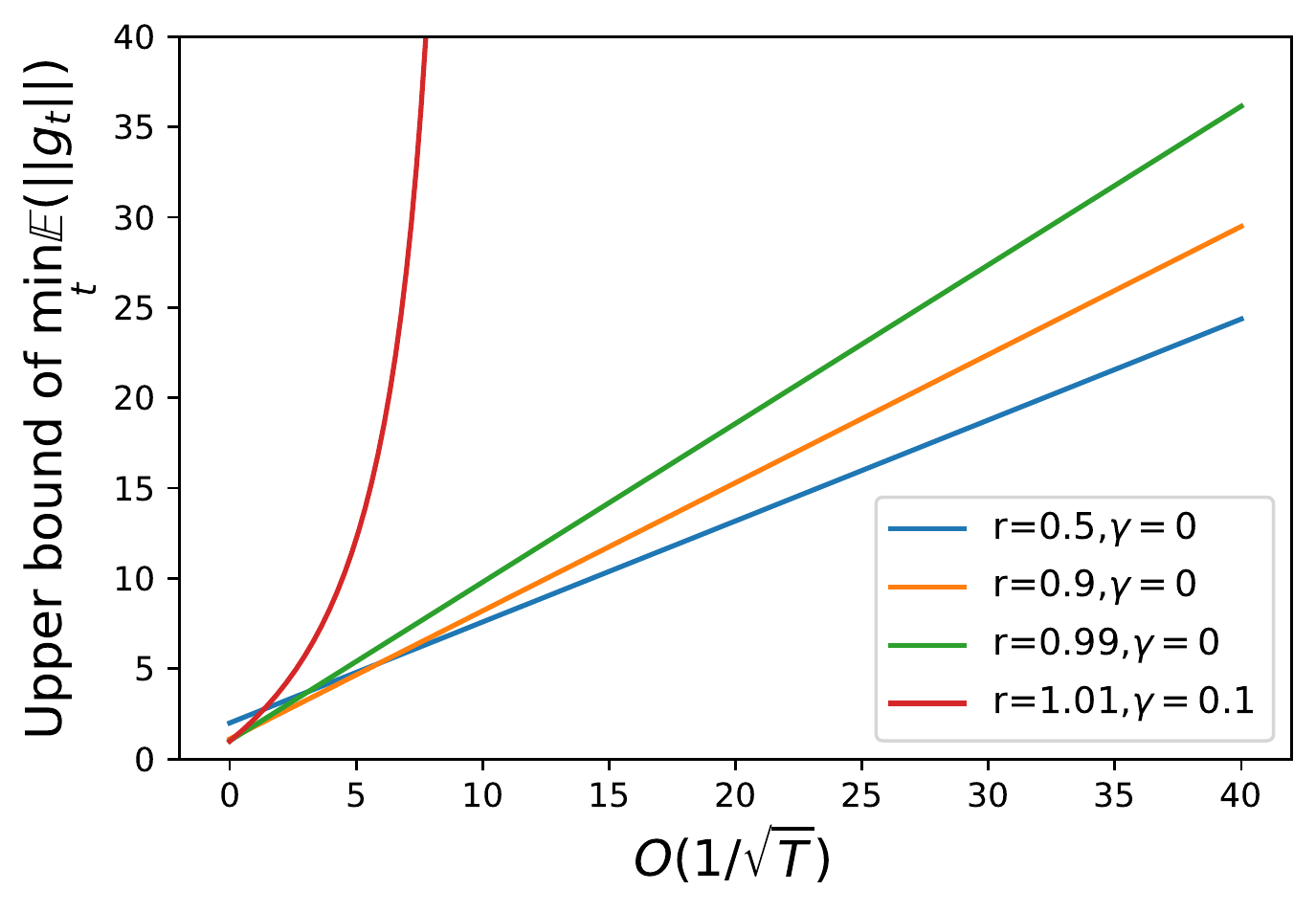}
\includegraphics[width=0.4\linewidth]{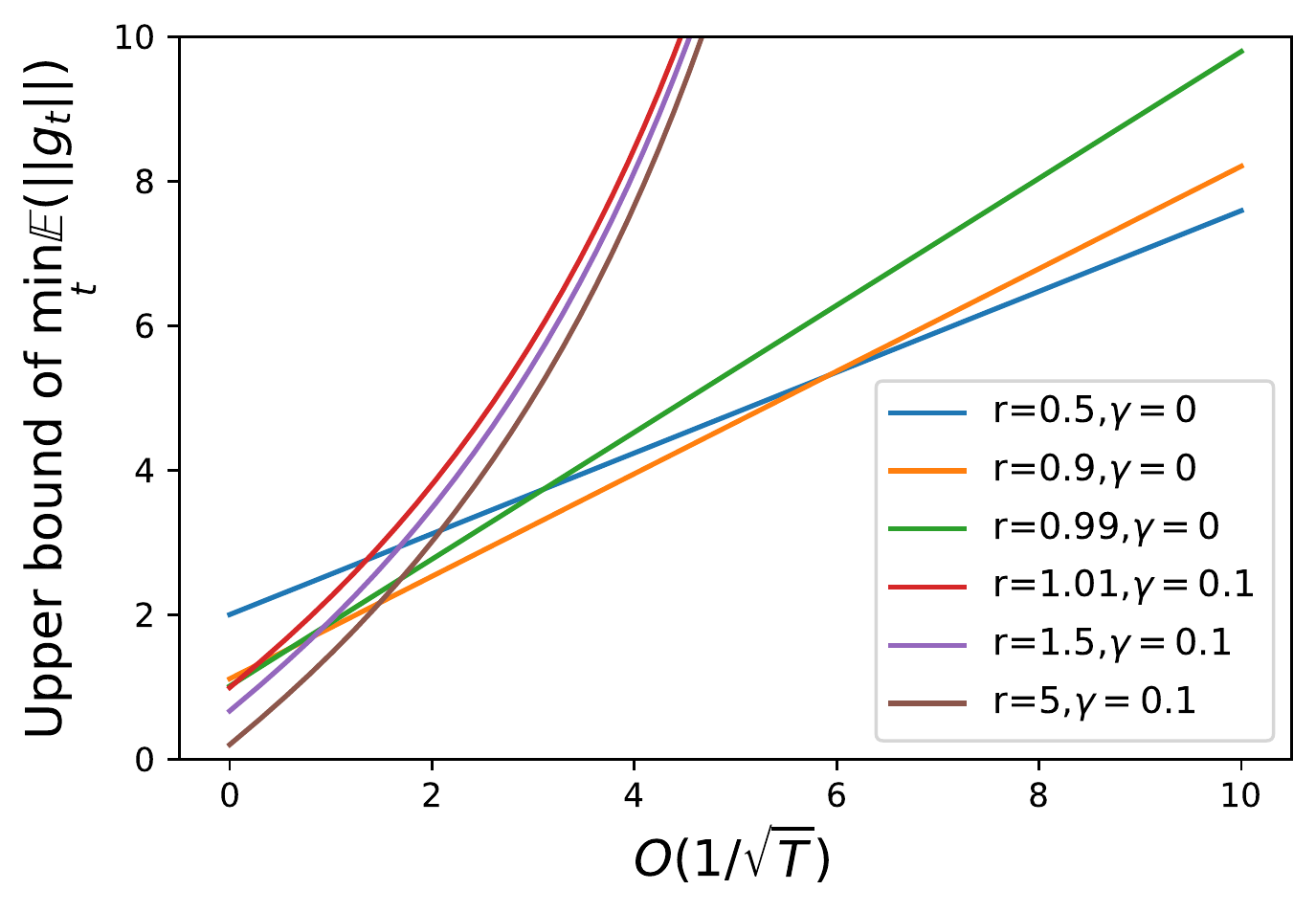}
    \caption{Visualization of upper bound $\frac{\xi}{r}+\mathcal{F}\left(O(1/\sqrt{T});r,\xi,\gamma\right)$ for gradient norm, with $O(1/\sqrt{T})$ in \eqref{eq:min grad norm with r}. Here $\xi=1$. The right plot is a zoom-in (with additional lines) of the left one.}
    \label{fig:min grad norm with r}
\end{figure}
Notice that, \eqref{eq:min grad norm with r} holds for any $r>0$. However, we have to consider an envelope curve over $r$ in \eqref{eq:min grad norm with r} to reduce the upper bound: with AUTO-V clipping ($\gamma=0$), the upper bound in \eqref{eq:min grad norm with r} is always larger than $\xi$ as $r<1$; we must use AUTO-S clipping ($\gamma>0$) to reduce the upper bound to zero, as can be seen from \Cref{fig:min grad norm with r}. In fact, larger $T$ needs larger $r$ to reduce the upper bound. 

All in all, we specifically focus on $r\geq 1$ and $\gamma>0$, which is the only scenario that \eqref{eq:min grad norm with r} can converge to zero. This scenario is also where we prove the second part of \Cref{thm:upper bounding grad norm without r}.

The second part of \Cref{thm:upper bounding grad norm without r} is the asymptotic convergence rate $O(T^{-1/4})$ of DP-SGD, only possible under $r\geq 1$ and $\gamma>0$.

By \eqref{eq:min grad norm with r} in \Cref{thm: convergence DPSGD AUTO}, our upper bound $\mathcal{G}$ from \Cref{thm:upper bounding grad norm without r} can be simplified to
$$\min_{r>0} \frac{\xi}{r}+(\mathcal{M}^{-1})_{ccv}\left(\frac{4}{\sqrt{T}}\sqrt{(\mathcal{L}_0-\mathcal{L}_*)L\left(1+\frac{\sigma^2 d}{B^2} \right)};r,\xi,\gamma\right)$$
where the function $\mathcal{M}^{-1}$ is explicitly defined in \eqref{eq:M inverse complex formula} and the subscript $ccv$ means the upper concave envelope. Clearly, as $T\to\infty$, $\mathcal{M}^{-1}(\frac{1}{\sqrt{T}})\to 0$. We will next show that the convergence rate of $\mathcal{M}^{-1}$ is indeed $O(\frac{1}{\sqrt{T}})$ and the minimization over $r$ makes the overall convergence rate $O(T^{-1/4})$.

Starting from \eqref{eq:M inverse complex formula}, we denote $x=\frac{4}{\sqrt{T}}\sqrt{(\mathcal{L}_0-\mathcal{L}_*)L\left(1+\frac{\sigma^2 d}{B^2} \right)}$ and write
\begin{align*}
\M^{-1}(x;r,\xi,\gamma)
&=
\frac{-\frac{\xi}{r}\gamma+(r^2-1)\frac{\xi}{r}x+r\gamma x+\gamma\sqrt{(\frac{\xi}{r})^2+2\xi x+2\gamma x+x^2}}{2\gamma-(r^2-1)x}
\\
&=\left(-\frac{\gamma\xi}{r}+(r^2-1)\frac{\xi}{r}x+r\gamma x+\gamma\sqrt{(\frac{\xi}{r})^2+2\xi x+2\gamma x+x^2}\right)
\\
&\quad\cdot\frac{1+\frac{r^2-1}{2\gamma}x+O(x^2)}{2\gamma}
\\
&=\frac{1}{2\gamma}\left(-\frac{\gamma\xi}{r}+(r^2-1)\frac{\xi}{r}x+r\gamma x+\frac{\gamma\xi}{r}\sqrt{1+\frac{2(\xi+\gamma)r^2 x}{\xi^2}+O(x^2)}\right)
\\
&\quad\cdot(1+\frac{r^2-1}{2\gamma}x+O(x^2))
\\
&=\frac{1}{2\gamma}\left(-\frac{\gamma\xi}{r}+(r^2-1)\frac{\xi}{r}x+r\gamma x+\frac{\gamma\xi}{r}\left(1+\frac{(\xi+\gamma)r^2 x}{\xi^2}+O(x^2)\right)\right)
\\
&\quad\cdot(1+\frac{r^2-1}{2\gamma}x+O(x^2))
\\
&=\frac{1}{2\gamma}\left((r^2-1)\frac{\xi}{r}x+r\gamma x+\frac{\gamma(\xi+\gamma)r x}{\xi}+O(x^2)\right)\cdot(1+\frac{r^2-1}{2\gamma}x+O(x^2))
\\
&=\frac{1}{2\gamma}\left((r^2-1)\frac{\xi}{r}+r\gamma +\frac{\gamma(\xi+\gamma)r }{\xi}\right)\cdot x+O(x^2)
\\
&=\frac{1}{2\gamma}\left(\frac{(\xi+\gamma)^2}{\xi}r-\frac{\xi}{r}\right)\cdot x+O(x^2)
\end{align*}

Since $\mathcal{M}^{-1}$ is asymptotically linear as $x\to 0$, we instead study
$$\min_{r>0} \frac{\xi}{r}+\mathcal{M}^{-1}\left(x;r,\xi,\gamma\right)\equiv\min_{r>0} \frac{\xi}{r}+\frac{1}{2\gamma}\left(\frac{(\xi+\gamma)^2}{\xi}r-\frac{\xi}{r}\right)\cdot x+O(x^2).$$

That is, ignoring the higher order term for the asymptotic analysis, the $\mathcal{M}^{-1}$ part converges as $O(x)=O(1/\sqrt{T})$, and we visualize this in \Cref{fig:overT}.

Although DP-SGD converges faster than SGD, the former converges to $\xi/r$ and the latter converges to 0. Thus, taking $\xi/r$ into consideration, the objective reduces to a hyperbola
$$\frac{\left(\xi(1-\frac{x}{2\gamma})\right)}{r}+\frac{x(\xi+\gamma)^2}{2\gamma\xi}\cdot r$$
whose minimum over $r$ is obviously $2\sqrt{\xi(1-\frac{x}{2\gamma})\frac{x(\xi+\gamma)^2}{2\gamma\xi}}=O(\sqrt{x})=O(T^{-1/4})$.
\end{proof}

To give more details about the upper bound in \eqref{eq:min grad norm without r}, we demonstrate its dependence on $\xi$ and $\gamma$ in \Cref{fig:G dependence on xi and gamma}.
\begin{figure}[!htb]
    \centering
    \includegraphics[width=0.4\linewidth]{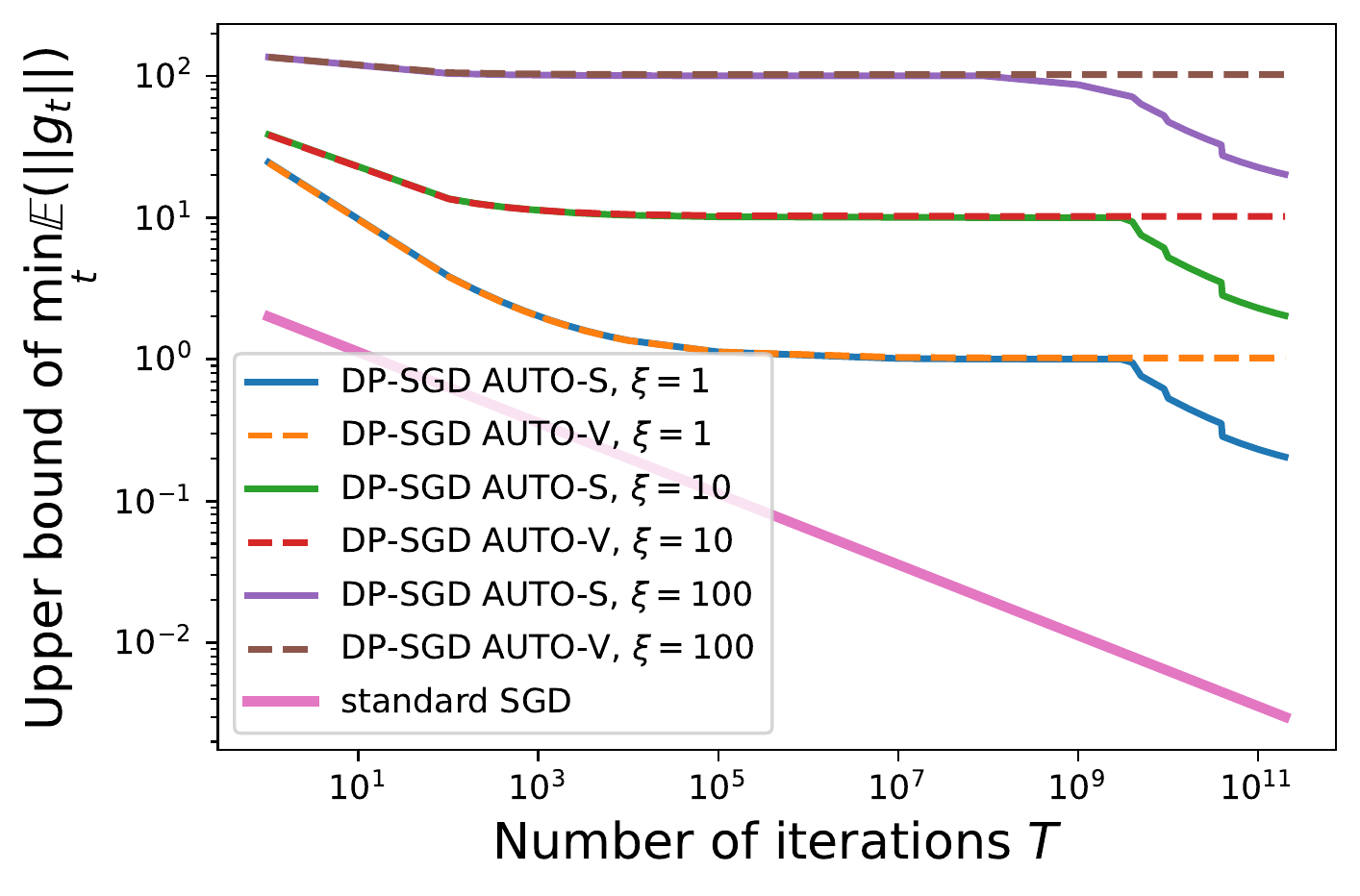}
    \includegraphics[width=0.4\linewidth]{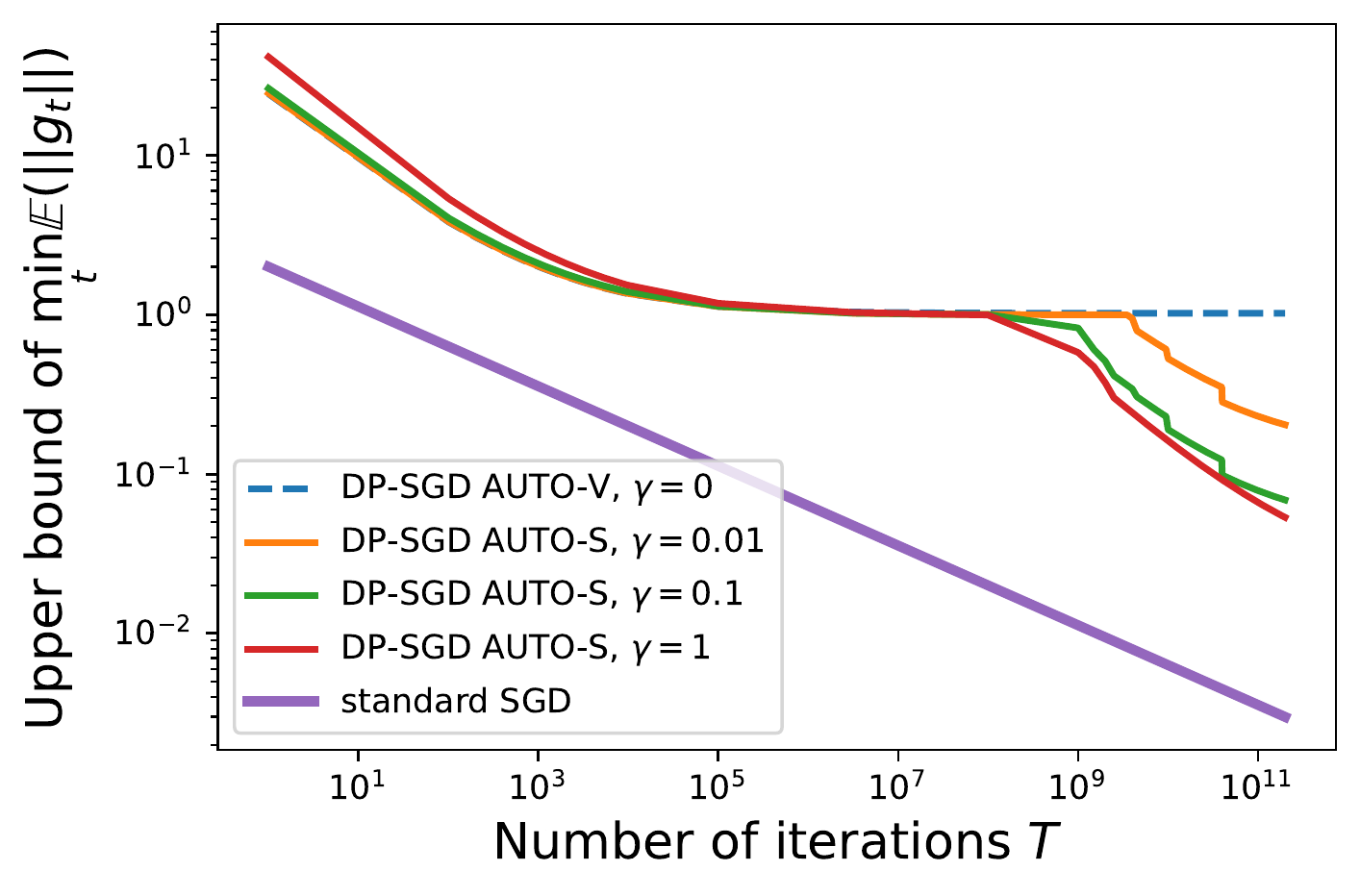}
    \caption{Dependence of the upper bound $\mathcal{G}$ on $\xi$ (left) and $\gamma$ (right). Here the $O(1/\sqrt{T})$ term is set to 10 and either $\gamma=0.01$ (left) or $\xi=1$ (right).}
    \label{fig:G dependence on xi and gamma}
\end{figure}

\begin{figure}[!htb]
    \centering
    \includegraphics[width=0.5\linewidth]{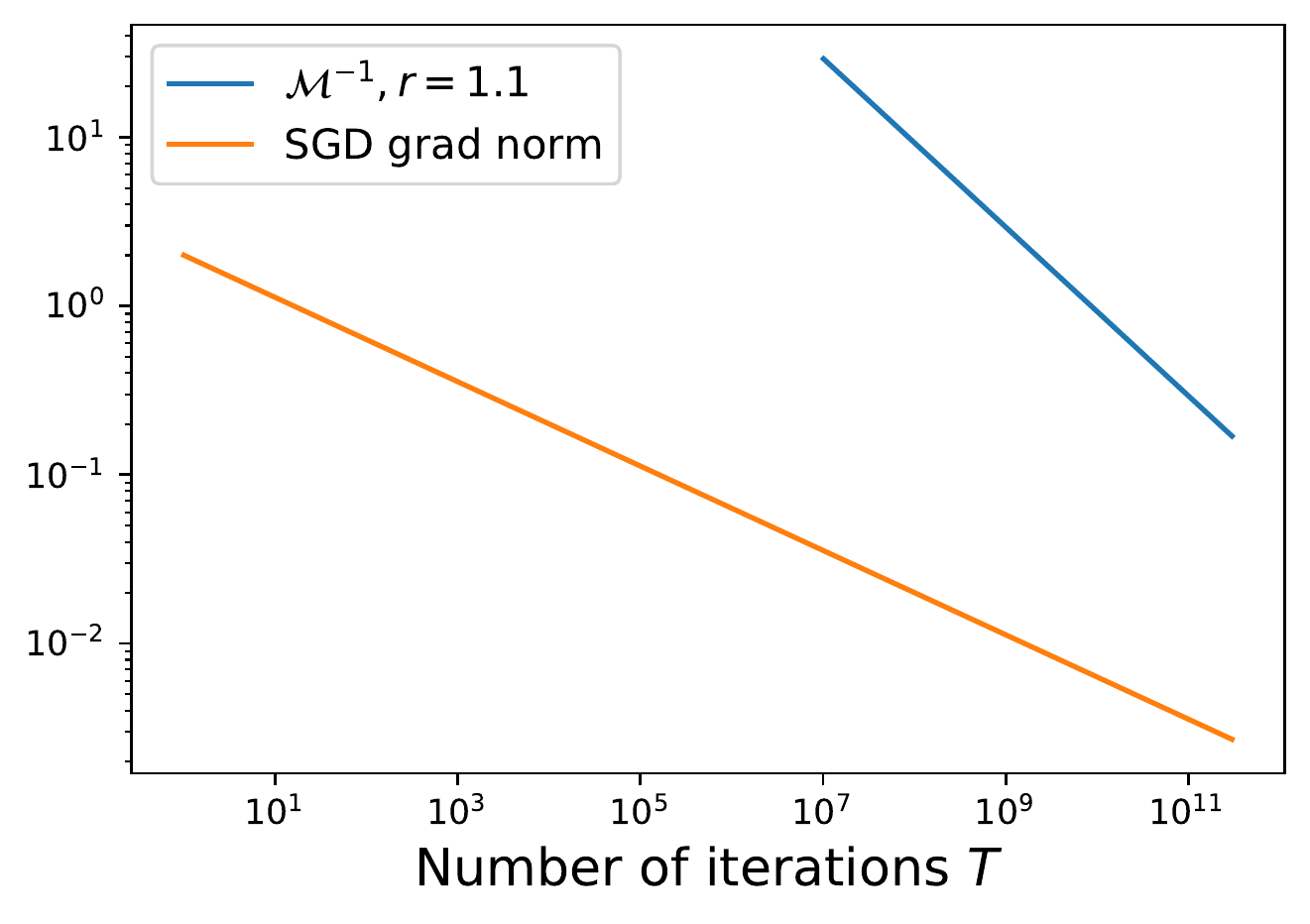}
    \caption{Convergence with respect to $T$. Same setting as \Cref{fig: G plots}.}
    \label{fig:overT}
\end{figure}

\subsection{Main proof of convergence for DP-SGD (the non-envelope version)}
\label{app: DPSGD convergence with AUTO(non-envelope)}

\begin{proof}[Proof of \Cref{thm: convergence DPSGD AUTO}]

Consider DP-SGD with AUTO-S clipping
$$\w_{t+1}=\w_{t}-\eta \left(\sum_i \frac{\tilde \g_{t,i}}{\|\tilde \g_{t,i}\|+\gamma}+\sigma \mathcal{N}(0,\I)\right)$$
where $\tilde \g_{t,i}$ is i.i.d. samples of $\tilde\g_t$, an unbiased estimate of $\g_t$, with a bounded variance as described in \Cref{assumption: tilde g}.

By Lipschitz smoothness in \Cref{assumption: Lipschitz}, and denoting $Z=\mathcal{N}(0,\I)$, we have
\begin{align}
\begin{split}
\mathcal{L}_{t+1}-\mathcal{L}_t
&\leq \g_t^\top (\w_{t+1}-\w_t)+\frac{L}{2}\|\w_{t+1}-\w_t\|^2
\\
&=-\eta \g_t^\top\left(\sum_i C_i\tilde \g_{t,i}+\sigma Z\right)+\frac{L\eta^2}{2}\left\|\sum_i C_i\tilde \g_{t,i}+\sigma Z\right\|^2
\\
&\leq -\eta   \g_t^\top\left(\sum_i C_i\tilde \g_{t,i}+\sigma Z\right)
+L\eta^2 \left(\left\|\sum_i C_i\tilde \g_{t,i}\right\|^2+\sigma^2 \|Z\|^2\right)
\\
&\leq -\eta   \g_t^\top\left(\sum_i C_i\tilde \g_{t,i}+\sigma Z\right)
+L\eta^2 \left(B^2+\sigma^2 \|Z\|^2\right)
\end{split}
\label{eq:lipschitz expand}
\end{align}
where the second last inequality follows from Cauchy Schwartz, and the last inequality follows from the fact that $\|C_i\tilde \g_{t,i}\|\leq 1$, e.g. $C_i$ is $\|\tilde \g_{t,i}/(\|\tilde \g_{t,i}\|+\gamma)\|$ or the re-parameterized clipping in \cite{de2022unlocking}. 

Notice that in the last equality, the first term (ignoring $\g_t^\top Z$ for its zero expectation) can be written in the same form as \eqref{eq:clipping max}, which supports our motivation in \Cref{sec:normal motivation}; the second term is independent of clipping functions. Note that the last inequality is tight if and only if $C_i=1$. This empirically holds in \Cref{app:freq of clipping}, especially for GPT2.

Given the fact that $\|\tilde \g_{t,i}/(\|\tilde \g_{t,i}\|+\gamma)\|\leq 1$, the expected improvement at one iteration is
\begin{align}
\begin{split}
\E (\mathcal{L}_{t+1}-\mathcal{L}_t|\w_t)
&\leq -\eta   \g_t^\top\E\left(\sum_i\frac{ \tilde \g_{t,i}}{\|\tilde \g_{t,i}\|+\gamma}\right)+L\eta^2 \left(B^2+\sigma^2 d \right)
\\
&=-\eta B\g_t^\top\E\left(\frac{\tilde \g_{t}}{\|\tilde \g_{t}\|+\gamma}\right)+L\eta^2 \left(B^2+\sigma^2 d \right)
\end{split}
\label{eq:expected improvement}
\end{align}

Now we want to lower bound $\g_t^\top\E\left(\frac{\tilde \g_{t}}{\|\tilde \g_{t}\|+\gamma}\right)$ in \eqref{eq:expected improvement}.

Write $\tilde \g_{t}=\g_t+\Delta_t$ where the gradient noise $\Delta_t$ follows $\E\Delta_t=0, \E\|\Delta_t\|<\xi$ by
\Cref{assumption: tilde g}. Then 
\begin{align*}
\g_t^\top\E\left(\frac{\tilde \g_{t}}{\|\tilde \g_{t}\|+\gamma}\right)
&= \E\left(\frac{\|\g_t\|^2
+\g_t^\top \Delta_{t}}{\|\g_{t}+\Delta_t\|+\gamma}\right)
\\
&=\frac{1}{2}\E\left(\frac{\|\g_t\|^2
+\g_t^\top \Delta_{t}}{\|\g_{t}+\Delta_t\|+\gamma}\Big|\Delta_t\in H_+\right)+\frac{1}{2}\E\left(\frac{\|\g_t\|^2
+\g_t^\top \Delta_{t}}{\|\g_{t}+\Delta_t\|+\gamma}\Big|\Delta_t\in H_-\right)
\\
&=\frac{1}{2}\E\left(\frac{\|\g_t\|^2
+\g_t^\top \Delta_{t}}{\|\g_{t}+\Delta_t\|+\gamma}\Big|\Delta_t\in H_+\right)+\frac{1}{2}\E\left(\frac{\|\g_t\|^2
-\g_t^\top \Delta_{t}}{\|\g_{t}-\Delta_t\|+\gamma}\Big|\Delta_t\in H_+\right)
\end{align*}
where we use the hyperplane perpendicular to $\g_t$ to divide the support of $\Delta_t$ into two half-spaces:
\begin{align*}
H_+:=\{\v: \g_t^\top \v>0\},
\quad
H_-:=\{\v: \g_t^\top \v<0\}.
\end{align*}
We use the symmetry assumption in \Cref{assumption: tilde g} to get
$$\P(\Delta_t\in H_+)=\P(\Delta_t\in H_-)=\frac{1}{2}$$
and notice that
$\Delta_t\overset{D}{=}-\Delta_t,$
i.e., if $\Delta_t\in H_+$, then $-\Delta_t\in H_-$ with the same distribution.

The next result further gives a lower bound for $\g_t^\top\E\left(\frac{\tilde \g_{t}}{\|\tilde \g_{t}\|+\gamma}\right)$ using $\|\g_t\|$.

\begin{lemma}
\label{lem:magic inequality2}
$$\E\left(\frac{\|\g_t\|^2
+\g_t^\top \Delta_{t}}{\|\g_{t}+\Delta_t\|+\gamma}+\frac{\|\g_t\|^2
-\g_t^\top \Delta_{t}}{\|\g_{t}-\Delta_t\|+\gamma}\Big|\Delta_t\in H_+\right)\geq \min_{0<c\leq 1} f(c,r;\frac{\gamma}{\|\g_t\|})\cdot(\|\g_t\|-\xi/r)$$
for any $r>0$ and $f(c,r;\Gamma)= \frac{(1+rc)}{\sqrt{r^2+2rc+1}+\Gamma}+\frac{(1-rc)}{\sqrt{r^2-2rc+1}+\Gamma}$.
\end{lemma}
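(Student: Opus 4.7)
The plan is to parametrize the noise $\Delta_t$ by its magnitude $\rho := \|\Delta_t\|$ and its (positive) cosine with $\g_t$, $\mu := \g_t^\top \Delta_t/(G\rho)\in(0,1]$, where $G:=\|\g_t\|$. With normalized variables $s:=\rho/G$ and $\Gamma:=\gamma/G$, a direct algebraic rearrangement gives
\begin{align*}
\frac{\|\g_t\|^2 + \g_t^\top\Delta_t}{\|\g_t+\Delta_t\|+\gamma}+\frac{\|\g_t\|^2 - \g_t^\top\Delta_t}{\|\g_t-\Delta_t\|+\gamma}=G\cdot h(s,\mu;\Gamma),
\end{align*}
where $h(s,\mu;\Gamma):=\frac{1+s\mu}{\sqrt{1+2s\mu+s^2}+\Gamma}+\frac{1-s\mu}{\sqrt{1-2s\mu+s^2}+\Gamma}$. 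The crucial observation is that $h$ and $f$ coincide up to a swap of arguments: $f(c,r;\Gamma)=h(r,c;\Gamma)$. So the goal reduces to lower-bounding $G\,\E[h(s,\mu;\Gamma)\mid H_+]$ in terms of $h$ at the ``reference'' configuration $(r,c)$.

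The proof rests on two structural properties of $h$ plus one Markov tail bound. \textbf{Property (i):} $h(s,\mu;\Gamma)\geq 0$ for $s\geq 0$, $\mu\in[0,1]$, $\Gamma\geq 0$. If $s\mu\leq 1$ both summands are nonnegative; if $s\mu>1$, setting $D_\pm:=\sqrt{1\pm 2s\mu+s^2}$ and clearing positive denominators reduces the claim to $(D_++D_-)^2+2\Gamma(D_++D_-)\geq 4s^2\mu^2$, which follows from $(D_++D_-)^2=2(1+s^2)+2\sqrt{(1+s^2)^2-4s^2\mu^2}$ and $\mu\leq 1$. \textbf{Property (ii):} $h$ is nonincreasing in $s$. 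A routine differentiation gives
\begin{align*}
\partial_s h(s,\mu;\Gamma)=-s(1-\mu^2)\Bigl[\tfrac{1}{D_+(D_++\Gamma)^2}+\tfrac{1}{D_-(D_-+\Gamma)^2}\Bigr]+\mu\Gamma\Bigl[\tfrac{1}{(D_++\Gamma)^2}-\tfrac{1}{(D_-+\Gamma)^2}\Bigr],
\end{align*}
and both bracketed terms contribute nonpositively (the first because $-s(1-\mu^2)\leq 0$ multiplies a positive bracket; the second because $D_+\geq D_-$ renders the bracket $\leq 0$ while $\mu\Gamma\geq 0$). Hence on $\{\rho\leq rG\}$ (i.e.\ $s\leq r$),
\begin{align*}
h(s,\mu;\Gamma)\geq h(r,\mu;\Gamma)\geq \min_{c\in(0,1]} h(r,c;\Gamma)=\min_{c\in(0,1]} f(c,r;\Gamma)=:m.
\end{align*}

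For the tail bound, the central symmetry of $\Delta_t$ in \Cref{assumption: tilde g} implies that the conditional law of $\rho$ given $H_+$ equals its unconditional law (the map $\Delta_t\mapsto-\Delta_t$ swaps $H_\pm$ but preserves $\rho$). Jensen yields $\E[\rho]\leq\sqrt{\E\|\Delta_t\|^2}\leq\xi$, so Markov gives $\P(\rho>rG\mid H_+)\leq\xi/(rG)$. Combining the pointwise bounds $h\geq 0$ everywhere and $h\geq m$ on the good event,
\begin{align*}
G\,\E[h(s,\mu;\Gamma)\mid H_+]\geq G\cdot m\cdot \P(\rho\leq rG\mid H_+)\geq m\bigl(G-\xi/r\bigr),
\end{align*}
which is exactly the stated inequality.

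The step I expect to be the most delicate is the monotonicity (ii), especially at the degenerate configuration $\mu=1$ where $D_-=|1-s|$ has a corner at $s=1$, and in the large-$s$ regime where the second summand of $h$ becomes negative. A careful case split around these points, or an alternative derivation pairing the $(s,\mu)$ and $(r,c)$ configurations via a convexity argument, may be needed to render the monotonicity rigorous across all regimes. Property (i) and the Markov step are comparatively routine.
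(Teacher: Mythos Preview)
Your argument is correct and follows the same architecture as the paper's proof: the same $(s,\mu,\Gamma)$ parametrization, the same split into the events $\{\|\Delta_t\|\le rG\}$ and $\{\|\Delta_t\|>rG\}$, nonnegativity of $h$ on the bad event, monotonicity of $h$ in $s$ on the good event, and Markov's inequality (via symmetry) for the probability. So at the structural level there is nothing to add.

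Where your write-up genuinely differs is in the proof of the monotonicity Property~(ii). The paper establishes the same fact (its Theorem~E.1, item~1) by clearing denominators via a WolframAlpha computation into a quadratic $A\Gamma^2+B\Gamma+C$ and then spends a separate technical lemma (Lemma~E.2) on the heavy algebra needed to show $A>0$. Your grouping
\[
\partial_s h \;=\; -s(1-\mu^2)\Bigl[\tfrac{1}{D_+(D_++\Gamma)^2}+\tfrac{1}{D_-(D_-+\Gamma)^2}\Bigr]\;+\;\mu\Gamma\Bigl[\tfrac{1}{(D_++\Gamma)^2}-\tfrac{1}{(D_-+\Gamma)^2}\Bigr]
\]
into two manifestly nonpositive pieces is considerably more elementary and avoids that machinery entirely. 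This is a real simplification over the paper.

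Your caveat about the ``delicate'' point at $\mu=1$ is unwarranted. At $\mu=1$ the first bracketed term vanishes identically (the coefficient $1-\mu^2$ is zero), so the formula reduces to the second term alone, which is $\le 0$ since $D_+\ge D_-$; the apparent $1/D_-$ singularity at $s=1$ is multiplied by zero. Alternatively, $h(s,1;\Gamma)=\frac{1+s}{1+s+\Gamma}+\frac{1-s}{|1-s|+\Gamma}$ can be checked directly to be continuous and nonincreasing across $s=1$. Likewise, the large-$s$ regime poses no issue: your sign analysis of the two groups is valid for all $s>0$ and $\mu\in[0,1]$, regardless of whether $1-s\mu$ is negative. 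In short, Property~(ii) is already rigorous as you wrote it; no further case split is needed.
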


For the simplicity of notation, we denote the distance measure
\begin{align}
\M(\|\g_t\|-\xi/r;r,\xi,\gamma)=\min_{0<c\leq 1} f\left(c,r;\frac{\gamma}{\|\g_t\|}\right)\cdot(\|\g_t\|-\xi/r)
\label{eq:final distance}
\end{align}
and leave the fine-grained analysis (e.g. its explicit form in some scenarios) at the end of this section.

Using the lower bound from \Cref{lem:magic inequality2}, the expected improvement \eqref{eq:expected improvement} becomes
\begin{align*}
\E (\mathcal{L}_{t+1}-\mathcal{L}_t|\w_t)
&\leq -\frac{\eta B }{2} \M(\|\g_t\|-\xi/r)+L\eta^2 B^2\left(1+\frac{\sigma^2 d}{B^2} \right)
\end{align*}

Now extend the expectation over randomness in the trajectory, and perform a telescoping sum over the iterations
\begin{align*}
\mathcal{L}_0-\mathcal{L}_*
&\geq \mathcal{L}_0-\E \mathcal{L}_T=\sum_t \E(\mathcal{L}_t-\mathcal{L}_{t+1})
\\
&\geq \frac{\eta B }{2}\E\left(\sum_t\M(\|\g_t\|-\xi/r)\right)-TL\eta^2 B^2\left(1+\frac{\sigma^2 d}{B^2} \right)
\end{align*}

Substituting $\eta B =\eta_\text{0}/\sqrt{T}$ where $\eta_\text{0}$ is a base learning rate, we have
\begin{align*}
2(\mathcal{L}_0-\mathcal{L}_*)
\geq \sqrt{T}\eta_\text{0} \E\left(\frac{1}{T}\sum_t\M(\|\g_t\|-\xi/r)\right)-2L\eta^2_\text{0}\left(1+\frac{\sigma^2 d}{B^2} \right)
\end{align*}
and finally
\begin{align}
\E\left(\frac{1}{T}\sum_t\M(\|\g_t\|-\xi/r)\right)
\leq \frac{1}{\sqrt{T}}\left[\frac{2(\mathcal{L}_0-\mathcal{L}_*)}{\eta_\text{0}}
+2L\eta_\text{0} \left(1+\frac{\sigma^2 d}{B^2} \right)\right]
\label{eq:DP-SGD hyperbola lr}
\end{align}
With $\eta_\text{0}$ chosen properly at $\eta_0=\sqrt{\frac{\mathcal{L}_0-\mathcal{L}_*}{L \left(1+\frac{\sigma^2 d}{B^2} \right)}}$, the hyperbola on the right hand side in \eqref{eq:DP-SGD hyperbola lr} is minimized to $4\sqrt{(\mathcal{L}_0-\mathcal{L}_*)L\left(1+\frac{\sigma^2 d}{B^2} \right)}$, and we obtain
\begin{align*}
\E\left(\frac{1}{T}\sum_t\M(\|\g_t\|-\xi/r)\right)
\leq \frac{4}{\sqrt{T}}\sqrt{(\mathcal{L}_0-\mathcal{L}_*)L\left(1+\frac{\sigma^2 d}{B^2} \right)}
\end{align*}

Since the minimum of a sequence is smaller than the average, we have
\begin{align}
\min_t \E(\M(\|\g_t\|-\xi/r))\leq\frac{1}{T}\sum_t\E\left(\M(\|\g_t\|-\xi/r)\right)
\leq \frac{4}{\sqrt{T}}\sqrt{(\mathcal{L}_0-\mathcal{L}_*)L\left(1+\frac{\sigma^2 d}{B^2} \right)}
\label{eq:ready concave}
\end{align}

We claim that $\M$ may not be concave or convex. Therefore we use $\M_{cvx}$ to denote its lower convex envelope, i.e. the largest convex function that is smaller than $\M$. Then by Jensen's inequality \eqref{eq:ready concave} becomes
\begin{align}
\min_t \M_{cvx}(\E(\|\g_t\|-\xi/r))\leq\min_t \E(\M_{cvx}(\|\g_t\|-\xi/r))\leq \frac{4}{\sqrt{T}}\sqrt{(\mathcal{L}_0-\mathcal{L}_*)L\left(1+\frac{\sigma^2 d}{B^2} \right)}
\label{eq:concave}
\end{align}

It is obvious that $\M_{cvx}$ is increasing as $\M$ is increasing by \Cref{thm: min f explicit}. Hence, $(\M_{cvx})^{-1}$ is also increasing, as the inverse of $\M_{cvx}$. We write \eqref{eq:concave} as
\begin{align*}
\min_t \E(\|\g_t\|-\xi/r)\leq (\M_{cvx})^{-1}\left(\frac{4}{\sqrt{T}}\sqrt{(\mathcal{L}_0-\mathcal{L}_*)L\left(1+\frac{\sigma^2 d}{B^2} \right)}\right)
\end{align*}
and equivalently
\begin{align}
\min_t \E(\|\g_t\|)\leq \frac{\xi}{r}+(\M_{cvx})^{-1}\left(\frac{4}{\sqrt{T}}\sqrt{(\mathcal{L}_0-\mathcal{L}_*)L\left(1+\frac{\sigma^2 d}{B^2} \right)}\right)
\label{eq:concave inverse}
\end{align}

Finally, we derive the explicit properties of $\M(\|\g_t\|-\xi/r)$ in \Cref{thm: min f explicit}. These properties allow us to further analyze on the convergence of $\M(\|\g_t\|-\xi/r)$, based on AUTO-V and AUTO-S, respectively.

\paragraph{1. DP-SGD with AUTO-V clipping.}
By \Cref{thm: min f explicit}, we write 
$$\M(x;r)=\min_{c\in(0,1]} f(c,r;0)\cdot x$$
This is a linear function and thus $\M_{cvx}=\M=1/\M_{cvx}^{-1}$. As a result, we have
\begin{align*}
\min_t \E(\|\g_t\|)
\leq \frac{\xi}{r}+\frac{1}{\min_{c\in(0,1]} f(c,r;0)}\cdot\frac{4}{\sqrt{T}}\sqrt{(\mathcal{L}_0-\mathcal{L}_*)L\left(1+\frac{\sigma^2 d}{B^2} \right)}
\end{align*}

We note here $r$ plays an important role under AUTO-V clipping: when $r<1$, we spend more iterations to converge to better and smaller gradient norm $\xi/r$; when $r\geq 1$, $\min_c f(c,r;0)=f(1,r;0)=0$ and it takes forever to converge. This is demonstrated in the left plot of \Cref{fig: G plots}.

\paragraph{2. DP-SGD with AUTO-S clipping.}
By \Cref{thm: min f explicit} and for $r>1$, we write $$\M(x;r,\xi,\gamma)=\left(\frac{\gamma}{(r-1)(x+\xi/r)+\gamma}-\frac{\gamma}{(r+1)(x+\xi/r)+\gamma}\right)\cdot x.$$

Notice that the inverse of a lower convex envelope is equivalent to the upper concave envelope (denoted by the subscript $ccv$) of an inverse. Therefore we can derive $(\M_{cvx})^{-1}=(\M^{-1})_{ccv}$ with the explicit form
\begin{align}
\M^{-1}(x;r,\xi,\gamma)=\frac{-\frac{\xi}{r}\gamma+(r^2-1)\frac{\xi}{r}x+r\gamma x+\gamma\sqrt{(\frac{\xi}{r})^2+2\xi x+2\gamma x+x^2}}{2\gamma-(r^2-1)x}.
\label{eq:M inverse complex formula}
\end{align}
we can derive it based on $r,\xi,\gamma$ and substitute back to \eqref{eq:concave inverse}.

Note that the domain of $\mathcal{M}^{-1}$ (or the image of $\mathcal{M}$) is $[0,\frac{\gamma}{r-1}-\frac{\gamma}{r+1})$.

In comparison to the AUTO-V clipping, $\mathcal{M}^{-1}$ takes a much more complicated form, as depicted in the middle plot of \Cref{fig: G plots}, where $r>1$ plays an important role for the gradient norm to converge to zero.
\end{proof}

\subsection{Proof of \Cref{lem:magic inequality2}}
\begin{proof}[Proof of \Cref{lem:magic inequality2}]
We want to lower bound
\begin{align}
\E\left(\frac{\|\g_t\|^2
+\g_t^\top \Delta_{t}}{\|\g_{t}+\Delta_t\|+\gamma}+\frac{\|\g_t\|^2
-\g_t^\top \Delta_{t}}{\|\g_{t}-\Delta_t\|+\gamma}\Big|\Delta_t\in H_+\right)
\label{eq:inside3}
\end{align}

To simplify the notation, we denote noise-to-signal ratio $S:=\frac{\|\Delta_t\|}{\|\g_t\|}$ and $c:=\cos\theta=\frac{\g_t^\top \Delta_t}{\|\g_t\| \|\Delta_t\|}$, with $\theta$ be the random angle between $\g_t$ and $\Delta_t$. Note that $0< c\leq 1$ when $\Delta_t\in H_+$. 

The term inside the conditional expectation in \eqref{eq:inside3} can be written as
\begin{align*}
&\frac{(1+Sc)\|\g_t\|^2}{\sqrt{S^2+2Sc+1}\|\g_t\|+\gamma}+\frac{(1-Sc)\|\g_t\|^2}{\sqrt{S^2-2Sc+1}\|\g_t\|+\gamma}
\\
=&\|\g_t\|\left(\frac{(1+Sc)}{\sqrt{S^2+2Sc+1}+\gamma/\|\g_t\|}+\frac{(1-Sc)}{\sqrt{S^2-2Sc+1}+\gamma/\|\g_t\|}\right)
\end{align*}

Defining $\Gamma=\gamma/\|\g_t\|$ and
\begin{align}
f(c,S;\Gamma):=\frac{(1+Sc)}{\sqrt{S^2+2Sc+1}+\Gamma}+\frac{(1-Sc)}{\sqrt{S^2-2Sc+1}+\Gamma},
\label{eq: f formula}
\end{align}
we turn the conditional expectation in \eqref{eq:inside3} into
\begin{align}
\E\left(\frac{\|\g_t\|^2
+\g_t^\top \Delta_{t}}{\|\g_{t}+\Delta_t\|+\gamma}+\frac{\|\g_t\|^2
-\g_t^\top \Delta_{t}}{\|\g_{t}-\Delta_t\|+\gamma}\Big|\Delta_t\in H_+\right)=\|\g_t\|\E(f(c,S;\Gamma)|\Delta_t\in H_+)
\label{eq: |g|E3}    
\end{align}
for which we want to lower bound $f(c,S;\Gamma)$ over $0<c\leq 1,S>0, \Gamma>0$. We use the next theorem to prepare some helpful properties. The proof can be found in \Cref{app:thm f proof}.

\begin{theorem}
\label{thm: min f}
For $f$ defined in \eqref{eq: f formula}, we have
\begin{enumerate}
    \item $f(c,S;\Gamma)$ is strictly decreasing in $S$ for all $0< c< 1$ and $\Gamma>0$.
    \item Consequently, $\min_{c\in (0,1)} f(c,S;\Gamma)$ is strictly decreasing in $S$.
    \item $f(c,S;\Gamma)$ is strictly decreasing in $c$ for all $S> 1$ and $\Gamma>0$.
    \end{enumerate}
\end{theorem}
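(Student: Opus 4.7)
The plan is to prove each of the three claims by explicit differentiation of $f$, followed by algebraic reduction using a small set of identities. Let $p = 1+Sc$, $q = 1-Sc$, $u = p^2 + S^2(1-c^2) = S^2+2Sc+1$, and $v = q^2 + S^2(1-c^2) = S^2-2Sc+1$. The key identities, verified by direct expansion, are
\begin{align*}
u-v=4Sc, \quad u-p=S(S+c), \quad v-q=S(S-c),
\end{align*}
together with
\begin{align*}
(S-c)u-(S+c)v=2c(S^2-1), \quad (S-c)^2u-(S+c)^2v=-4Sc(1-c^2).
\end{align*}
The hypothesis $S>1$ in part 3 will enter precisely to ensure the first of these last two identities is nonnegative.

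For part 1, I would differentiate $f$ in $S$ using $\partial\sqrt{u}/\partial S=(S+c)/\sqrt{u}$ and $\partial\sqrt{v}/\partial S=(S-c)/\sqrt{v}$. Expanding and simplifying via $cu-(1+Sc)(S+c)=-S(1-c^2)$ and its analog for $v$, the two term derivatives combine into
\begin{align*}
\frac{\partial f}{\partial S}= c\Gamma\left[\frac{1}{(\sqrt{u}+\Gamma)^2}-\frac{1}{(\sqrt{v}+\Gamma)^2}\right] - S(1-c^2)\left[\frac{1}{\sqrt{u}(\sqrt{u}+\Gamma)^2}+\frac{1}{\sqrt{v}(\sqrt{v}+\Gamma)^2}\right].
\end{align*}
For $c\in(0,1)$ and $\Gamma>0$, the first bracket is strictly negative because $u>v$ makes $(\sqrt{u}+\Gamma)^2>(\sqrt{v}+\Gamma)^2$, with positive prefactor $c\Gamma$; the second bracket is positive and its prefactor $S(1-c^2)$ is strictly positive. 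Both summands are thus negative, so $\partial f/\partial S<0$. Part 2 is then immediate by a minimax argument: using continuity of $f$ on $[0,1]$ to produce a minimizer $c^*$ at $S_1$, we obtain $\min_c f(c,S_2;\Gamma)\leq f(c^*,S_2;\Gamma)<f(c^*,S_1;\Gamma)=\min_c f(c,S_1;\Gamma)$, where the strict middle step uses part 1 (extending to the boundary cases $c\in\{0,1\}$ by direct inspection, which are readily seen to be strictly decreasing in $S$ as well).

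Part 3 is where the real work lies. Differentiating in $c$ with $\partial u/\partial c=2S$ and $\partial v/\partial c=-2S$, I would collect terms as
\begin{align*}
\frac{\partial f}{\partial c}=S\bigl[G(u,p)-G(v,q)\bigr], \quad G(y,t):=\frac{1}{\sqrt{y}+\Gamma}-\frac{t}{\sqrt{y}(\sqrt{y}+\Gamma)^2},
\end{align*}
and then use $u-p=S(S+c)$, $v-q=S(S-c)$ to rewrite $G(u,p)=[S(S+c)+\Gamma\sqrt{u}]/[\sqrt{u}(\sqrt{u}+\Gamma)^2]$ and similarly for $G(v,q)$. Cross-multiplying the desired inequality $G(u,p)<G(v,q)$ by the common positive denominator and expanding as a polynomial in $\Gamma$ yields $E=E_0+\Gamma E_1+\Gamma^2 E_2$, and I would show each $E_i\geq 0$ for $S\geq 1$, $c\in(0,1)$. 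The term $E_0=S[(S-c)u^{3/2}-(S+c)v^{3/2}]$, after squaring (both sides positive), reduces to $(S^2-c^2)^2(2S^2+3-c^2)\geq (1-c^2)^3$, which is clear for $S\geq 1$ since each factor on the left dominates the corresponding factor on the right. The term $E_1=4Sc[(S^2-1)+\sqrt{uv}]$ is immediately nonnegative by the fourth identity.

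The main obstacle is $E_2$, which initially presents as $S[(S-c)\sqrt{u}-(S+c)\sqrt{v}]+2\sqrt{uv}(\sqrt{u}-\sqrt{v})$, a sum of two pieces with opposite signs (the first is negative by the fifth identity, the second is positive). I would collect over the $\sqrt{u},\sqrt{v}$ basis using $\sqrt{uv}\cdot\sqrt{u}=u\sqrt{v}$ and $\sqrt{uv}\cdot\sqrt{v}=v\sqrt{u}$, then substitute $\sqrt{u}-\sqrt{v}=4Sc/(\sqrt{u}+\sqrt{v})$ and expand $(\sqrt{u}+\sqrt{v})^2=u+v+2\sqrt{uv}=2S^2+2+2\sqrt{uv}$. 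After cancellation this compresses into
\begin{align*}
E_2=\frac{2Sc\bigl[(S^2-1)+3\sqrt{uv}\bigr]}{\sqrt{u}+\sqrt{v}},
\end{align*}
which is manifestly positive when $S\geq 1$ and $c>0$. Combining the three bounds yields $E>0$, hence $G(u,p)<G(v,q)$, hence $\partial f/\partial c<0$, completing part 3.
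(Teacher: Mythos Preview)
Your argument is correct, and its overall strategy---compute $\partial f/\partial S$ and $\partial f/\partial c$, then show each is negative by decomposing into sign-definite pieces---is the same as the paper's. The execution, however, is considerably cleaner than the paper's in two places.

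For part 1, the paper expresses $\partial f/\partial S$ as $-(A\Gamma^{2}+B\Gamma+C)$ over a positive denominator and then proves $A,B,C>0$ separately, with $A>0$ requiring a dedicated lemma (proof by contradiction followed by a quartic reduction). Your regrouping into
\[
c\Gamma\Bigl[\tfrac{1}{(\sqrt{u}+\Gamma)^{2}}-\tfrac{1}{(\sqrt{v}+\Gamma)^{2}}\Bigr]\;-\;S(1-c^{2})\Bigl[\tfrac{1}{\sqrt{u}(\sqrt{u}+\Gamma)^{2}}+\tfrac{1}{\sqrt{v}(\sqrt{v}+\Gamma)^{2}}\Bigr]
\]
sidesteps all of that: both summands are visibly negative once $u>v$ is noted. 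This is a genuinely better organization of the algebra.

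For part 3, both you and the paper reduce to positivity of the $\Gamma^{0},\Gamma^{1},\Gamma^{2}$ coefficients (your $E_{0},E_{1},E_{2}$ are exactly the paper's $C',B',A'$). For the tricky coefficient $E_{2}=A'=(S^{2}+3Sc+2)\sqrt{v}-(S^{2}-3Sc+2)\sqrt{u}$, the paper squares and argues by contradiction via another quartic; your rationalization trick---multiplying by $\sqrt{u}+\sqrt{v}$ and using $u-v=4Sc$, $u+v=2S^{2}+2$---yields the closed form $E_{2}=\dfrac{2Sc[(S^{2}-1)+3\sqrt{uv}]}{\sqrt{u}+\sqrt{v}}$, from which positivity for $S\ge1$ is immediate. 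Your treatment of $E_{0}$ via squaring and the factorization $(S^{2}-c^{2})^{2}(2S^{2}+3-c^{2})\ge(1-c^{2})^{3}$ is also explicit where the paper is not. Part 2 is handled identically in both.
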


We consider a thresholding ratio $r> 0$ and we will focus on the regime that $S<r$. This $r$ will turn out to measure the minimum gradient norm at convergence: informally speaking, $\|\g_t\|$ converges to $\xi/r$.

By the law of total expectation, \eqref{eq: |g|E3} can be relaxed as follows.
\begin{align}
\begin{split}
&\|\g_t\|\E\left( f(c,S;\Gamma)\Big|\Delta\in H_+\right)
\\
=&\|\g_t\|\E\left( f(c,S;\Gamma)\Big|\Delta\in H_+,S<r\right)\P(r\|\g_t\|>\|\Delta\|\Big|\Delta\in H_+)
\\
&+\|\g_t\|\E\left( f(c,S;\Gamma)\Big|\Delta\in H_+,S>r\right)\P(r\|\g_t\|<\|\Delta\|\Big|\Delta\in H_+)
\\
\geq&\|\g_t\|\E\left( f(c,S;\Gamma)\Big|\Delta\in H_+,S<r\right)\P(r\|\g_t\|>\|\Delta\|\Big|\Delta\in H_+)
\\
\geq&\|\g_t\|\E\left( f(c,r;\Gamma)\Big|\Delta\in H_+,S<r\right)\P(r\|\g_t\|>\|\Delta\|\Big|\Delta\in H_+)
\\
=&\|\g_t\|\E\left( f(c,r;\Gamma)\Big|\Delta\in H_+,S<r\right)\P(r\|\g_t\|>\|\Delta\|)
\\
\geq&\min_{c\in(0,1]} f(c,r;\Gamma)\cdot\underbrace{\|\g_t\|\P(r\|\g_t\|>\|\Delta\|)}_{\textcircled{$\star$}}
\end{split}
\label{eq: EP2}
\end{align}
where in the first inequality, the ignoring of last term is justified by $f(c,S;\Gamma)\geq \min_{c\in(0,1]} f(c,S;\Gamma)\geq \min_{c\in(0,1]} f(c,\infty;\Gamma)=0$, from the monotonicity (second statement) in \Cref{thm: min f}.

We first lower bound \textcircled{$\star$} by applying the Markov's inequality:
$$\P(r\|\g_t\|>\|\Delta_t\|)\geq 1-\frac{\E\|\Delta_t\|}{r\|\g_t\|}$$
and hence by \Cref{assumption: tilde g},
$$\|\g_t\|\P(r\|\g_t\|>\|\Delta_t\|)\geq\|\g_t\|-\E\|\Delta\|/r\geq\|\g_t\|-\xi/r.$$

Finally, the conditional expectation of interest in \eqref{eq:inside3} gives
\begin{align*}
&\E\left(\frac{\|\g_t\|^2
+\g_t^\top \Delta_{t}}{\|\g_{t}+\Delta_t\|}+\frac{\|\g_t\|^2
-\g_t^\top \Delta_{t}}{\|\g_{t}-\Delta_t\|}\Big|\Delta_t\in H_+\right)
\geq \min_{0<c\leq 1} f(c,r;\frac{\gamma}{\|\g_t\|})\cdot(\|\g_t\|-\xi/r)
\end{align*}

\end{proof}

\subsection{Proof of \Cref{thm: min f explicit}}

To derive some properties of $\min_c f(c,r;\Gamma)$, we need to compute separately for AUTO-V (without the stability constant, $\Gamma=0$) and for AUTO-S (with the stability constant, $\Gamma>0$), as shown in \Cref{thm: min f explicit}. As we will show, as the number of training iterations $T\to\infty$, DP-SGD with AUTO-V clipping can only compress $\|\g_t\|$ to $\xi/r$ for $r<1$. However, DP-SGD with AUTO-S clipping can compress $\|\g_t\|$ to $\xi/r$ to any $r>1$.

\begin{theorem}
\label{thm: min f explicit}
\quad
\begin{enumerate}
    \item For $0<r<1$ and $\Gamma=0$, we have $\min_{c\in(0,1]} f(c,r;0)>0$. 
    Then \Cref{eq: |g|E3} is lower bounded by
    $$\min_{c\in(0,1]} f(c,r;0)\cdot(\|\g_t\|-\xi/r)$$
    which is increasing in $\|g\|-\xi/r$.
    \item For $r\geq 1$ and $\Gamma=0$, we have $\min_{c\in(0,1]} f(c,r;\Gamma)=f(1,r;0)=0$. In words, \eqref{eq:inside3} has a trivial lower bound and \Cref{thm: convergence DPSGD AUTO} cannot compress $\|\g_t\|$ to $\xi/r$.
    \item For $r\geq 1$ and $\Gamma>0$, we have $\min_{c\in(0,1]} f(c,r;\Gamma)=f(1,r;\Gamma)=\left(\frac{\Gamma}{r+\Gamma-1}-\frac{\Gamma}{r+\Gamma+1}\right)$. Then \Cref{eq: |g|E3} is lower bounded by
    $$\left(\frac{\gamma}{(r-1)\|\g_t\|+\gamma}-\frac{\gamma}{(r+1)\|\g_t\|+\gamma}\right)\cdot(\|\g_t\|-\xi/r)$$
    which is increasing in $\|\g_t\|-\xi/r$.
\end{enumerate}
\end{theorem}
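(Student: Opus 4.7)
The plan is to dispatch each of the three cases by direct computation at the candidate minimizer, supplemented by an algebraic identity (case 2) or the monotonicity supplied by Theorem~\ref{thm: min f} (case 3). Across all cases, once the minimum of $f(\cdot,r;\Gamma)$ over $c\in(0,1]$ is pinned down, monotonicity of the resulting lower bound in $\|\g_t\|-\xi/r$ follows either from linearity in that argument (case 1) or from a direct derivative check (case 3).

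For \textbf{case 1} ($0<r<1$, $\Gamma=0$), since $rc\leq r<1$, both summands of $f(c,r;0)$ have strictly positive numerator and denominator, so $f>0$ on $(0,1]$; extending continuously to the compact interval $[0,1]$ guarantees the infimum is attained and is strictly positive. For \textbf{case 2} ($r\geq 1$, $\Gamma=0$), first compute $f(1,r;0)=\tfrac{1+r}{r+1}+\tfrac{1-r}{r-1}=1-1=0$ for $r>1$; it then suffices to verify $f(c,r;0)\geq 0$ on $(0,1]$. When $rc\leq 1$ both summands are nonnegative; when $rc>1$, the inequality rearranges to $(1+rc)\sqrt{r^2-2rc+1}\geq(rc-1)\sqrt{r^2+2rc+1}$, both sides nonnegative so squaring is legitimate, and using the identities $(1+rc)^2-(rc-1)^2=4rc$ and $(1+rc)^2+(rc-1)^2=2(1+r^2c^2)$ the difference telescopes to
\begin{align*}
(1+rc)^2(r^2-2rc+1)-(rc-1)^2(r^2+2rc+1)=4r^3 c(1-c^2)\geq 0,
\end{align*}
with equality iff $c=1$, so $\min_c f(c,r;0)=0$. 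For \textbf{case 3} ($r\geq 1$, $\Gamma>0$), strict decrease of $f(c,r;\Gamma)$ in $c$ for $r>1$ (third statement of Theorem~\ref{thm: min f}) places the minimum at $c=1$, and a short calculation gives
\begin{align*}
f(1,r;\Gamma)=\frac{1+r}{r+1+\Gamma}+\frac{1-r}{r-1+\Gamma}=\frac{\Gamma}{r-1+\Gamma}-\frac{\Gamma}{r+1+\Gamma}.
\end{align*}
Substituting $\Gamma=\gamma/\|\g_t\|$ and multiplying by $(\|\g_t\|-\xi/r)$ yields the stated bound, which simplifies to $\tfrac{2\gamma\,\|\g_t\|(\|\g_t\|-\xi/r)}{((r-1)\|\g_t\|+\gamma)((r+1)\|\g_t\|+\gamma)}$ and is nondecreasing in $\|\g_t\|$ on $[\xi/r,\infty)$ via a one-line derivative check. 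The boundary $r=1$ is a removable $0/0$ corner handled by a limiting argument.

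The main obstacle is the sign control in case 2 when $rc>1$, where the two summands of $f(c,r;0)$ have opposite signs and cancellation must be bounded. The key miracle is that the squared-difference $(1+rc)^2(r^2-2rc+1)-(rc-1)^2(r^2+2rc+1)$ collapses to $4r^3 c(1-c^2)$, which is simultaneously nonnegative on $(0,1]$ and vanishes exactly at the proposed minimizer $c=1$. Case 1 is essentially compactness and case 3 is direct computation once the monotonicity of Theorem~\ref{thm: min f} is invoked, so the only genuine technical content lies in this algebraic identity.
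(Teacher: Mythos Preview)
Your argument is correct. For case~3 you proceed exactly as the paper does, invoking the monotonicity in $c$ from Theorem~\ref{thm: min f} and then evaluating at $c=1$. For cases~1 and~2, however, you take a more elementary route: the paper appeals to Theorem~\ref{thm: min f} throughout (its second statement for case~1, giving $\min_c f(c,r;0)>\min_c f(c,\infty;0)=0$, and its third statement for case~2, locating the minimum at $c=1$), whereas you handle case~1 by the manifest positivity of both summands together with compactness, and case~2 by the algebraic identity
\[
(1+rc)^2(r^2-2rc+1)-(rc-1)^2(r^2+2rc+1)=4r^3c(1-c^2).
\]
Your route is self-contained and arguably cleaner for case~2 in particular: the third statement of Theorem~\ref{thm: min f} is formulated only for $\Gamma>0$, so the paper's use of it when $\Gamma=0$ rests on an implicit extension, which your direct computation sidesteps. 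The cost is having to discover the identity, but as you note that is the only genuine work. Both approaches leave the boundary $r=1$ somewhat informal; this is a definitional corner (the second summand of $f(1,1;0)$ is $0/0$) rather than a mathematical obstacle, and is immaterial to the downstream use in Theorem~\ref{thm: convergence DPSGD AUTO}, where $r$ is a free parameter.
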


\begin{proof}
To prove statement 1, we use the second statement from \Cref{thm: min f} and show that $\min_c f(c,r;0)>\min_c f(c,\infty;0)=0$. To prove statement 2 and 3, we use the third statement from \Cref{thm: min f} and see that $\min_c f(c,r;\Gamma)=f(1,r;\Gamma)$ with an explicit formula.
\end{proof}

\section{Convergence rate of standard SGD}
\begin{theorem}
\label{thm: convergence nonDP SGD}
Under \Cref{assumption: lower bounding loss}, \ref{assumption: Lipschitz}, \ref{assumption: tilde g} (without the symmetry assumption), running the standard non-DP SGD for $T$ iterations gives, for $\eta\propto 1/\sqrt{T}$,
$$\min_t\E\left(\|\g_t\|\right)
\leq \frac{1}{T^{1/4}}\sqrt{2(\mathcal{L}_0-\mathcal{L}_*)L
+\frac{\xi^2}{B}}$$
\end{theorem}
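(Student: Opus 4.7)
The plan is to run the textbook nonconvex SGD argument, which is a streamlined version of the DP-SGD derivation in \Cref{app: DPSGD convergence with AUTO(non-envelope)}: without clipping, the alignment term stays equal to $\|\g_t\|^2$ rather than being shrunk through $\M$; without Gaussian noise, the second-moment term reduces to the minibatch variance; and the symmetry half of \Cref{assumption: tilde g} is not invoked at all. Concretely, I would first apply Lipschitz smoothness to the update $\w_{t+1}=\w_t-\eta\bar{\g}_t$, where $\bar{\g}_t:=\tfrac{1}{B}\sum_{i\in B_t}\tilde{\g}_{t,i}$ is the minibatch stochastic gradient, to get
\[
\mathcal{L}_{t+1}-\mathcal{L}_t \;\leq\; -\eta\,\g_t^\top\bar{\g}_t + \tfrac{L\eta^2}{2}\|\bar{\g}_t\|^2.
\]

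Next I would take the conditional expectation over the minibatch, using the two moment identities implied by \Cref{assumption: tilde g} alone (unbiasedness and bounded variance): $\E\bar{\g}_t=\g_t$ and $\E\|\bar{\g}_t\|^2=\|\g_t\|^2+\E\|\bar{\g}_t-\g_t\|^2\leq\|\g_t\|^2+\xi^2/B$. Restricting to $\eta\leq 1/L$ lets me absorb $\tfrac{L\eta^2}{2}\|\g_t\|^2$ into the negative drift, yielding $\E(\mathcal{L}_{t+1}-\mathcal{L}_t)\leq -\tfrac{\eta}{2}\|\g_t\|^2+\tfrac{L\eta^2\xi^2}{2B}$. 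Telescoping over $t=0,\ldots,T-1$, using the lower bound from \Cref{assumption: lower bounding loss}, and dividing by $T$ gives
\[
\min_{0\leq t<T}\E\|\g_t\|^2 \;\leq\; \tfrac{1}{T}\sum_{t=0}^{T-1}\E\|\g_t\|^2 \;\leq\; \tfrac{2(\mathcal{L}_0-\mathcal{L}_*)}{\eta T} + \tfrac{L\eta\xi^2}{B}.
\]

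To match the displayed prefactor I would then pick $\eta=1/(L\sqrt{T})$, which automatically satisfies $L\eta\leq 1$ for any $T\geq 1$ and collapses the two terms into a shared $1/\sqrt{T}$ order, so $\min_t\E\|\g_t\|^2\leq \tfrac{1}{\sqrt{T}}\bigl(2(\mathcal{L}_0-\mathcal{L}_*)L+\xi^2/B\bigr)$. A final application of Jensen's inequality $(\E\|\g_t\|)^2\leq\E\|\g_t\|^2$ followed by taking square roots produces the stated $T^{-1/4}$ bound.

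I do not anticipate any serious obstacle, since every step above is a standard nonconvex-SGD computation; the only mild care is that the constant in $\eta\propto 1/\sqrt{T}$ must be tied to $L$ so that the quadratic-in-$\eta$ term can be absorbed cleanly and the two summands balance to the exact prefactor $\sqrt{2(\mathcal{L}_0-\mathcal{L}_*)L+\xi^2/B}$ in the theorem. Any other $\eta=c/\sqrt{T}$ still yields the same $O(T^{-1/4})$ rate but with different constants, so the bound is robust to this choice.
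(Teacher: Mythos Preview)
Your proposal is correct and follows essentially the same approach as the paper: apply the smoothness descent lemma to the minibatch-averaged update, take conditional expectations using unbiasedness and the $\xi^2/B$ variance bound, telescope, plug in $\eta=1/(L\sqrt T)$, and finish with Jensen. The only cosmetic difference is that you use $\eta\le 1/L$ to replace $\eta-\tfrac{L\eta^2}{2}$ by $\eta/2$ up front, whereas the paper keeps the exact factor and simplifies via $\tfrac{2}{L\sqrt T}-\tfrac{1}{LT}\ge \tfrac{1}{L\sqrt T}$ after substituting $\eta$; both yield the identical constant.
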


\begin{proof}[Proof of \Cref{thm: convergence nonDP SGD}]
Consider the standard SGD
$$\w_{t+1}=\w_{t}-\eta \frac{\sum_i \tilde \g_{t,i}}{B}$$
where $\tilde \g_{t,i}$ is i.i.d. unbiased estimate of $\g_t$, with a bounded variance as described in \Cref{assumption: tilde g}.

By Lipschitz smoothness assumption in \Cref{assumption: Lipschitz},
\begin{align*}
\mathcal{L}_{t+1}-\mathcal{L}_t
&\leq \g_t^\top (\w_{t+1}-\w_t)+\frac{L}{2}\|\w_{t+1}-\w_t\|^2
=-\eta \g_t^\top\left(\sum_i\frac{1}{B}\tilde \g_{t,i}\right)+\frac{L\eta^2}{2}\left\|\sum_i \frac{1}{B}\tilde \g_{t,i}\right\|^2
\end{align*}

The expected improvement at one iteration is
\begin{align}
\begin{split}
\E (\mathcal{L}_{t+1}-\mathcal{L}_t|\w_t)
&\leq -\eta \g_t^\top\E\tilde \g_{t,i}+\frac{L\eta^2}{2}\E \|\sum_i\frac{1}{B}\tilde \g_{t,i}\|^2
\\
&\leq -\eta   \|\g_t\|^2+\frac{L\eta^2}{2}\left(\|\g_t\|^2+\frac{\xi^2}{B}\right)
\end{split}
\label{eq:expected improvement for nonDP}
\end{align}

Now we extend the expectation over randomness in the trajectory, and perform a telescoping sum over the iterations
\begin{align*}
\mathcal{L}_0-\mathcal{L}_*
&\geq \mathcal{L}_0-\E \mathcal{L}_T=\sum_t \E(\mathcal{L}_t-\mathcal{L}_{t+1})
\geq \left(\eta-\frac{L\eta^2  }{2}\right)\E(\sum_t\|\g_t\|^2)-\frac{TL\eta^2 \xi^2}{2B}
\end{align*}
Notice that we do not need the symmetry assumption in \Cref{assumption: tilde g} in the non-DP SGD analysis.

We apply the same learning rate as in \cite{bernstein2018signsgd}, $\eta=\frac{1}{L\sqrt{T}}$,
\begin{align*}
2(\mathcal{L}_0-\mathcal{L}_*)
\geq \left(\frac{2}{L\sqrt{T}}-\frac{1}{LT}\right)\E\left(\sum_t\|\g_t\|^2\right)-\frac{T \xi^2}{BLT}
\geq \frac{\sqrt{T}}{L}\E\left(\frac{1}{T}\sum_t\|\g_t\|^2\right)-\frac{\xi^2}{BL}
\end{align*}
and finally
\begin{align*}
\min_t\E\left(\|\g_t\|^2\right)
\leq \E\left(\frac{1}{T}\sum_t\|\g_t\|^2\right)
\leq \frac{1}{\sqrt{T}}\left[2(\mathcal{L}_0-\mathcal{L}_*)L
+\frac{\xi^2}{B}\right]
\end{align*}
Using the Jensen's inequality, we can have
\begin{align*}
\min_t\E\left(\|\g_t\|\right)
\leq \frac{1}{T^{1/4}}\sqrt{2(\mathcal{L}_0-\mathcal{L}_*)L
+\frac{\xi^2}{B}}
\end{align*}
\end{proof}

\section{Auxiliary proofs}

\subsection{Proof of \Cref{thm: min f}}
\label{app:thm f proof}
\begin{proof}
We first show $\frac{df(c,S;\Gamma)}{dS}<0$ for all $0<c<1, \Gamma>0$ and $S>0$, as visualized in the left plot of \Cref{fig:f over S}. We can explicitly write down the derivative, by WolframAlpha
\begin{align}
\frac{df(c,S;\Gamma)}{dS}=\frac{-(A\Gamma^2+B\Gamma+C)}{\sqrt{S^2-2 c S + 1}\sqrt{S^2+2 c S + 1}(\Gamma+\sqrt{S^2-2 c S + 1})^2(\Gamma+\sqrt{S^2+2 c S + 1})^2}
\label{eq:complicated df dS}
\end{align}
with
\begin{align*}
A(c,S)&=\sqrt{S^2+2cS+1}\left(3c^2 S-2c(S^2+1)+S\right)+\sqrt{S^2-2cS+1}\left(3c^2 S+2c(S^2+1)+S\right)
\\
B(c,S)&=4S\left[(S^2+1)(1-c^2)+c^2\sqrt{S^2+2cS+1}\sqrt{S^2-2cS+1}\right]
\\
C(c,S)&=(1-c^2)S\left[(S^2-2cS+1)^{3/2}+(S^2+2cS+1)^{3/2}\right]
\end{align*}

It is obvious that, since $c<1$,
\begin{align}
S^2\pm 2 c S + 1>S^2\pm 2 c S + c^2=(S\pm c)^2\geq 0.
\label{eq:obvious}    
\end{align}
From \eqref{eq:obvious}, the denominator in \eqref{eq:complicated df dS} is positive and it suffices to show $A\Gamma^2+B\Gamma+C>0$ for all $0<c<1$ and $S>0$, in order to show $\frac{df}{dS}<0$.

Also from \eqref{eq:obvious}, we can easily see $B(c,S)>0$ and $C(c,S)>0$. We will show that $A(c,S)>0$ in \Cref{lem: A ugly}, after very heavy algebraic computation. 

Now we can claim that $A\Gamma^2+B\Gamma+C>0$ by \Cref{fact: Ax^2+Bx+C}, and complete the proof of the first statement.

To further see that $\min_c f(c,S;\Gamma)$ is decreasing in $S$, let us denote $c^*(x;\Gamma):=\text{arg min}_{c\in[0,1]} f(c,x;\Gamma)$. Then considering $S< S'$, we prove the second statement by observing
$$\min_c f(c,S;\Gamma)=f(c^*(S;\Gamma),S;\Gamma)> f(c^*(S;\Gamma),S';\Gamma)\geq \min_c f(c,S';\Gamma).$$
This statement is also visualized in the right plot of \Cref{fig:f over S}.

We next show $\frac{df(c,S;\Gamma)}{dc}<0$ for all $0<c<1, \Gamma>0$ and $S>1$. We can explicitly write down the derivative, by WolframAlpha
\begin{align}
\frac{df(c,S;\Gamma)}{dc}=\frac{-S(A'\Gamma^2+B'\Gamma+C')}{\sqrt{S^2-2 c S + 1}\sqrt{S^2+2 c S + 1}(\Gamma+\sqrt{S^2-2 c S + 1})^2(\Gamma+\sqrt{S^2+2 c S + 1})^2}
\label{eq:complicated df dc}
\end{align}
with
\begin{align*}
A'(c,S)&=\left[(S^2+3cS+2)\sqrt{S^2-2cS+1}-(S^2-3cS+2)\sqrt{S^2+2cS+1}\right]
\\
B'(c,S)&=4Sc\left[\sqrt{S^2+2cS+1}\sqrt{S^2-2cS+1}+(S^2-1)\right]
\\
C'(c,S)&=S\left[(c+S)(S^2-2cS+1)^{3/2}+(c-S)(S^2+2cS+1)^{3/2}\right]
\end{align*}
Clearly $B'(c,S)>0$ and $C'(c,S)>0$, since $S^2+2cS+1>S^2-2cS+c^2=(S-c)^2\geq 0$. And we will show $A'(c,S)>0$ in \Cref{lem:A' ugly}, after some algebra. 

We again claim that $A'\Gamma^2+B'\Gamma+C'>0$ by \Cref{fact: Ax^2+Bx+C}, which guarantees that the numerator in \eqref{eq:complicated df dc} is negative and that $\frac{df}{dc}<0$. This is visualized in \Cref{fig:f over c}.
\end{proof}

\begin{figure}[!htb]
    \centering
    \includegraphics[width=0.4\linewidth]{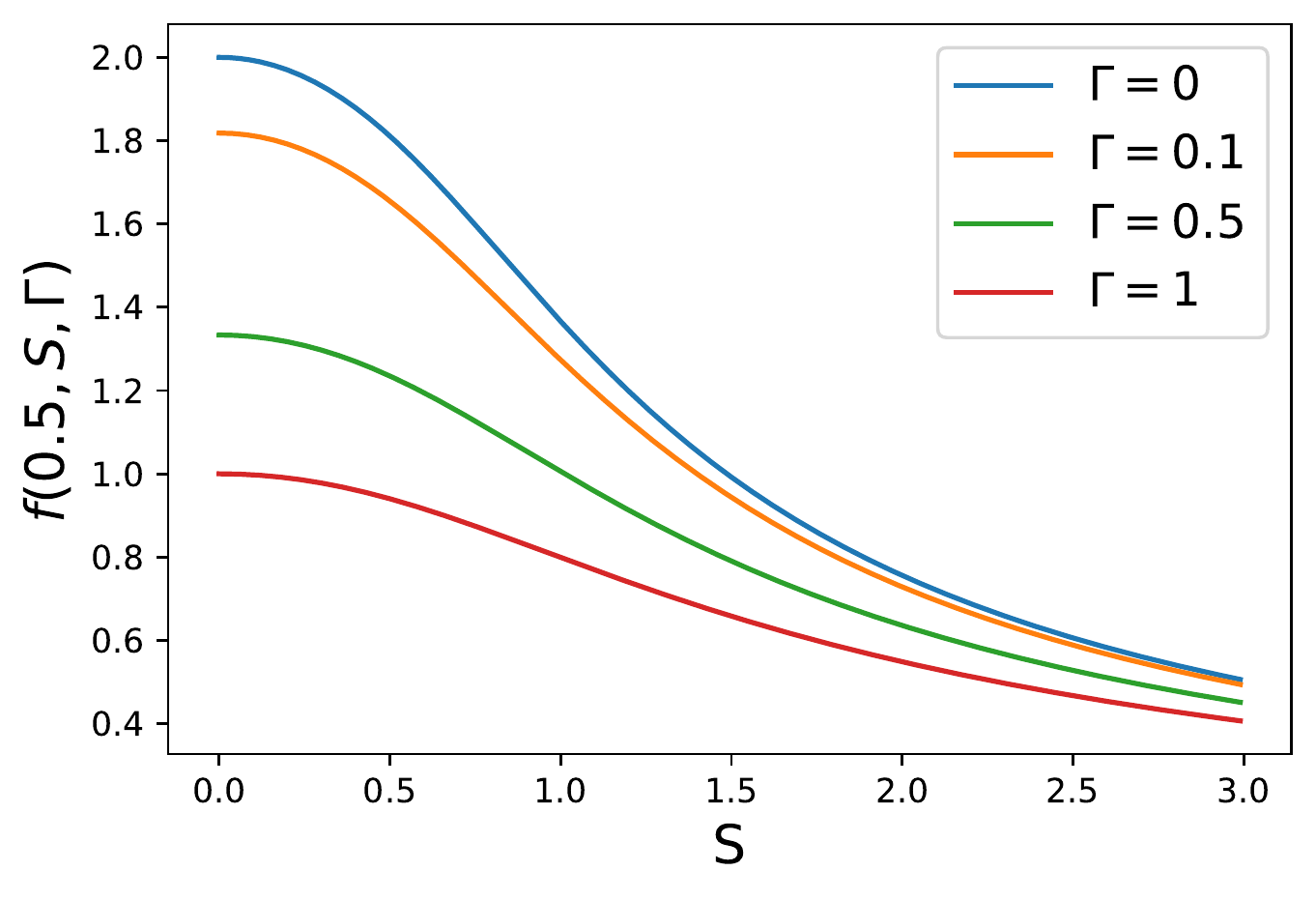}
    \includegraphics[width=0.41\linewidth]{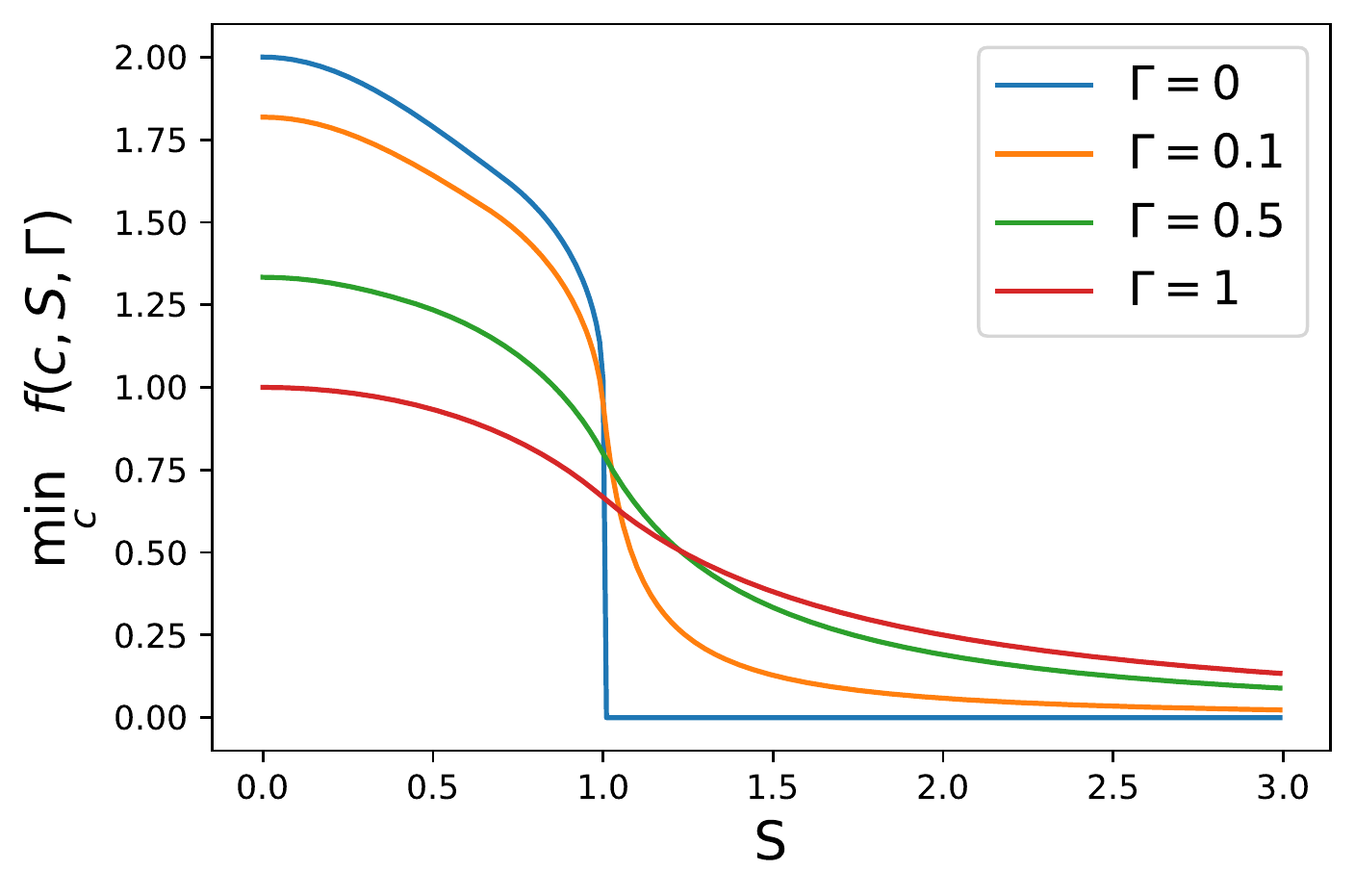}
    \caption{Visualization of $f(0.5,S,\Gamma)$ (left) and $\min_{0\leq c\leq 1}f(c,S,\Gamma)$ over $S>0$.}
    \label{fig:f over S}
\end{figure}

\begin{figure}[!htb]
    \centering
    \includegraphics[width=0.4\linewidth]{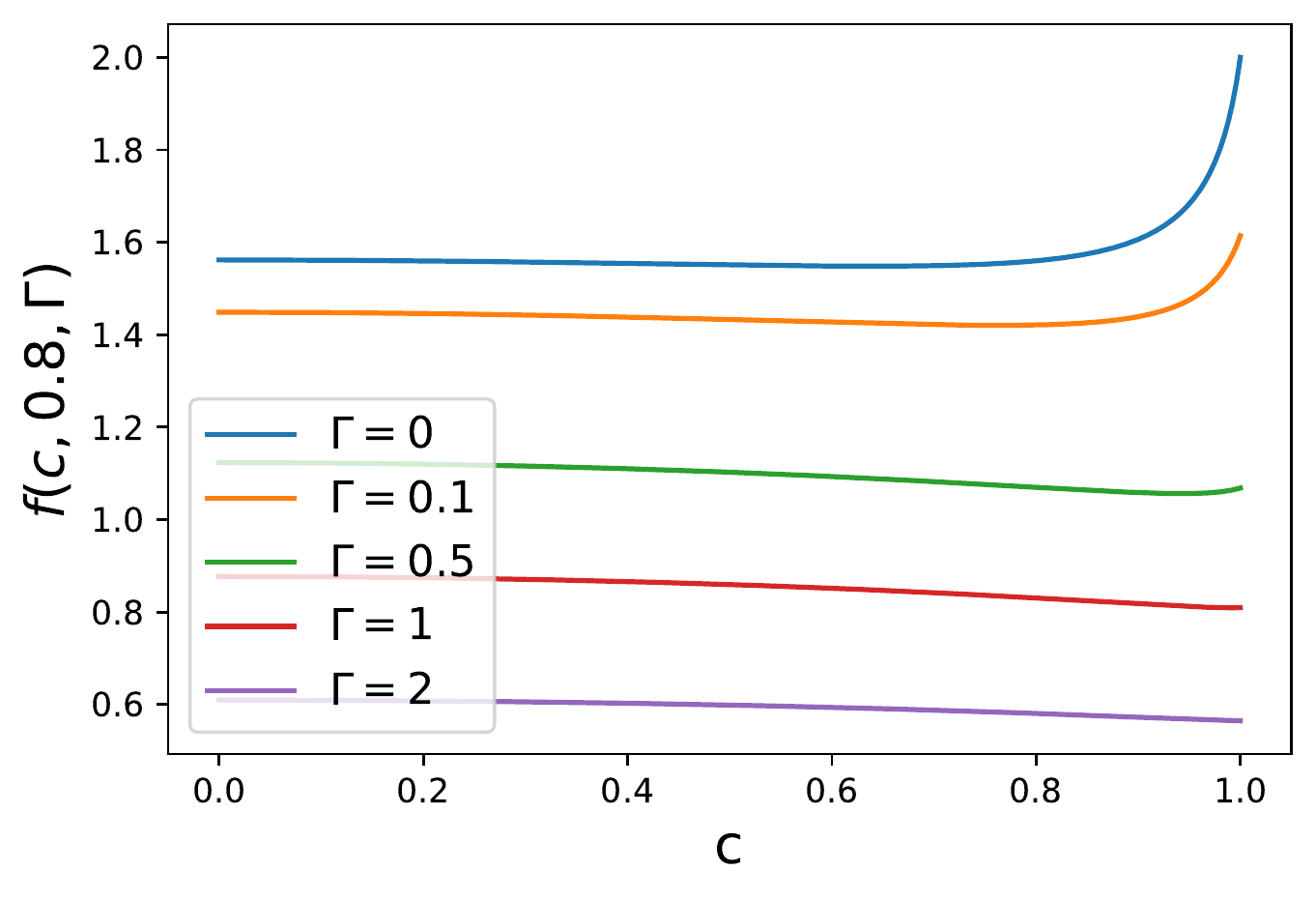}
    \includegraphics[width=0.4\linewidth]{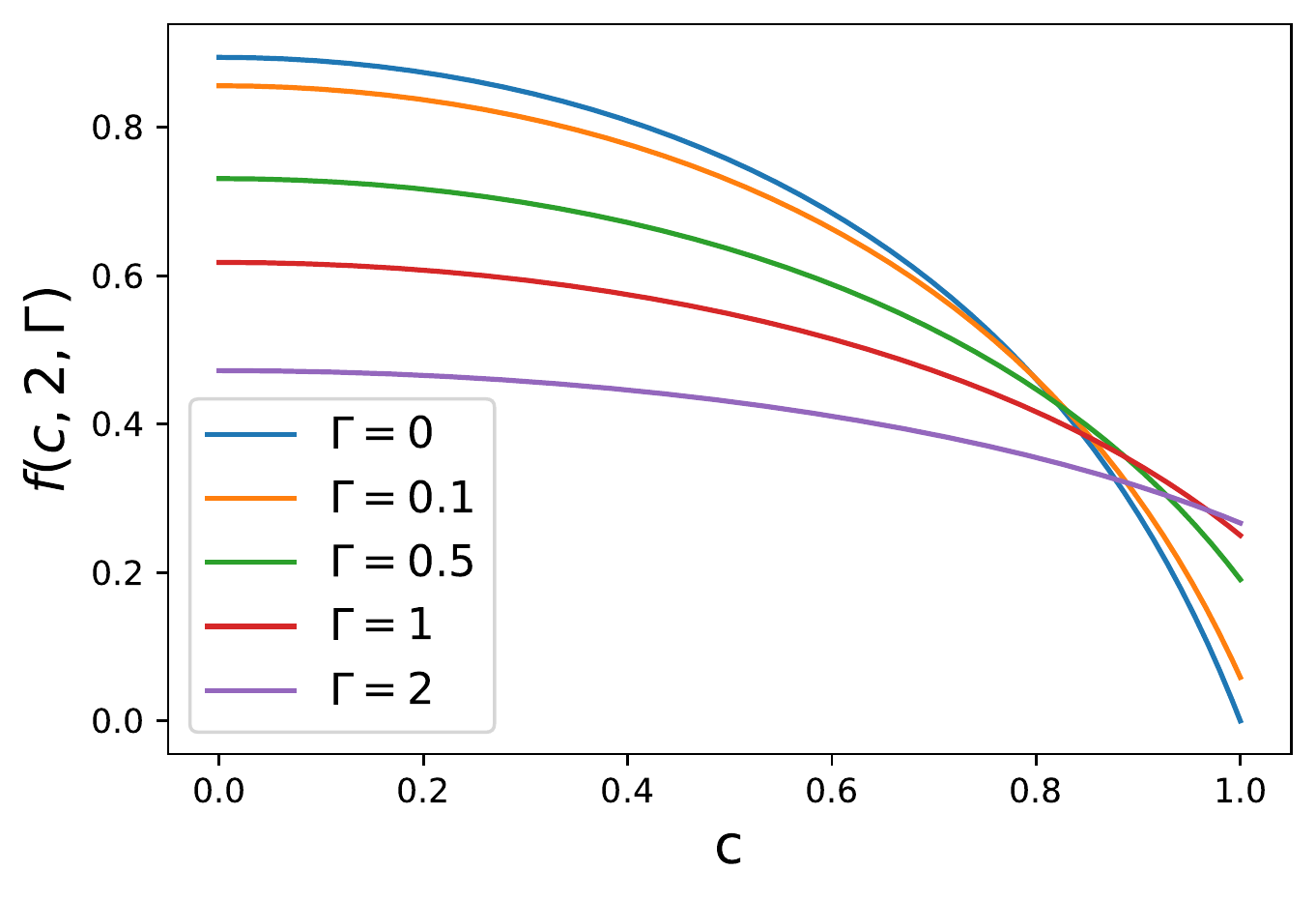}
    \caption{Visualization of $f(c,0.8,\Gamma)$ (left) and $f(c,2,\Gamma)$ over $0\leq c\leq 1$.}
    \label{fig:f over c}
\end{figure}

\subsection{Proof of \Cref{lem: A ugly}}
\begin{lemma}
\label{lem: A ugly}
For all $0<c<1$ and $S>0$,
$$
A:=\sqrt{S^2+2cS+1}\left(3c^2 S-2c(S^2+1)+S\right)+\sqrt{S^2-2cS+1}\left(3c^2 S+2c(S^2+1)+S\right)>0.
$$
\end{lemma}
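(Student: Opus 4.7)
The plan is to rewrite $A$ in a form where positivity becomes manifest, by exploiting the near-symmetry between the two radicals. Set $u := \sqrt{S^2+2cS+1}$ and $v := \sqrt{S^2-2cS+1}$. Since $0 < c < 1$ we have $S^2 \pm 2cS + 1 > (S \pm c)^2 \geq 0$, so $u,v > 0$; moreover $u^2 - v^2 = 4cS > 0$, hence $u > v > 0$. Writing $P := 3c^2 S - 2c(S^2+1) + S$ and $Q := 3c^2 S + 2c(S^2+1) + S$, we have $A = uP + vQ$ with $P + Q = 2S(3c^2+1)$ and $Q - P = 4c(S^2+1)$.

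Next, I would use the elementary identity
\begin{align*}
uP + vQ \;=\; \tfrac{u+v}{2}(P+Q) \,+\, \tfrac{u-v}{2}(P-Q),
\end{align*}
which together with the preceding expressions for $P \pm Q$ gives
\begin{align*}
A \;=\; S(3c^2+1)(u+v) \,-\, 2c(S^2+1)(u-v).
\end{align*}
Now the key trick: since $(u+v)(u-v) = u^2 - v^2 = 4cS$, we substitute $u - v = 4cS/(u+v)$ and pull out the common positive factor $S/(u+v)$:
\begin{align*}
A \;=\; \frac{S}{u+v}\Bigl[(3c^2+1)(u+v)^2 \,-\, 8c^2(S^2+1)\Bigr].
\end{align*}

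The last step is to expand $(u+v)^2 = u^2 + v^2 + 2uv = 2(S^2+1) + 2uv$, which reduces the bracket to
\begin{align*}
2(S^2+1)\bigl[(3c^2+1) - 4c^2\bigr] + 2(3c^2+1)uv \;=\; 2(S^2+1)(1 - c^2) + 2(3c^2+1)uv.
\end{align*}
Both summands are strictly positive for $0 < c < 1$, $S > 0$ (the first because $1 - c^2 > 0$, the second because $u, v > 0$ and $3c^2+1 > 0$), so $A > 0$. I do not expect any real obstacle — the only nontrivial move is spotting the rewrite $A = S(3c^2+1)(u+v) - 2c(S^2+1)(u-v)$ followed by the identity $(u-v)(u+v) = 4cS$; once that is set up, the remaining manipulations are routine and avoid squaring cases on the sign of $P$.
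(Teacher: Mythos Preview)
Your proof is correct and takes a genuinely different --- and cleaner --- route than the paper. The paper argues by contradiction: assuming $A<0$, it isolates one radical term on each side, checks signs so that squaring is legitimate, expands the difference of squares, and eventually reduces to showing that a quadratic $(2c^2+2)X^2+(3-2c^2-9c^4)X+(2c^2+2)$ in $X=S^2$ is nonnegative by computing its minimum explicitly. Your approach instead rewrites $A$ directly as
\[
A=\frac{S}{u+v}\Bigl[2(S^2+1)(1-c^2)+2(3c^2+1)uv\Bigr],
\]
from which strict positivity is immediate. The two key moves --- the sum/difference decomposition $uP+vQ=\tfrac{u+v}{2}(P+Q)+\tfrac{u-v}{2}(P-Q)$ and the substitution $u-v=4cS/(u+v)$ --- sidestep both the sign-checking needed before squaring and the heavier polynomial algebra the paper has to carry out afterward. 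Your argument is shorter, avoids any case analysis on the sign of $P$, and makes positivity manifest rather than reaching it through a contradiction.
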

\begin{proof}
We prove by contradiction. Suppose
$$\sqrt{S^2+2cS+1}\left(3c^2 S-2c(S^2+1)+S\right)+\sqrt{S^2-2cS+1}\left(3c^2 S+2c(S^2+1)+S\right)<0.$$
Then
$$0<\sqrt{S^2-2cS+1}\left(3c^2 S+2c(S^2+1)+S\right)<-\sqrt{S^2+2cS+1}\left(3c^2 S-2c(S^2+1)+S\right).$$
where the first inequality comes from $S^2-2cS+1>S^2-2cS+c^2=(S-c)^2\geq 0$.

Squaring everything gives
$$(S^2-2cS+1)\left(3c^2 S+2c(S^2+1)+S\right)^2<(S^2+2cS+1)\left(3c^2 S-2c(S^2+1)+S\right)^2.$$
Taking the difference gives
$$4 c S (2 + 3 S^2 - 9 c^4 S^2 + 2 S^4 + 2 c^2 (1 - S^2 + S^4))<0$$
Given that $c>0, S>0$, we have
$$2 + 3 S^2 - 9 c^4 S^2 + 2 S^4 + 2 c^2 (1 - S^2 + S^4)<0$$
Denoting $X:=S^2$ and viewing the above as a quadratic polynomial of $X$, we have
$$\underbrace{(2c^2+2)X^2+(3-2c^2-9c^4)X+(2c^2+2)}_{\textcircled{1}} <0
$$
Using the closed-form minimizer of quadratic polynomial $\textcircled{1}$, after some heavy algebra, one can check the minimum of $\textcircled{1}$ is
$$\frac{(1 + 3 c^2)^2 (1-c^2)(7+9c^2)}{8 (1 + c^2)}$$
which is clearly positive. Contradiction!
\end{proof}

\subsection{Proof of \Cref{lem:A' ugly}}
\begin{lemma}
\label{lem:A' ugly}
For all $0<c<1$ and $S>1$,
$$
(S^2+3cS+2)\sqrt{S^2-2cS+1}-(S^2-3cS+2)\sqrt{S^2+2cS+1}>0.
$$
\end{lemma}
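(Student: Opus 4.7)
The plan is to reduce the inequality to a comparison of squares, which then collapses to an elementary polynomial bound in $S$ and $c$.

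First, I would split on the sign of $S^2-3cS+2$. If $S^2-3cS+2\leq 0$, the right-hand side of the claimed inequality is nonpositive, while the left-hand side is strictly positive (since $S^2+3cS+2>0$ and $\sqrt{S^2-2cS+1}\geq \sqrt{(S-c)^2}>0$), so the inequality is immediate. Thus I can assume $S^2-3cS+2>0$, in which case both quantities
\[
(S^2+3cS+2)\sqrt{S^2-2cS+1},\qquad (S^2-3cS+2)\sqrt{S^2+2cS+1}
\]
are strictly positive, and I can square.

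Next, abbreviate $a=S^2+2,\ b=3cS,\ p=S^2+1,\ q=2cS$, so that the squared inequality reads $(a+b)^2(p-q)>(a-b)^2(p+q)$. Expanding via $(a\pm b)^2 = a^2+b^2\pm 2ab$, the $a^2+b^2$ terms yield $-2q(a^2+b^2)$ and the cross terms yield $+4abp$, so the difference simplifies to $4abp-2q(a^2+b^2)$. Substituting back and dividing through by the common factor $2cS>0$, the task reduces to proving
\[
3(S^2+2)(S^2+1) > (S^2+2)^2 + 9c^2 S^2,
\]
i.e.\ $(S^2+2)(2S^2+1) > 9c^2 S^2$ after factoring $a=S^2+2$ out of the left side.

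Finally, since $c<1$ it suffices to verify $(S^2+2)(2S^2+1) > 9S^2$, and expanding gives $2S^4+5S^2+2 > 9S^2$, equivalently $2(S^2-1)^2>0$, which holds strictly whenever $S>1$. Tracing back through the squaring step (valid because both sides were positive) and through the trivial case, the lemma follows. I do not expect any real obstacle here: the only subtlety is the case split on the sign of $S^2-3cS+2$, which is necessary because $S$ is only assumed to exceed $1$ (not $2$), and without which the squaring step would not be reversible.
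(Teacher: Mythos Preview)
The proposal is correct and follows essentially the same approach as the paper: both make the case split on the sign of $S^2-3cS+2$, square in the nontrivial case, and reduce to the polynomial inequality $2X^2+(5-9c^2)X+2>0$ for $X=S^2>1$. The only minor difference is in how this last inequality is verified---the paper locates the vertex of the quadratic and evaluates at $X=1$, while you bound $9c^2S^2<9S^2$ and factor to $2(S^2-1)^2>0$; your route is arguably a touch cleaner.
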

\begin{proof}
Notice that $(S^2+3cS+2)>S^2+2>0$ and $\sqrt{S^2\pm 2cS+1}>0$. Therefore if $S^2-3cS+2\leq 0$, we are done.

Otherwise, we prove by contradiction and suppose
$$0<(S^2+3cS+2)\sqrt{S^2-2cS+1}<(S^2-3cS+2)\sqrt{S^2+2cS+1}.$$
under the condition that $S^2-3cS+2>0$.

Squaring everything gives
$$(S^2+3cS+2)^2 (S^2-2cS+1)<(S^2-3cS+2)^2 (S^2+2cS+1).$$
Taking the difference gives
$$cS(8 + 20 S^2 - 36 c^2 S^2 + 8 S^4)<0$$
Given that $c>0, S>0$, we have
$$2 + 5 S^2 - 9 c^2 S^2 + 2 S^4<0$$
Denoting $X:=S^2$ and viewing the above as a quadratic polynomial of $X$, we have, for $X>1$,
$$\underbrace{2X^2+(5-9c^2)X+2}_{\textcircled{2}} <0
$$

The closed-form minimizer of quadratic polynomial $\textcircled{2}$ is $\frac{(9c^2-5)}{4}$. Given that $0<c<1$, we must have $-\frac{5}{4}<\frac{9c^2-5}{4}<1$. Hence the minimizer is not within the feasible domain $(1,\infty)$ of $X$. Thus the minimum of \textcircled{2} is achieved with $X=1$ at $9(1-c^2)$. This is positive. Contradiction!
\end{proof}

\subsection{Proof of \Cref{fact: Ax^2+Bx+C}}
\begin{fact}
\label{fact: Ax^2+Bx+C}
For a quadratic polynomial $Ax^2+Bx+C$ with $A,B,C>0$, the minimum value on the domain $x\geq 0$ is $C$, at $x=0$. Therefore $Ax^2+Bx+C>0$.
\end{fact}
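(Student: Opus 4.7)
The plan is to observe that the statement is essentially immediate from the sign conditions on the coefficients, and to present two equivalent arguments so the reader can pick whichever they prefer.

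First I would give the direct nonnegativity argument: for any $x\geq 0$, since $A>0$ and $B>0$, both $Ax^{2}\geq 0$ and $Bx\geq 0$, so $Ax^{2}+Bx+C\geq C$, with equality iff $x=0$. Since $C>0$ by hypothesis, this already yields both claims at once: the minimum on $\{x\geq 0\}$ is $C$, attained at $x=0$, and $Ax^{2}+Bx+C>0$ whenever $x>0$ (and also at $x=0$ because the value there is $C>0$).

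As a sanity check I would optionally add the calculus/vertex argument: the unconstrained minimizer of the upward-opening parabola $p(x)=Ax^{2}+Bx+C$ is $x^{\star}=-B/(2A)$, and since $A,B>0$ we have $x^{\star}<0$, so $x^{\star}\notin[0,\infty)$. Because $p$ is strictly increasing on $[x^{\star},\infty)\supseteq[0,\infty)$, its minimum on the feasible domain is attained at the boundary $x=0$, with value $p(0)=C>0$. The two arguments agree, and the conclusion $Ax^{2}+Bx+C>0$ on $x\geq 0$ follows.

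There is no real obstacle here; the only thing to be careful about is to state explicitly that the inequality is strict (hence $>0$, not just $\geq 0$), which is what the downstream applications in the proofs of Lemmas about $A\Gamma^{2}+B\Gamma+C>0$ and $A'\Gamma^{2}+B'\Gamma+C'>0$ actually need. Since $\Gamma>0$ in those invocations, and $A,B,C$ (resp.\ $A',B',C'$) have all been shown positive, the fact applies verbatim.
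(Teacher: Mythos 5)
Your proof is correct and, via the second (vertex/monotonicity) argument, matches the paper's proof essentially verbatim; your first argument (termwise nonnegativity of $Ax^2$ and $Bx$) is an even more elementary route to the same conclusion. Both establish the strict inequality needed downstream, so the proposal is fine as written.
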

\begin{proof}
Since $A>0$, the quadratic polynomial is convex and increasing on the domain $x>-\frac{B}{2A}$. Since $B>0$ as well, we know $-\frac{B}{2A}<0$ and hence the quadratic polynomial is strictly increasing on $x>0$. Therefore the minimum value is achieved when $x=0$, and we obtain $Ax^2+Bx+C\geq C>0$ for all $x\geq 0$.
\end{proof}

\subsection{Assumption of symmetric gradient noise}
\label{app:grad symm}

We show that \Cref{assumption: tilde g} is actually relaxed from and less strict than the assumptions used in the non-DP literature. In words, \Cref{assumption: tilde g} allows our DP convergence to be comparable to the standard convergence (as in \Cref{thm: convergence nonDP SGD}), because our assumption does not enforce extra constraint.

In standard non-DP analysis \cite{mandt2017stochastic,smith2018don,chaudhari2018stochastic,xie2020diffusion}, the mini-batch gradient is assumed to be an unbiased estimate of the oracle gradient $\g_t=\frac{\partial \mathcal{L}}{\partial \w}$:
$$\frac{1}{B}\sum_{i=1}^B\tilde \g_{t,i}\sim \g_t+\bm\xi(\w)$$
and $\bm\xi$ is the random gradient noise with $\bm\xi\sim N(\bm 0,\Sigma(\w)/B)$. Since this assumption holds for any batch size $B$, we can set $B=1$ to recover the per-sample gradient noise: $\bm\xi=\tilde \g_{t,i}-\g_t$ is i.i.d. and symmetric because a zero-mean Gaussian is symmetric.

In fact, we can further relax our \Cref{assumption: tilde g}: besides assuming the central symmetry, the same proof of convergence will follow if we instead assume the mirror symmetry about the hyperplane normal to $\g_t$, that is $\{\v: \g_t^\top \v=0\}$.

\section{Examples of lazy regions}
\label{app:lazy region examples}
\subsection{Balanced binary classification}
We describe the data generation in \Cref{subsec:lazy region and stability}. The label is uniformly $\pm 1$, that is $\P(y_i=+1)=\P(y_i=-1)=0.5$. We have 10000 positive and negative samples $x_i\sim\mathcal{N}(y_i,1)$. We consider a logistic regression model  $\P(Y=y|x)=\mathbb{I}(y=1)\cdot\text{Sigmoid}(x+\theta)+\mathbb{I}(y=-1)\cdot(1-\text{Sigmoid}(x+\theta))=\frac {1}{1+e^{-y(\theta+x)}}$, where $\theta\in\R$ is the intercept. The gradient with respect to this only trainable parameter is $\frac{\partial \mathcal{L}_i}{\partial \theta}=-y\left(1-\frac {1}{1+e^{-y(\theta+x)}}\right)$. We set the clipping threshold $R=0.01$ and the stability constant $\gamma=0.01$.

\subsection{Mean estimation on Gaussian mixture data}
We also observe the lazy region issue in the mean estimation problem $\min_\theta \frac{1}{2}\|\theta-x_i\|^2$. Here $\P(x_i\sim\mathcal{N}(4,1))=\P(x_i\sim\mathcal{N}(4,1))=0.5$. We have 10000 samples from each Gaussian distribution. The regular minimum is clearly $\sum_i x_i\to 0$, where the regular gradient and AUTO-S clipped gradient vanish. Yet both AUTO-V and Abadi's clipping lose motivation to update the mean estimator on the interval $(-1,1)$. We set the clipping threshold $R=0.01$ and the stability constant $\gamma=0.1$.

\begin{figure}[!htb]
    \centering
    \includegraphics[width=0.5\linewidth]{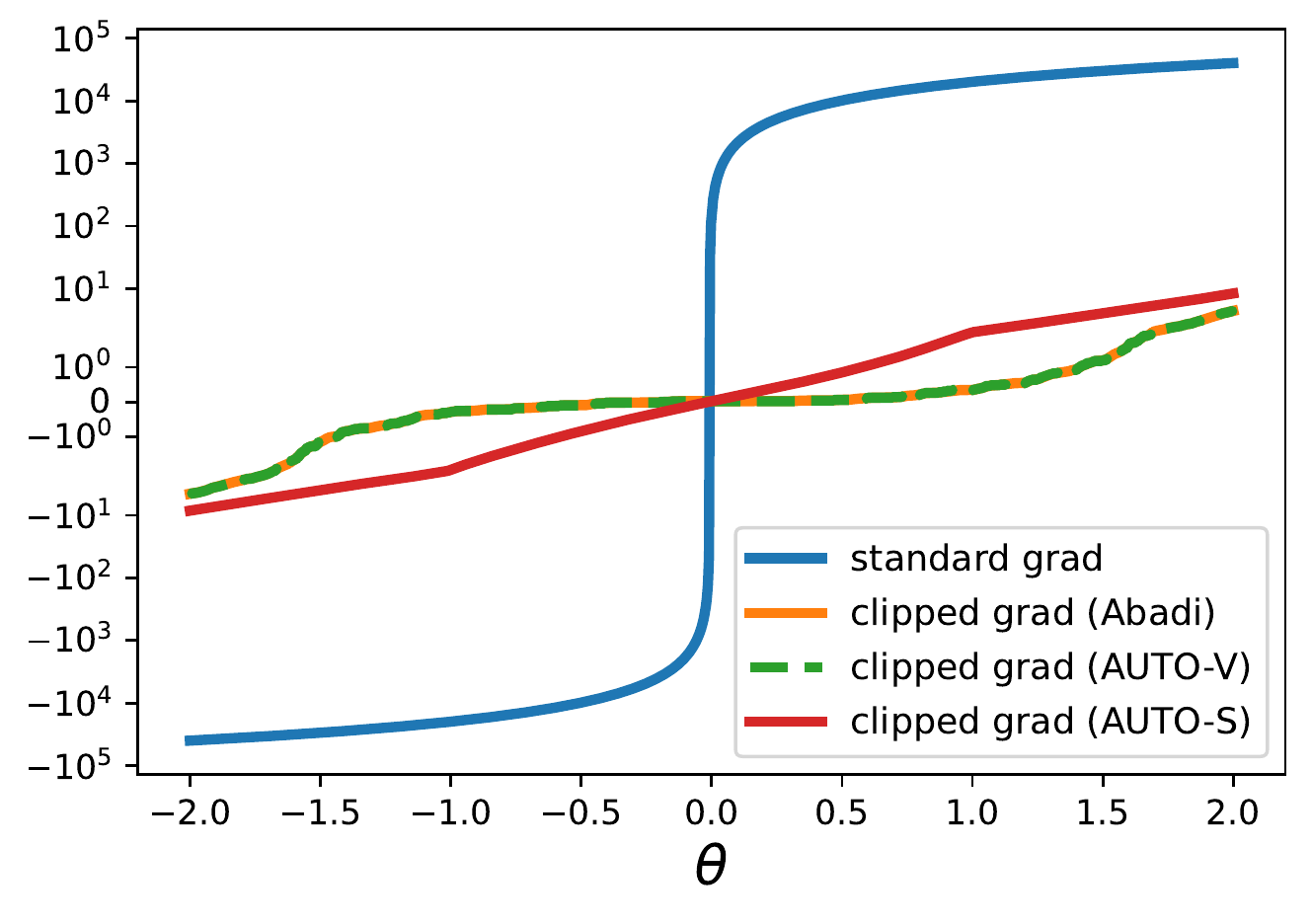}
    \vspace{-0.2cm}
    \caption{Scalar gradient $\frac{\partial \mathcal{L}}{\partial\theta}$ at each $\theta$.}
    \label{fig:lazy zone mean est}
\end{figure}

\section{Experiments settings}
\label{app:experiemnt settings}
\subsection{Image classification settings}
\label{app:CV settings}
We give the experiments settings for computer vision tasks in \Cref{tab:CV results}.
\begin{itemize}
    \item \textbf{MNIST}: We use the network architecture from \cite{papernot2020tempered,tramer2020differentially,shamsabadi2021losing}, with 40 epochs, 512 batch size, 0.5 learning rate (or 0.005 non-DP learning rate), 0.1 clipping threshold, DP-SGD with 0.9 momentum, and without pretraining. This setting is the same as \cite{tramer2020differentially}.
    
    \item \textbf{FashionMNIST}: We use the same network architecture as MNIST, with 40 epochs, 2048 batch size, 4 learning rate (or 0.04 non-DP learning rate), DP-SGD with 0.9 momentum, and without pretraining. This setting is the same as \cite{tramer2020differentially}.

    
    \item \textbf{CIFAR10 pretrained}: We use the SimCLR model from \cite{chen2020simple}\footnote{See implementation in \url{https://github.com/google-research/simclr.}}, with 50 epochs, 1024 batch size, 4 learning rate (or 0.04 non-DP learning rate), 0.1 clipping threshold, and DP-SGD with 0.9 momentum. The SimCLR model is pretrained on unlabelled ImageNet dataset. After pretraining, we obtain a feature of dimension 4096 on which a linear classifier is trained privately. This setting is the same as \cite{tramer2020differentially}.
    
    \item \textbf{ImageNette}: We use the ResNet9 (2.5 million parameters) with Mish activation function \cite{misra2019mish}. We set 50 epochs, 1000 batch size, 0.0005 learning rate (or 0.000005 non-DP learning rate), 1.5 clipping threshold, and use DP-NAdam, without pretraining. This setting is the same as \cite{klause2022differentially} except we did not apply the learning rate decaying scheduler. 
    \item \textbf{CelebA (Smiling and Male and Multi-label)} We use the same ResNet9 as above, with 10 epochs, 500 batch size, 0.001 DP learning rate (or 0.00001 non-DP learning rate), 0.1 clipping threshold, and use DP-Adam, without pretraining. We use the labels `Smiling' and `Male' for two binary classification tasks, with cross-entropy loss. For the multi-label task uses a scalar loss by summing up the 40 binary cross-entropy losses from each label.
    \end{itemize}

We refer the code for MNIST, FashionMNIST, CIFAR10, CIFAR10 pretrained to \url{https://github.com/ftramer/Handcrafted-DP} by \cite{tramer2020differentially}. ResNet9 can be found in \url{https://github.com/cbenitez81/Resnet9}.

Throughout all experiments, we do not apply tricks such as random data augmentation (single or multiple times \cite{de2022unlocking}), weight standardization \cite{qiao2019micro}, or parameter averaging \cite{polyak1992acceleration}.


\subsection{Sentence classification settings}
We experiment on five datasets in \Cref{tab:sentence roberta base} and \Cref{tab:sentence roberta large}.
\begin{itemize}
    \item \textbf{MNLI(m)} MNLI-matched, the matched validation and test splits from Multi-Genre Natural Language Inference Corpus.
    \item \textbf{MNLI(mm)} MNLI-mismatched, the matched validation and test splits from Multi-Genre Natural Language Inference Corpus.
    \item \textbf{QQP} The Quora Question Pairs2 dataset.
    \item \textbf{QNLI} The Stanford Question Answering dataset.
    \item \textbf{SST2} The Stanford Sentiment Treebank dataset.
\end{itemize}
The datasets are processed and loaded from Huggingface \cite{lhoest-etal-2021-datasets}, as described in
\url{https://huggingface.co/datasets/glue}. We follow the same setup as \cite{yu2021large} and \cite{li2021large}. We refer the interested readers to Appendix G,H,I,K,N of \cite{li2021large} for more details.

We emphasize that our automatic clipping uses exactly the same hyperparameters as the Abadi's clipping in \cite{li2021large}, which is released in their Private-Transformers library \footnote{See \url{https://github.com/lxuechen/private-transformers/blob/main/examples/classification/run_wrapper.py}}.
\begin{table}[!htb]
    \centering
    \begin{tabular}{c|cccc}
     Dataset& MNLI(m/mm)&QQP&QNLI&SST2  \\
 Epoch&18&18&6&3\\
 Batch size&6000&6000&2000&1000\\
 clipping threshold $R$&0.1&0.1&0.1&0.1\\
 DP learning rate &5e-4&5e-4&5e-4&5e-4\\
 non-DP learning rate &5e-5&5e-5&5e-5&5e-5\\
 learning rate decay&Yes&Yes&Yes&Yes\\
 AdamW weight decay&0&0&0&0\\
 Max sequence length&256&256&256&256\\
    \end{tabular}
    \caption{Hyperparameters of automatic clipping and Abadi's clipping, for sentence classification in \Cref{tab:sentence roberta base} and \Cref{tab:sentence roberta large}, using either RoBERTa base or large.}
\end{table}

Notice that we use DP learning rate 5e-4 across tasks for the $R$-dependent automatic DP-Adam, which is equivalent to $R$-independent automatic DP-Adam with the same learning rate. We demonstrate that the results are not sensitive to learning rates around the optimal choice. That is, the automatic clipping does not eliminate $R$ at the cost of more difficult tuning of learning rate.
\begin{table}[!htb]
    \centering
    \begin{tabular}{c|c|c|c|c|c}
 learning rate& 1e-4& 3e-4& 5e-4& 8e-4& 1e-3 \\\hline
 RoBERTa-base& 93.92& 94.38& 94.49& 94.72& 93.35\\
RoBERTa-large&95.76& 96.21& 96.21& 96.33& 95.99
    \end{tabular}
    \caption{SST2 accuracy with respect to learning rate.}
\end{table}

\subsection{Table-to-text generation settings}
We experiment multiple GPT2 models on E2E dataset from Huggingface \cite{lhoest-etal-2021-datasets} in \Cref{tab:E2E GPT selected}. We follow the same setup as \cite{li2021large}, and our automatic clipping uses exactly the same hyperparameters as the Abadi's clipping in \cite{li2021large}, which is released in their Private-Transformer library \footnote{See \url{https://github.com/lxuechen/private-transformers/blob/main/examples/table2text/run.sh}}.

\begin{table}[!htb]
    \centering
    \begin{tabular}{c|ccc}
     Model& GPT2&GPT2 medium&GPT2 large  \\
 Epoch&10&10&10\\
 Batch size&1024&1024&1024\\
 clipping threshold $R$&0.1&0.1&0.1\\
 DP learning rate &2e-3&2e-3&2e-3\\
 non-DP learning rate &2e-4&1e-4&1e-4\\
 learning rate decay&No&No&No\\
 AdamW weight decay&0.01&0.01&0.01\\
 Max sequence length&100&100&100\\
    \end{tabular}
    \caption{Hyperparameters of automatic clipping and Abadi's clipping, for the E2E generation task in \Cref{tab:E2E GPT selected}.}
\end{table}

\section{Figure zoo}
\subsection{Frequency of clipping}
\label{app:freq of clipping}
We show that in all sentence classification tasks, Abadi's clipping happens on a large proportion of per-sample gradients. This supports the similarity between Abadi's clipping and AUTO-V in \eqref{eq:Auto-V}.

\begin{figure}[!htb]
    \centering
    \includegraphics[width=0.45\linewidth]{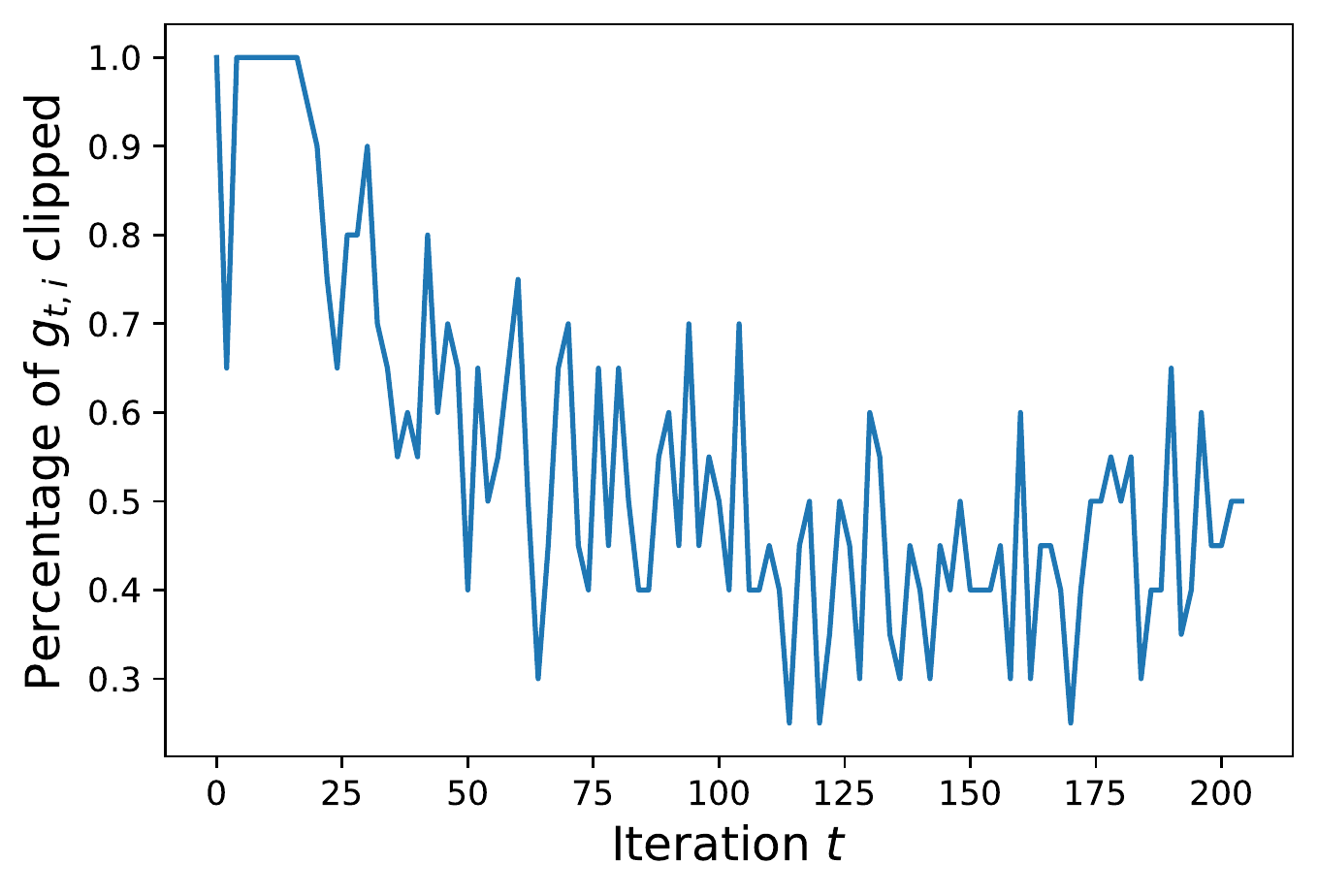}
    \includegraphics[width=0.45\linewidth]{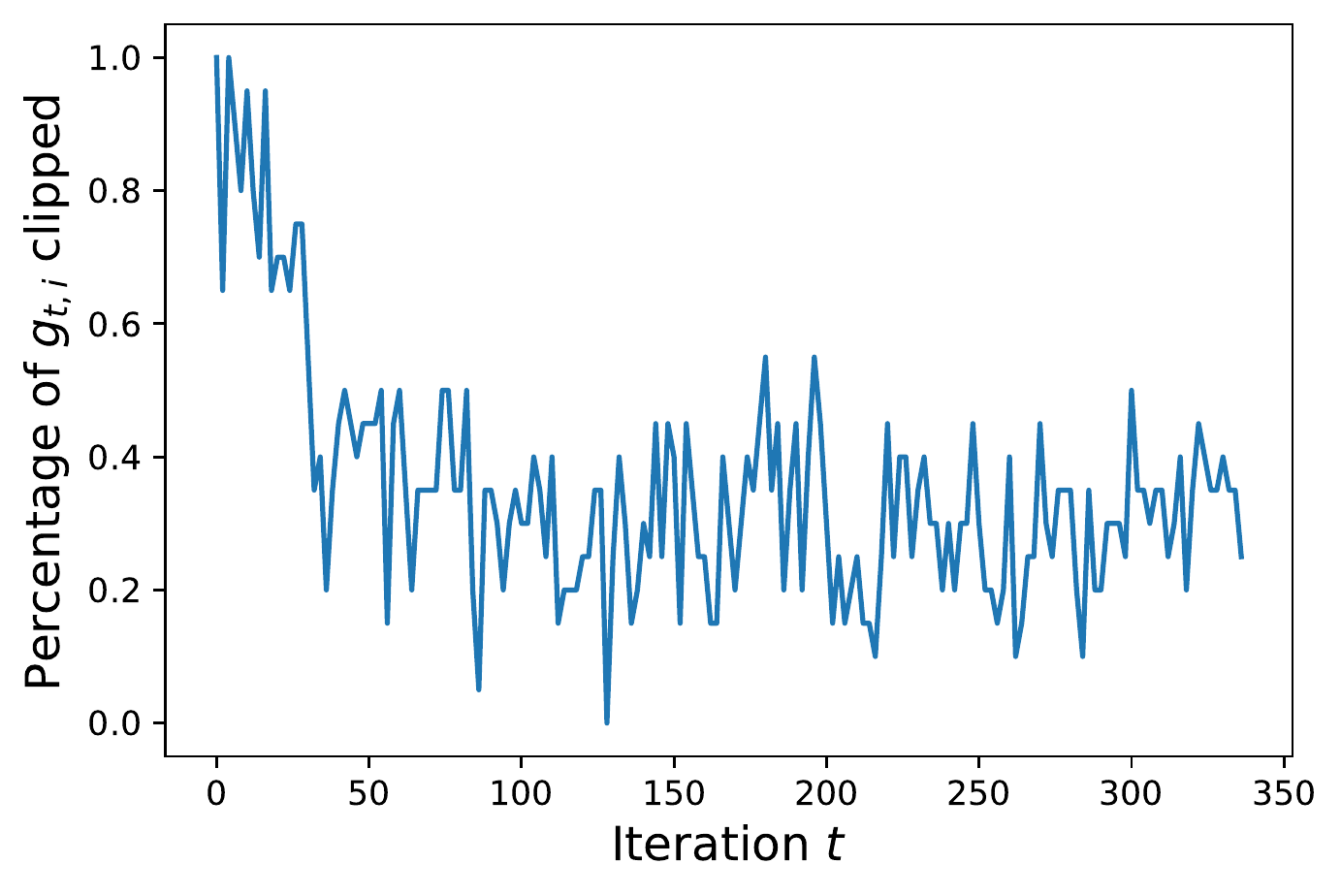}  \includegraphics[width=0.45\linewidth]{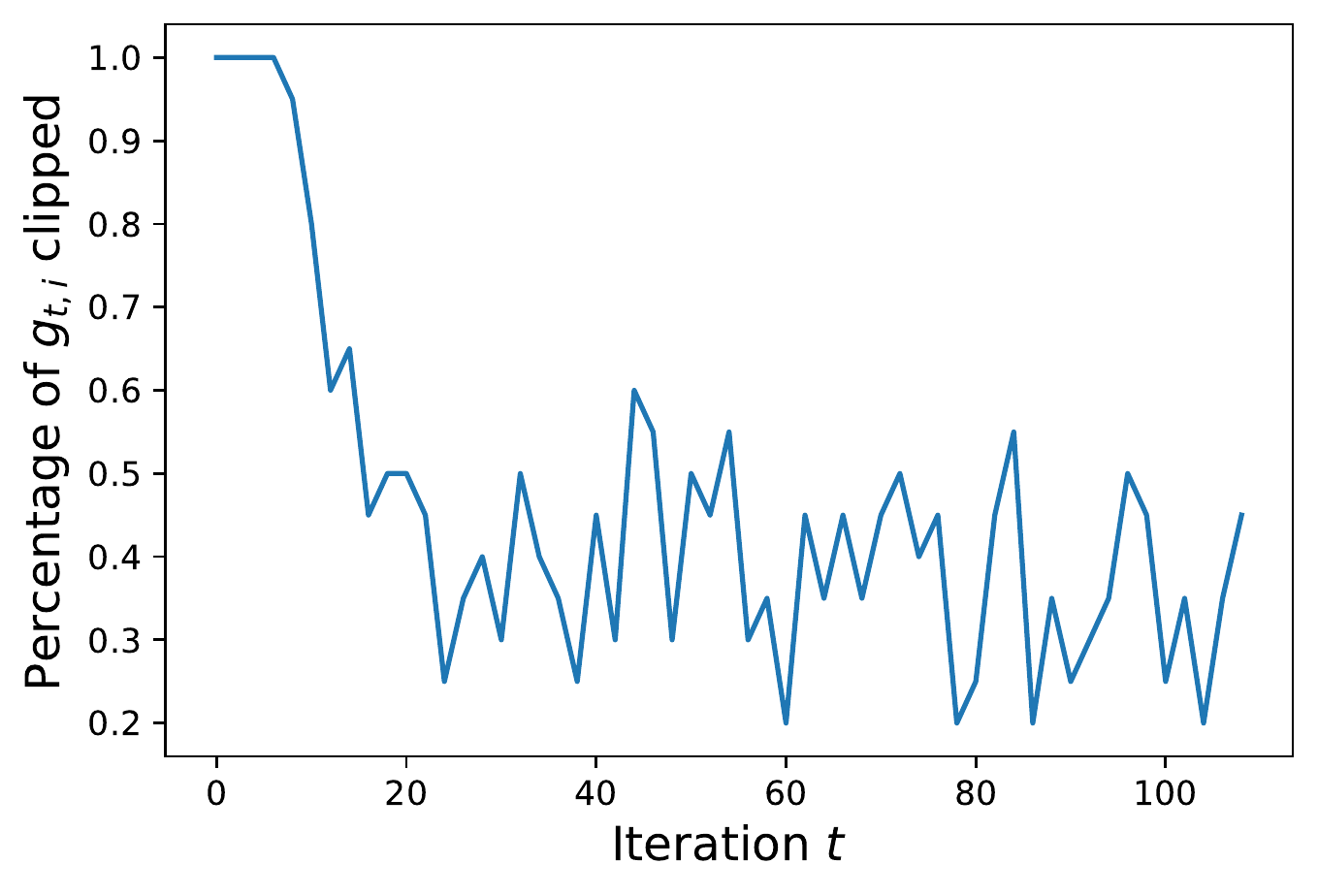}
    \includegraphics[width=0.45\linewidth]{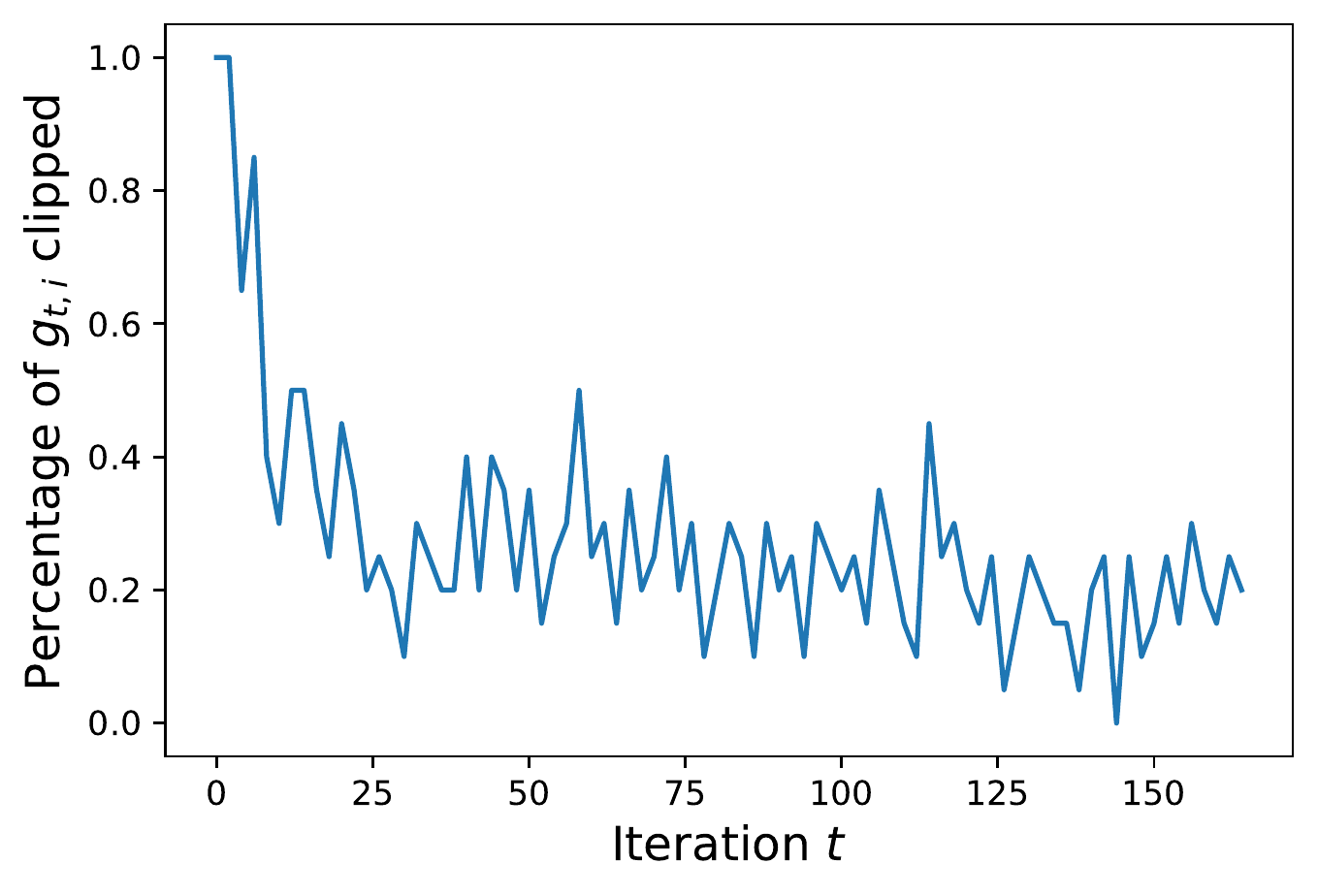}  \includegraphics[width=0.45\linewidth]{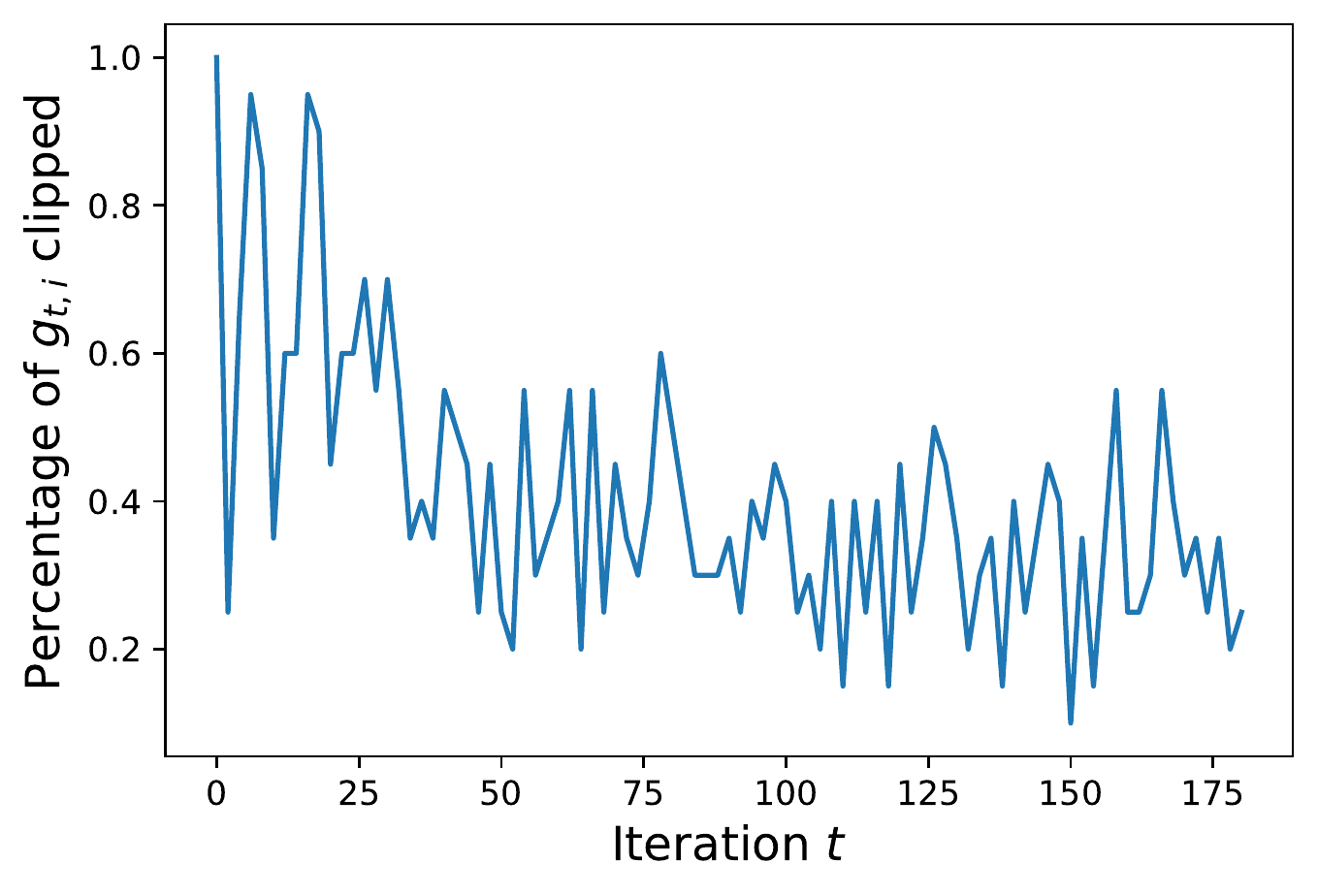}
    \includegraphics[width=0.45\linewidth]{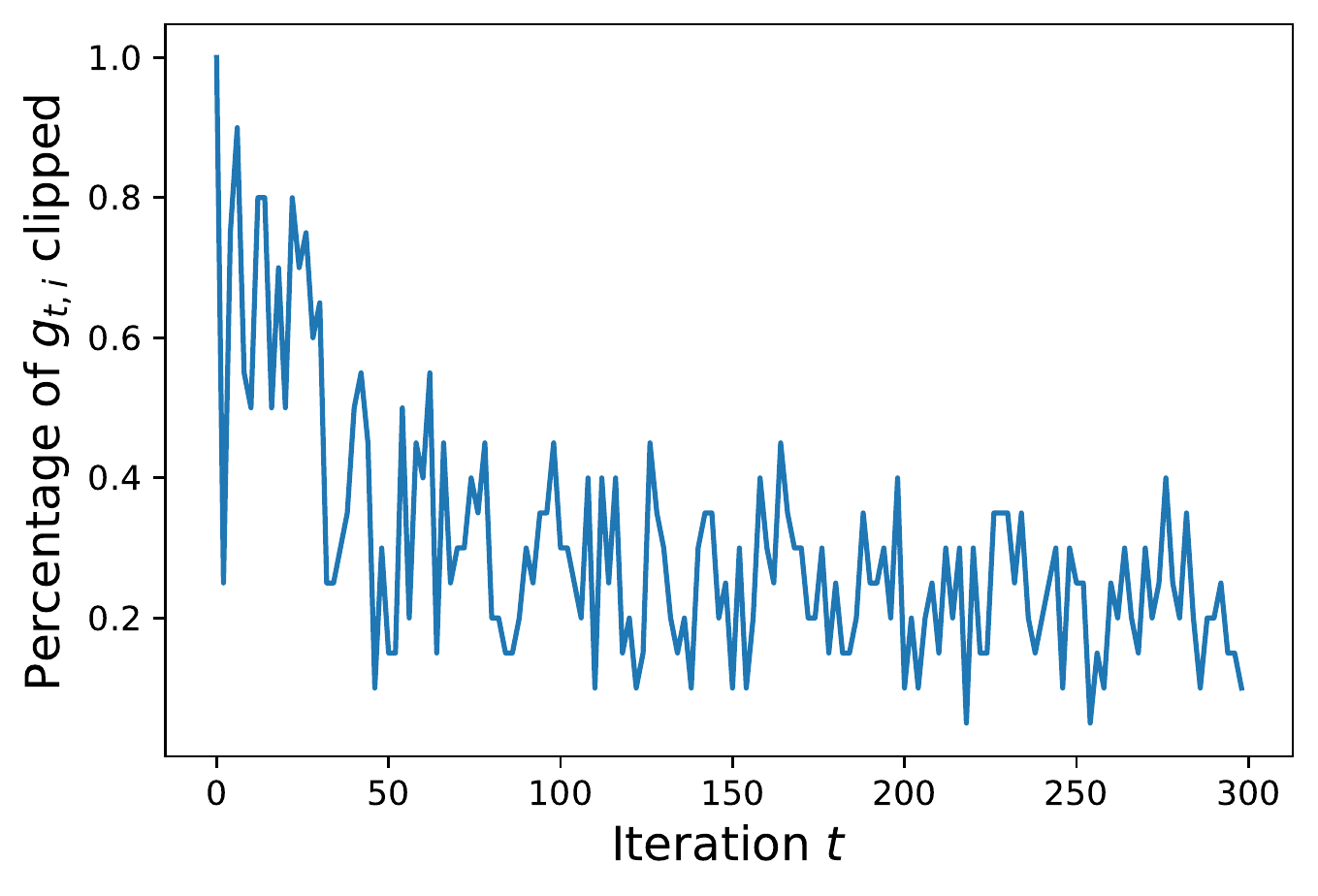}  \vspace{-0.4cm}
\caption{Percentage of clipped per-sample gradients when training with DP-Adam$_\text{Abadi}$ ($\epsilon=3$), as in \Cref{sec:NLP classification}. Left panel is RoBERTa-base and right panel is RoBERTa-large. Top row: MNLI. Middle row: QNLI. Bottom row: QQP.}
\label{fig:percentage of clipped}
\end{figure}

We note that for GPT2, GPT2 medium and GPT2 large, empirically in all iterations 100\% of the per-sample gradients are clipped by the Abadi's clipping, making the performance of Abadi's clipping equivalent to AUTO-V clipping, as shown in \Cref{tab:E2E GPT selected}.

\subsection{Stability constant helps AUTO clipping reduce gradient norm}
To corroborate our claim in \Cref{thm: convergence DPSGD AUTO}, that the stability $\gamma$ reduces the gradient norm, we plot the actual gradient norm by iteration.
\begin{figure}[!htb]
    \centering
    \includegraphics[width=0.35\linewidth]{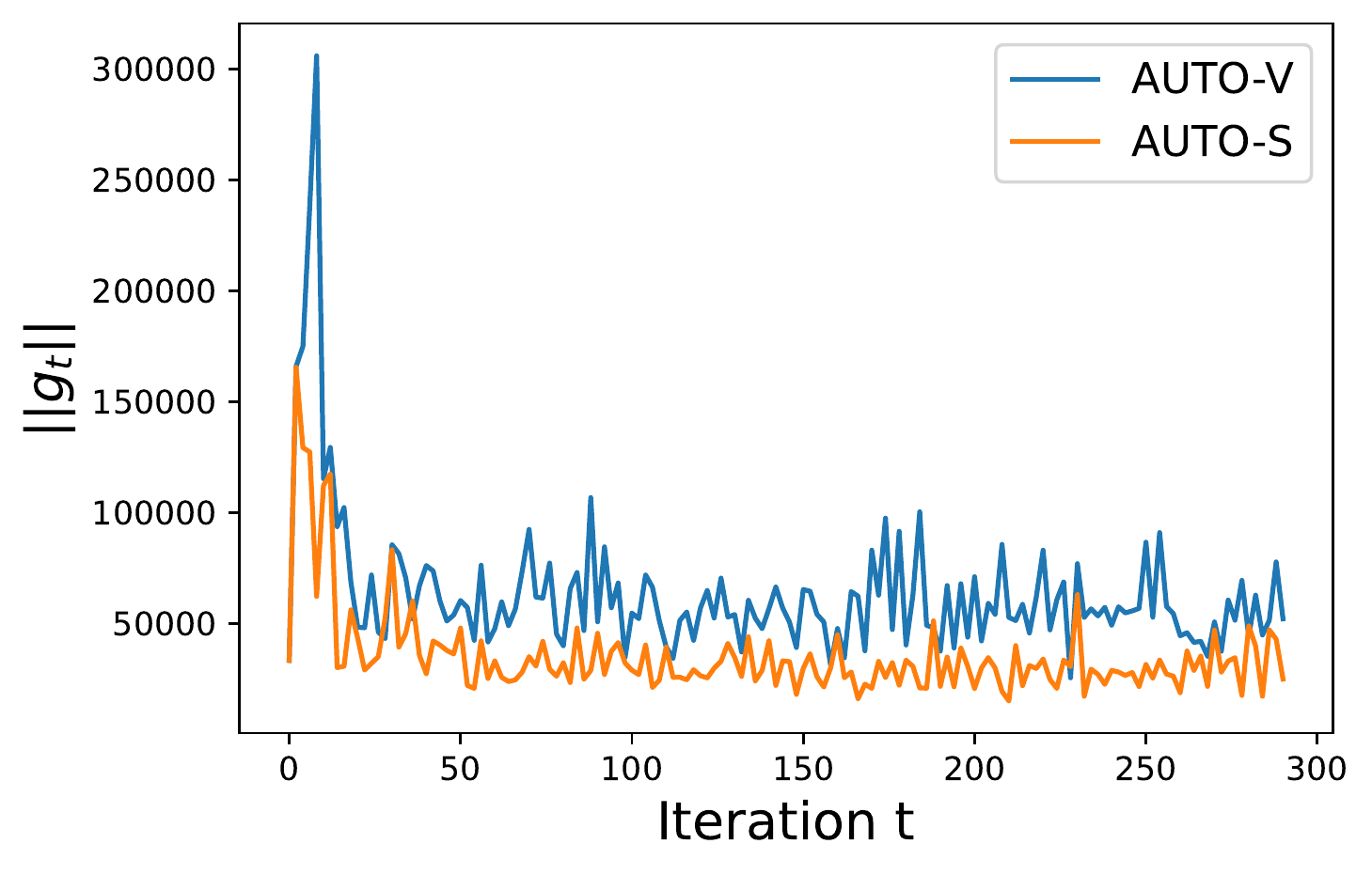}
    \includegraphics[width=0.35\linewidth]{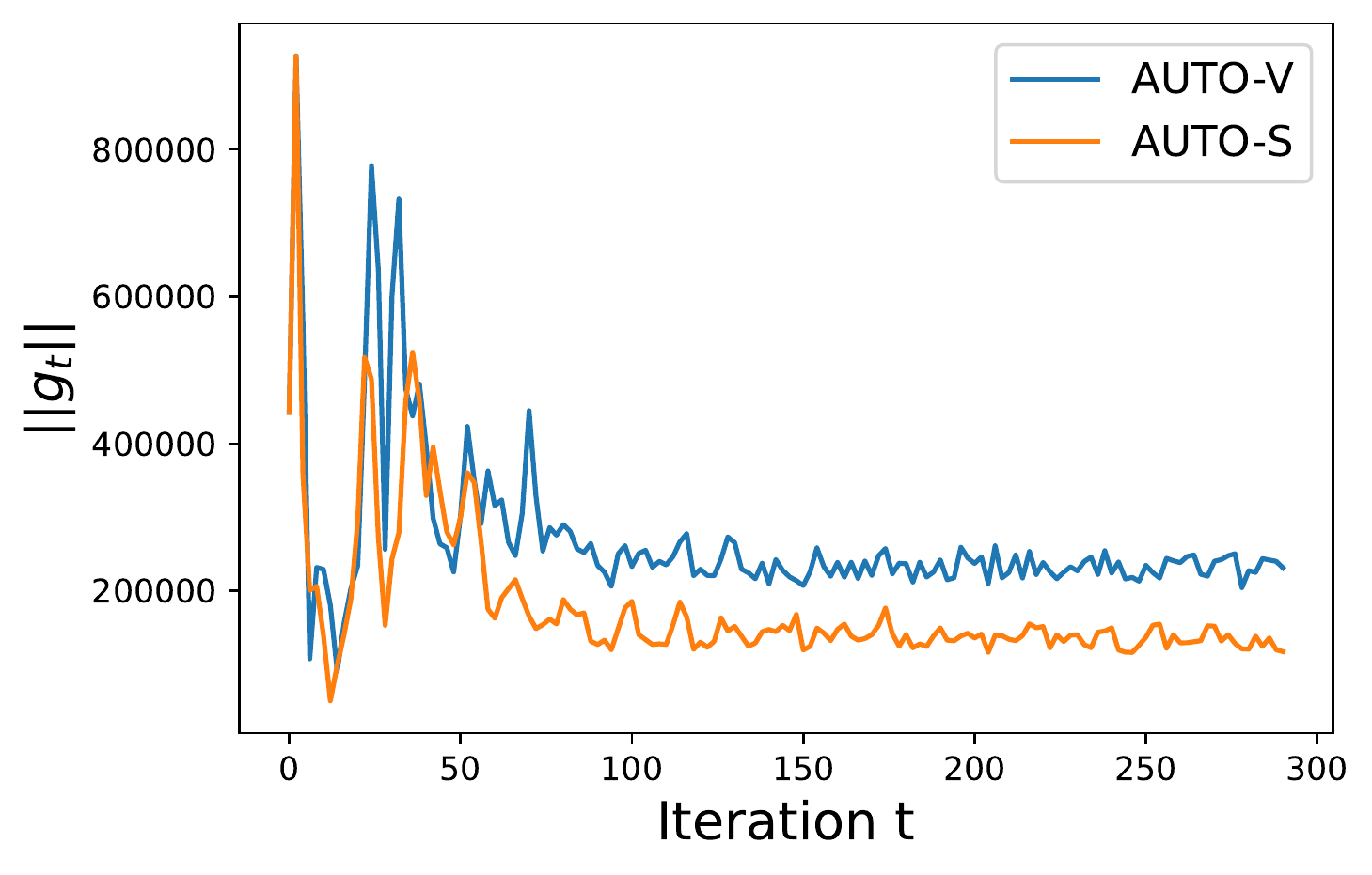}    \caption{Gradient norm by different automatic clipping methods, on SST2 (left) and MNLI (right), trained with RoBERTa-base.}
    \label{fig:grad norm reduced by stability}
\end{figure}

\subsection{Choice of stability constant is robust}
We claim in \Cref{thm: convergence DPSGD AUTO} that, as long as $\gamma>0$ in our automatic clipping, the asymptotic convergence rate of gradient norm is the same as that by standard non-private SGD. We plot the ablation study of learning rate and the stability constant $\gamma$ to show that it is easy to set $\gamma$: in \Cref{tab:sentence roberta base} and \Cref{tab:sentence roberta large}, we adopt learning rate 0.0005, under which a wide range of $0.0001<\gamma<1$ gives similar accuracy. Note that the largest good $\gamma$ is 1000 times bigger than the smallest good $\gamma$.

\begin{figure}[!htb]
    \centering
\includegraphics[width=0.38\linewidth]{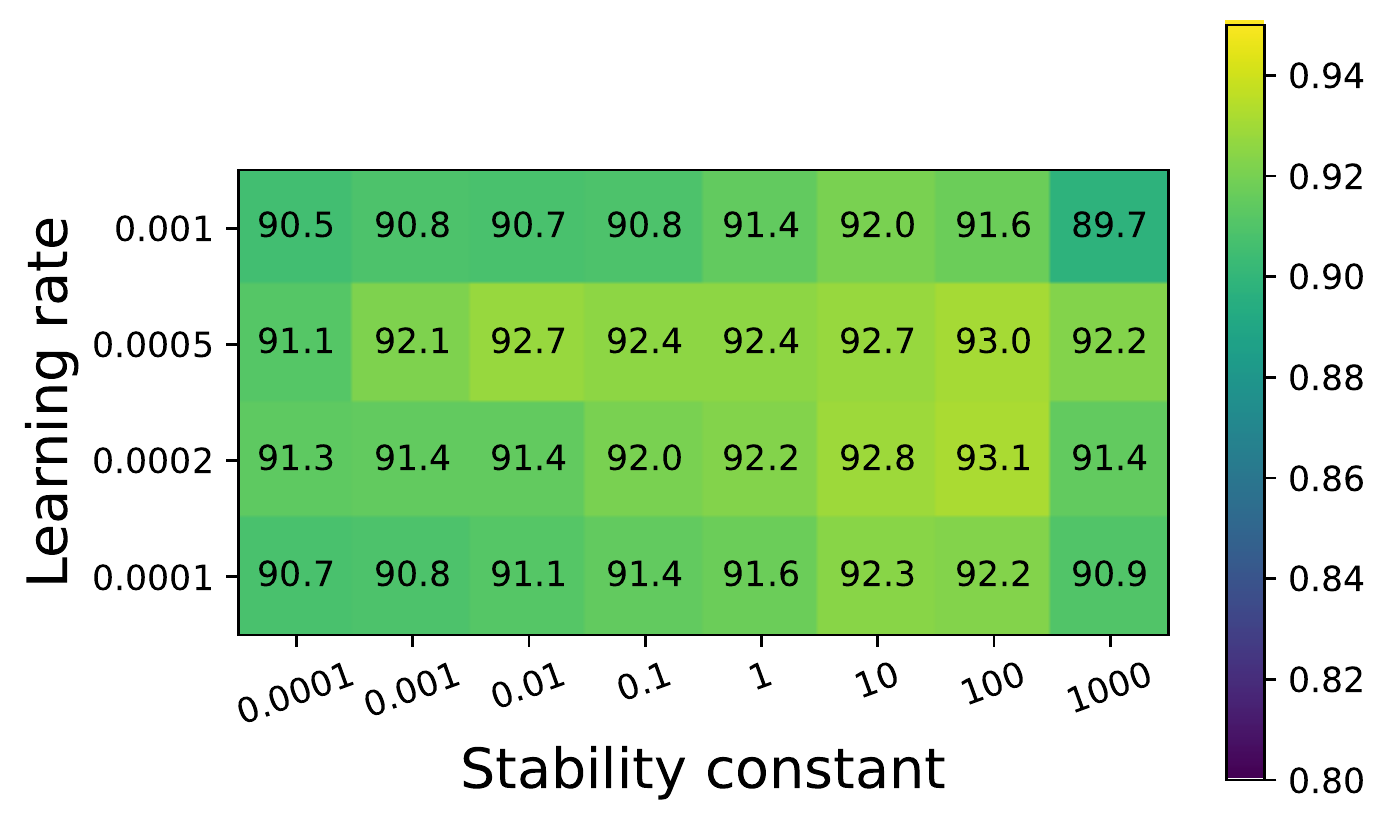}
    \includegraphics[width=0.38\linewidth]{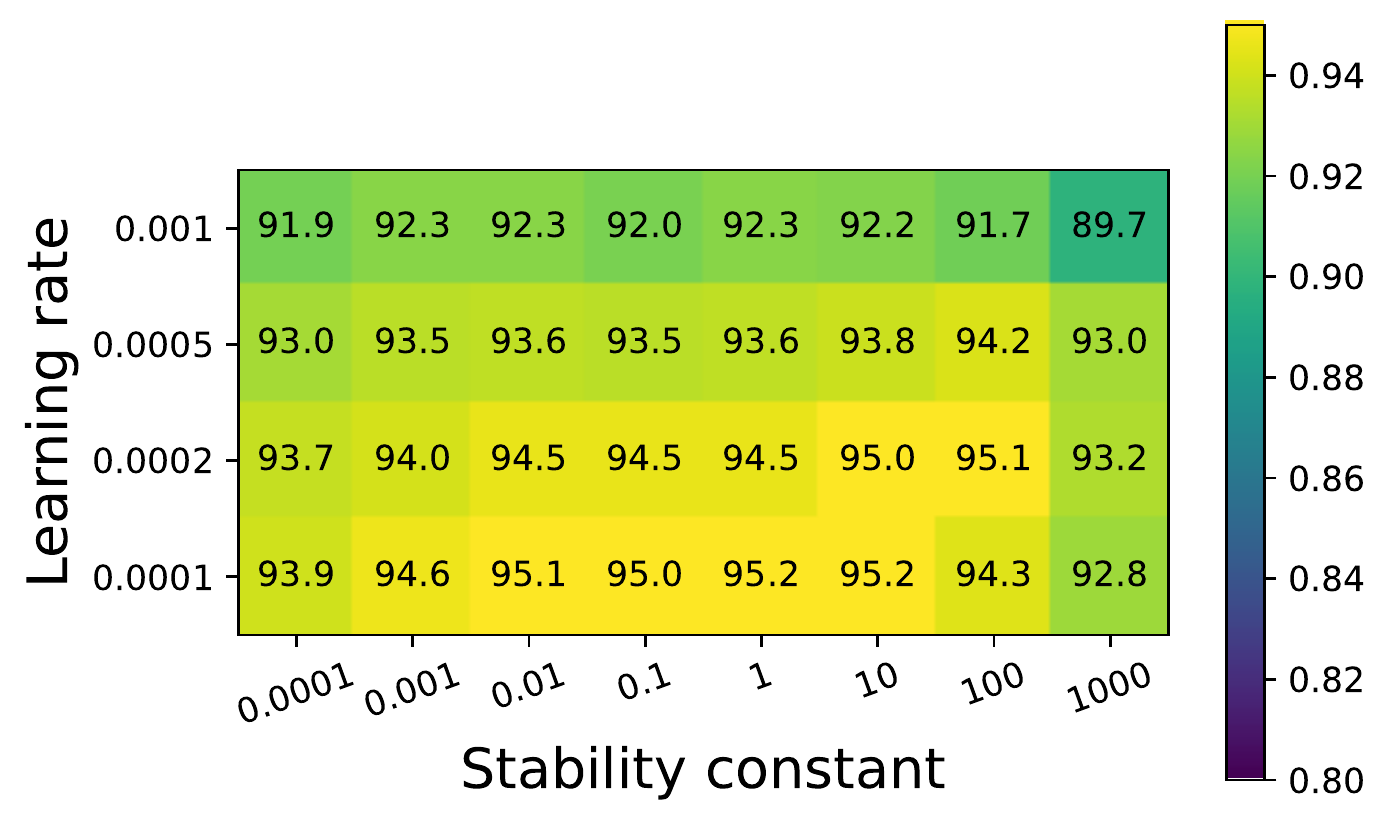}
    \includegraphics[width=0.38\linewidth]{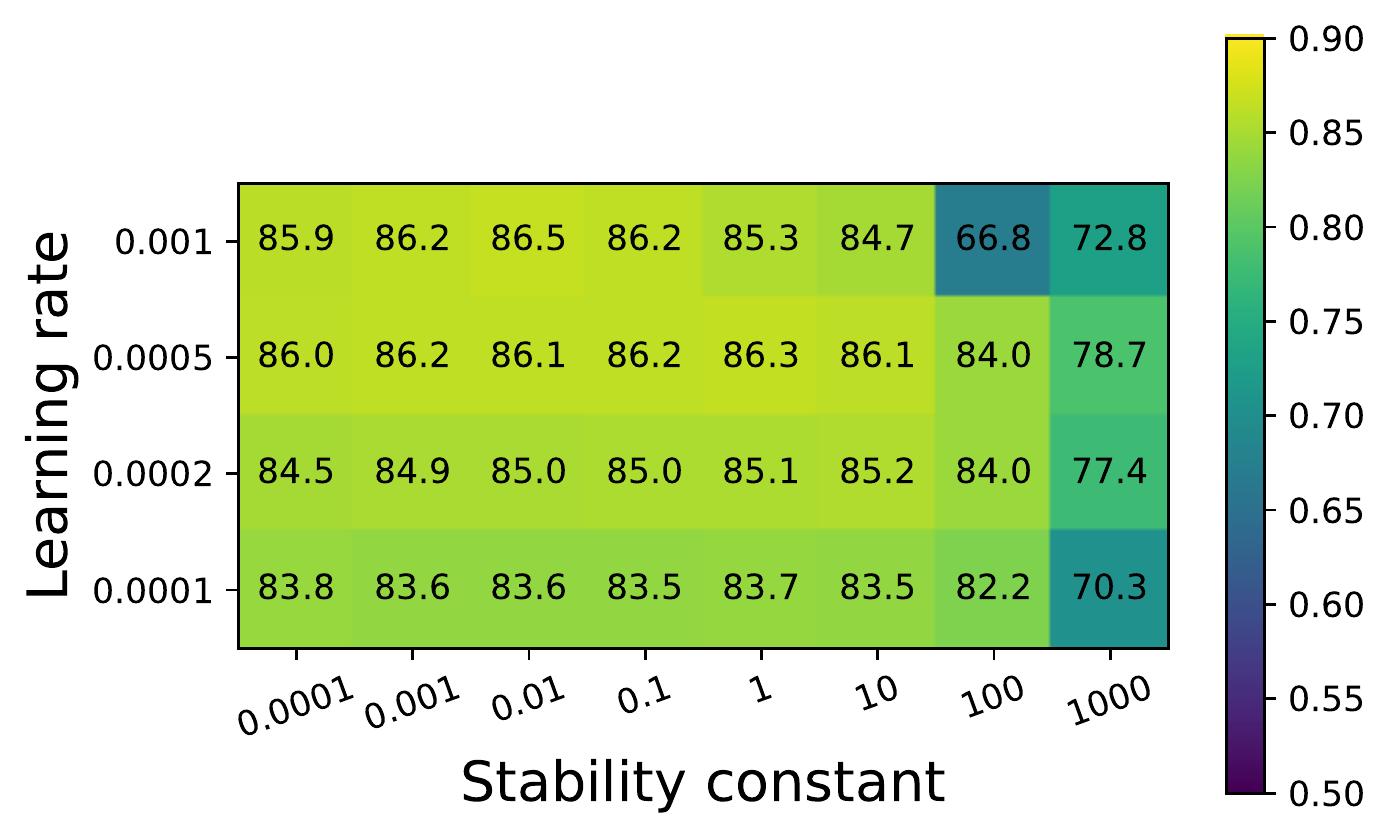}
    \includegraphics[width=0.38\linewidth]{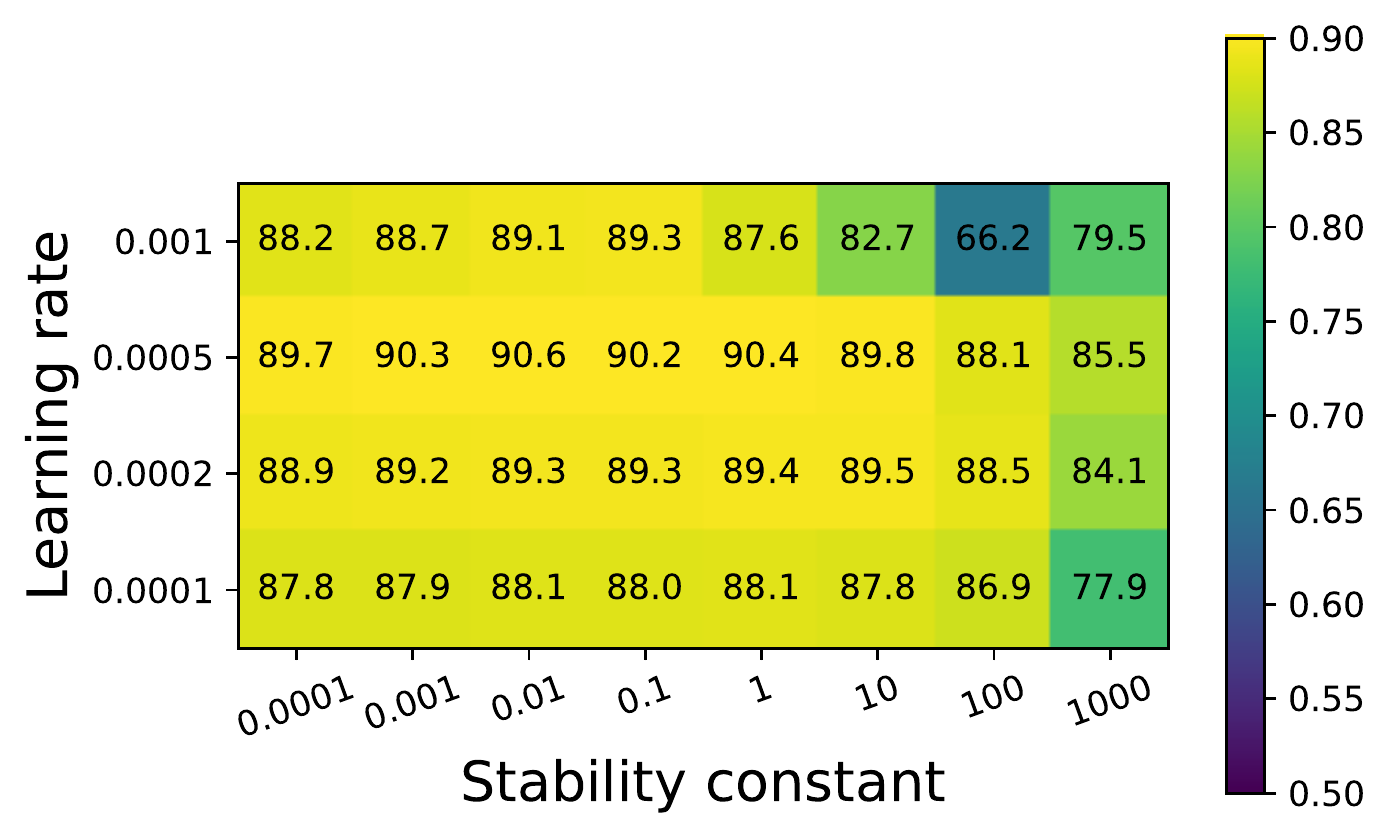}    \includegraphics[width=0.38\linewidth]{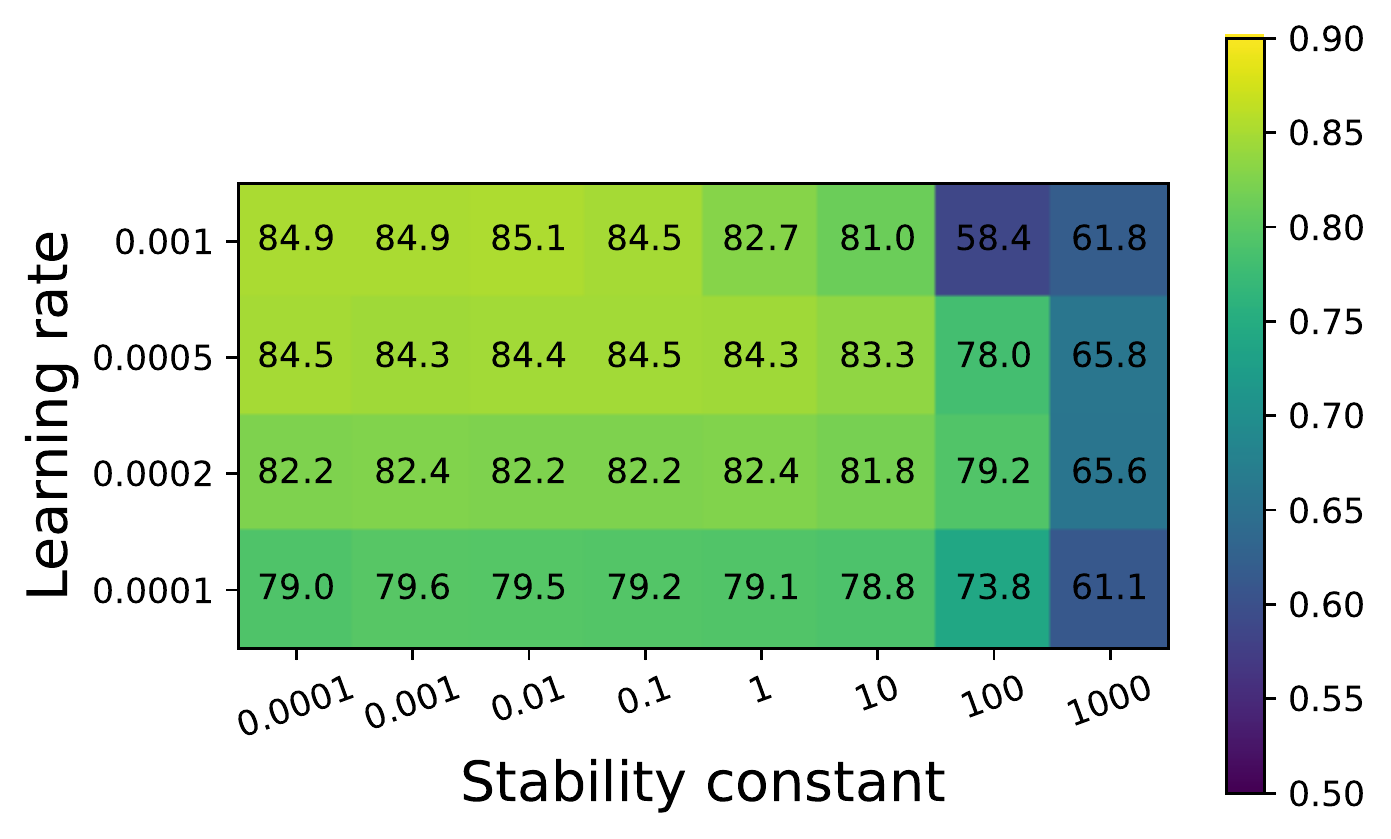}
    \includegraphics[width=0.38\linewidth]{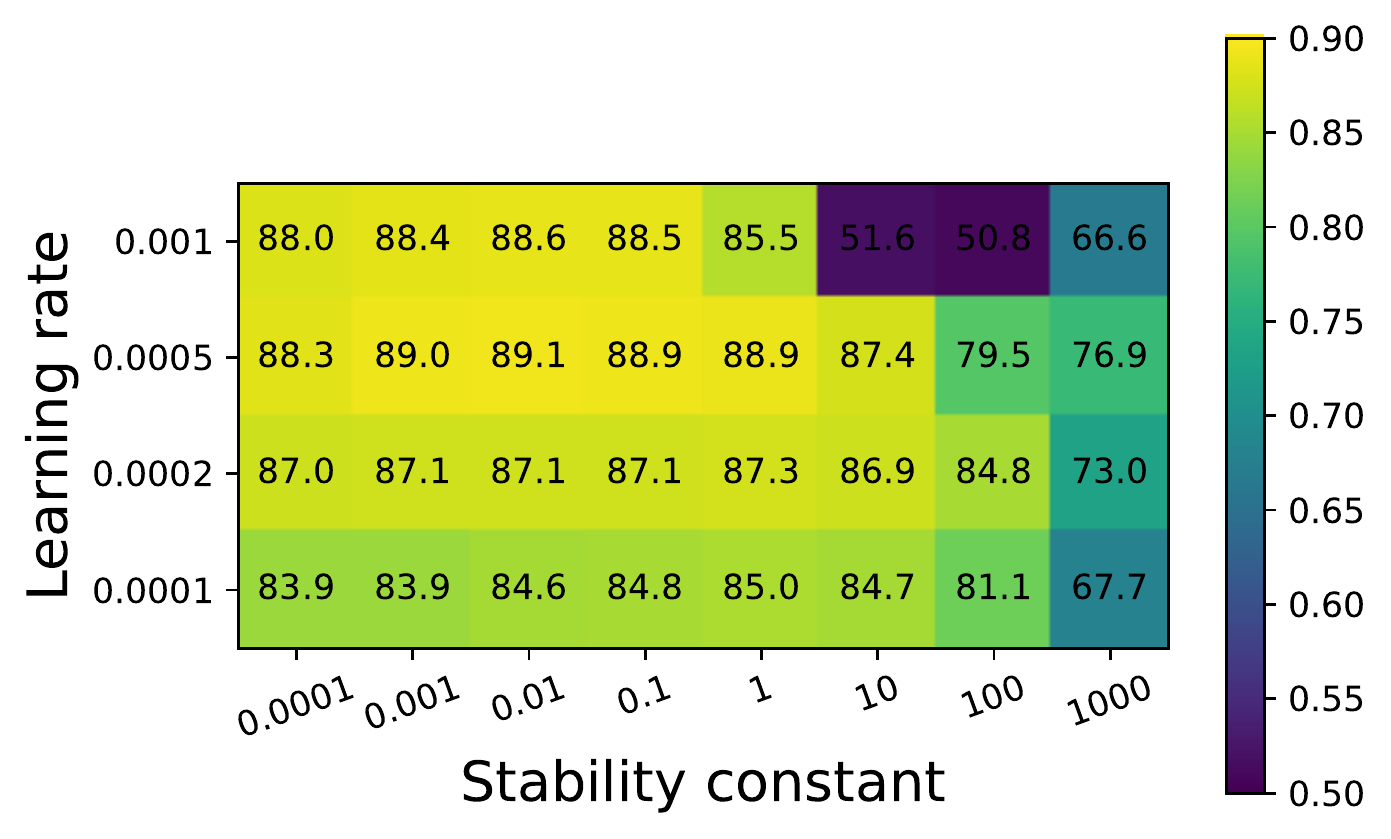}
    \caption{Test accuracy by different stability constant $\gamma$ and learning rate $\eta$ in automatic clipping ($\epsilon=3$). Upper row: SST2 for full 3 epochs. Middle row: QNLI for full 6 epochs. Lower row: QNLI for one epoch. Trained with RoBERTa-base (left) and RoBERTa-large (right).}
    \label{fig:easy to choose gamma}
\end{figure}


\section{Full table of GPT2 generation task on E2E dataset}
This is the extended version of \Cref{tab:E2E GPT selected} on E2E dataset. The performance measures are BLEU \cite{Papineni02bleu}, ROGUE-L \cite{lin-2004-rouge}, NIST \cite{sadjadi20182017}, METEOR \cite{banarjee2005}, and CIDEr \cite{vedantam2015cider} scores. Here $\epsilon$ is accounted by RDP \cite{mironov2017renyi}, where $\epsilon=3$ corresponds to 2.68 if accounted by Gaussian DP \cite{dong2019gaussian,bu2020deep} or to 2.75 if accounted by numerical composition \cite{gopi2021numerical}, and $\epsilon=8$ corresponds to 6.77 if accounted by Gaussian DP or to 7.27 if accounted by numerical composition.

\begin{table}[!htb]
    \centering
\resizebox{\linewidth}{!}{
\setlength\tabcolsep{2pt}
\begin{tabular}{|l|c|c|c|cccccccc|}
\hline
&DP&GPT2&GPT2&\multicolumn{8}{c|}{GPT2}
\\
Metric&guarantee&large&medium&&&&&&&&
\\\cline{3-12}
&&full&full&full&full&full&LoRA&RGP&prefix&top2&retrain
\\
&&AUTO-S&AUTO-S&AUTO-S&AUTO-V&\cite{li2021large}&\cite{hu2021lora}&\cite{yu2021large}&\cite{li2021prefix}&\cite{li2021large}&\cite{li2021large}
\\\hline
\multirow{3}{*}{BLEU}&$\epsilon=3$&\textbf{64.180}&\textbf{63.850}&\textbf{61.340}&\textbf{61.519}&\textbf{61.519}&58.153&58.482&47.772&25.920&15.457
\\
&$\epsilon=8$&\textbf{64.640}&\textbf{64.220}&\textbf{63.600}&63.189&63.189&\textbf{63.389}&58.455&49.263&26.885&24.247
\\
&non-DP&66.840&68.500&69.463&69.463&69.463&69.682&68.328&68.845&65.752&65.731
\\\hline\hline
\multirow{3}{*}{ROGUE-L}&$\epsilon=3$&\textbf{67.857}&\textbf{67.071}&\textbf{65.872}&65.670&65.670&\textbf{65.773}&65.560&58.964&44.536&35.240
\\
&$\epsilon=8$&\textbf{68.968}&\textbf{67.533}&\textbf{67.073}&66.429&66.429&\textbf{67.525}&65.030&60.730&46.421&39.951
\\
&non-DP&70.384&71.458&71.359&71.359&71.359&71.709&68.844&70.805&68.704&68.751
\\\hline\hline
\multirow{3}{*}{NIST}&$\epsilon=3$&\textbf{7.937}&\textbf{7.106}&\textbf{7.071}&\textbf{6.697}&\textbf{6.697}&5.463&5.775&5.249&1.510&0.376
\\
&$\epsilon=8$&\textbf{8.301}&\textbf{8.172}&\textbf{7.714}&7.444&7.444&\textbf{7.449}&6.276&5.525&1.547&1.01
\\
&non-DP&8.730&8.628&8.780&8.780&8.780&8.822&8.722&8.722&8.418&8.286
\\\hline\hline
\multirow{3}{*}{METEOR}&$\epsilon=3$&\textbf{0.403}&\textbf{0.387}&\textbf{0.387}&\textbf{0.384}&\textbf{0.384}&0.370&0.331&0.363&0.197&0.113
\\
&$\epsilon=8$&\textbf{0.420}&\textbf{0.418}&\textbf{0.404}&0.400&0.400&\textbf{0.407}&0.349&0.364&0.207&0.145
\\
&non-DP&0.460&0.449&0.461&0.461&0.461&0.463&0.456&0.445&0.443&0.429
\\\hline\hline
\multirow{3}{*}{CIDEr}&$\epsilon=3$&\textbf{2.008}&\textbf{1.754}&\textbf{1.801}&\textbf{1.761}&\textbf{1.761}&1.581&1.300&1.507&0.452&0.116
\\
&$\epsilon=8$&\textbf{2.163}&\textbf{2.081}&\textbf{1.938}&1.919&1.919&\textbf{1.948}&1.496&1.569&0.499&0.281
\\
&non-DP&2.356&2.137&2.422&2.422&2.422&2.491&2.418&2.345&2.180&2.004
\\\hline
\end{tabular}
}
\caption{Test performance on E2E dataset with GPT2. The best two GPT2 models for each row are marked in bold.}
    \label{tab:E2E GPT extended}
\end{table}

We observe that GPT2 (163 million parameters), GPT2-medium (406 million), and GPT2-large (838 million), \Cref{tab:E2E GPT selected} trained with our automatic clipping consistently perform better in comparison to other methods. In some cases, LoRA trained with Abadi's clipping also demonstrates strong performance and it would be interesting to see how LoRA trained with the automatic clipping will behave.

\section{Further experiments on CelebA dataset}
\label{app:celebA more experiments}
In this section, we present a complete summary of accuracy results, with DP constraint or not, for the CelebA dataset. We do not apply any data-preprocessing. In the first experiment, we apply a single ResNet on the 40 labels as the multi-task/multi-label learning. In the second experiment, we apply one ResNet on one label. As expected, our automatic DP optimizers have comparable test accuracy to the Abadi's DP optimizers, but we do not need to tune the clipping threshold for each individual task/label. We also notice that, learning different labels separately gives better accuracy than learning all labels together, though at the cost of heavier computational burden.
\newpage
\subsection{Multi-label classification}
We apply ResNet9 as in \Cref{app:CV settings} on the multi-label classification task. I.e. the output layer has 40 neurons, each corresponding to one sigmoid cross-entropy loss, that are summed to a single loss and all labels are learnt jointly.

\begin{table}[!htb]
    \centering
    \begin{tabular}{|c|c||c|c|c|c|c|}
         \hline
         \multirow{2}{*}{Index}&\multirow{2}{*}{Attributes}&Abadi's&AUTO-S&Abadi's&AUTO-S&non-DP\\
         &&$\epsilon=3$&$\epsilon=3$&$\epsilon=8$&$\epsilon=8$&$\epsilon=\infty$\\\hline
         0&5 o Clock Shadow&90.64&90.99$\color{green}{\uparrow}$&90.81&91.28$\color{green}{\uparrow}$&93.33\\
         1&Arched Eyebrows&75.15&76.31$\color{green}{\uparrow}$& 76.84 &77.11$\color{green}{\uparrow}$&81.52\\
         2&Attractive&75.85&76.10$\color{green}{\uparrow}$& 77.50 &77.74$\color{green}{\uparrow}$&81.15\\
         3&Bags Under Eyes&80.75&81.12$\color{green}{\uparrow}$& 82.15 &82.13$\color{red}{\downarrow}$&84.81\\
         4&Bald&97.84 &97.87$\color{green}{\uparrow}$&98.04 &97.98$\color{red}{\downarrow}$&98.58\\
         5&Bangs&92.71 &92.68$\color{red}{\downarrow}$&93.46 &93.55$\color{green}{\uparrow}$&95.50\\
         6&Big Lips&67.51&67.78$\color{green}{\uparrow}$& 68.34 &68.44$\color{green}{\uparrow}$&71.33\\
         7&Big Nose&78.01 &80.23$\color{green}{\uparrow}$&76.69 &80.59$\color{green}{\uparrow}$&83.54\\
         8&Black Hair&81.92&80.95$\color{red}{\downarrow}$& 83.33 &83.28$\color{red}{\downarrow}$&88.55\\
         9&Blond Hair&92.25 &92.38$\color{green}{\uparrow}$&93.52 &93.09$\color{red}{\downarrow}$&95.49\\
         10&Blurry&94.91 &94.82$\color{red}{\downarrow}$&95.08 &94.90$\color{red}{\downarrow}$&95.78\\
         11&Brown Hair&80.13&82.50$\color{green}{\uparrow}$& 83.74 &83.89$\color{green}{\uparrow}$&87.79\\
         12&Bushy Eyebrows&88.06&88.23$\color{green}{\uparrow}$& 89.72 &88.80$\color{red}{\downarrow}$&92.19\\
         13&Chubby&94.72&94.54$\color{red}{\downarrow}$& 94.54 &94.50$\color{red}{\downarrow}$&95.56\\
         14&Double Chin&95.19&95.49$\color{green}{\uparrow}$& 95.50 &95.51$\color{green}{\uparrow}$&96.09\\
         15&Eyeglasses&97.06&97.64$\color{green}{\uparrow}$& 98.32 &98.06$\color{red}{\downarrow}$&99.39\\
         16&Goatee&95.68&95.45$\color{red}{\downarrow}$& 95.84 &95.87$\color{green}{\uparrow}$&97.06\\
         17&Gray Hair&96.77&96.79$\color{green}{\uparrow}$& 97.02 &97.03$\color{green}{\uparrow}$&98.06\\
         18&Heavy Makeup&84.96&85.70$\color{green}{\uparrow}$& 87.58 &87.29$\color{red}{\downarrow}$&90.76\\
         19&High Cheekbones&81.46&81.42$\color{red}{\downarrow}$& 82.62 &82.72$\color{green}{\uparrow}$&86.62\\
         20&Male&92.05&92.17$\color{green}{\uparrow}$& 93.32 &93.17$\color{red}{\downarrow}$&97.46\\
         21&Mouth Slightly Open&86.20&86.32$\color{green}{\uparrow}$& 87.84 &88.48$\color{green}{\uparrow}$&93.07\\
         22&Mustache&96.05&95.96$\color{red}{\downarrow}$& 96.08 &95.99$\color{red}{\downarrow}$&96.74\\
         23&Narrow Eyes&84.90&84.78$\color{red}{\downarrow}$& 85.14 &85.18$\color{green}{\uparrow}$&86.98\\
         24&No Beard&91.55&91.67$\color{green}{\uparrow}$& 92.29 &92.45$\color{green}{\uparrow}$&95.18\\
         25&Oval Face&71.26&71.42$\color{green}{\uparrow}$& 71.98 &71.25$\color{red}{\downarrow}$&74.62\\
         26&Pale Skin&96.09&96.04$\color{red}{\downarrow}$& 96.15 &96.17$\color{green}{\uparrow}$&96.93\\
         27&Pointy Nose&70.34&72.11$\color{green}{\uparrow}$& 72.23 &73.01$\color{green}{\uparrow}$&75.68\\
         28&Receding Hairline&91.53&91.37$\color{red}{\downarrow}$& 91.75 &91.74$\color{red}{\downarrow}$&92.87\\
         29&Rosy Cheeks&93.26&93.02$\color{red}{\downarrow}$& 93.56 &93.35$\color{red}{\downarrow}$&94.86\\
         30&Sideburns&96.16&96.09$\color{red}{\downarrow}$& 96.27 &96.46$\color{green}{\uparrow}$&97.44\\
         31&Smiling&86.39&87.08$\color{green}{\uparrow}$& 88.87 &88.63$\color{red}{\downarrow}$&92.25\\
         32&Straight Hair&76.20&77.95$\color{green}{\uparrow}$& 78.78 &78.52$\color{red}{\downarrow}$&80.66\\
         33&Wavy Hair&70.30&71.79$\color{green}{\uparrow}$& 73.58 &73.19$\color{red}{\downarrow}$&79.15\\
         34&Wearing Earrings&80.53&81.52$\color{green}{\uparrow}$& 82.29 &82.20$\color{red}{\downarrow}$&87.56\\
         35&Wearing Hat&96.99&96.83$\color{red}{\downarrow}$& 97.46 &97.31$\color{red}{\downarrow}$&98.68\\
         36&Wearing Lipstick&88.95&88.04$\color{red}{\downarrow}$& 89.87 &90.72$\color{green}{\uparrow}$&93.49\\
         37&Wearing Necklace&84.59&85.83$\color{green}{\uparrow}$& 85.93 &85.42$\color{red}{\downarrow}$&86.61\\
         38&Wearing Necktie&93.91&93.91--& 94.43 &94.08$\color{red}{\downarrow}$&96.30\\
         39&Young&81.35&81.21$\color{red}{\downarrow}$& 82.18 &82.52$\color{green}{\uparrow}$&87.18\\\hline
    \end{tabular}
    \caption{Accuracy on CelebA dataset with settings in \Cref{app:CV settings} from one run. The green arrow indicates AUTO-S is better than Abadi's clipping under the same $\epsilon$; the red arrow indicates otherwise; the black bar indicates the same accuracy.}
\end{table}

\newpage
\subsection{Multiple binary classification}
For the second experiment, we apply ResNet9 on each label as a binary classification task. I.e. the output layer has 1 neuron and we run 40 different models for all labels separately.
\begin{table}[!htb]
    \centering
    \begin{tabular}{|c|c||c|c|c|c|c|}
         \hline
         \multirow{3}{*}{Index}&\multirow{3}{*}{Attributes}&Abadi's&AUTO-S&Abadi's&AUTO-S&non-DP\\
         &&Single&Single&Multi&Multi&Multi\\
         &&$\epsilon=8$&$\epsilon=8$&$\epsilon=8$&$\epsilon=8$&$\epsilon=\infty$\\\hline
         0&5 o Clock Shadow&92.15&92.29$\color{green}{\uparrow}$&90.81&91.28$\color{green}{\uparrow}$&93.33\\
         1&Arched Eyebrows&81.18&80.19$\color{red}{\downarrow}$& 76.84 &77.11$\color{green}{\uparrow}$&81.52\\
         2&Attractive&79.31&79.79$\color{green}{\uparrow}$& 77.50 &77.74$\color{green}{\uparrow}$&81.15\\
         3&Bags Under Eyes&83.52&83.48$\color{red}{\downarrow}$& 82.15 &82.13$\color{red}{\downarrow}$&84.81\\
         4&Bald&97.89 &97.88$\color{red}{\downarrow}$&98.04 &97.98$\color{red}{\downarrow}$&98.58\\
         5&Bangs&94.52 &94.83$\color{green}{\uparrow}$&93.46 &93.55$\color{green}{\uparrow}$&95.50\\
         6&Big Lips&67.32&67.53$\color{green}{\uparrow}$& 68.34 &68.44$\color{green}{\uparrow}$&71.33\\
         7&Big Nose&82.31 &82.36$\color{green}{\uparrow}$&76.69 &80.59$\color{green}{\uparrow}$&83.54\\
         8&Black Hair&87.08&86.93$\color{red}{\downarrow}$& 83.33 &83.28$\color{red}{\downarrow}$&88.55\\
         9&Blond Hair&94.29 &94.73$\color{green}{\uparrow}$&93.52 &93.09$\color{red}{\downarrow}$&95.49\\
         10&Blurry&94.95 &95.20$\color{green}{\uparrow}$&95.08 &94.90$\color{red}{\downarrow}$&95.78\\
         11&Brown Hair&87.41&87.19$\color{red}{\downarrow}$& 83.74 &83.89$\color{green}{\uparrow}$&87.79\\
         12&Bushy Eyebrows&91.23&91.43$\color{green}{\uparrow}$& 89.72 &88.80$\color{red}{\downarrow}$&92.19\\
         13&Chubby&94.70&94.70--& 94.54 &94.50$\color{red}{\downarrow}$&95.56\\
         14&Double Chin&95.43&95.43--& 95.50 &95.51$\color{green}{\uparrow}$&96.09\\
         15&Eyeglasses&98.88&99.14$\color{green}{\uparrow}$& 98.32 &98.06$\color{red}{\downarrow}$&99.39\\
         16&Goatee&96.12&96.07$\color{red}{\downarrow}$& 95.84 &95.87$\color{green}{\uparrow}$&97.06\\
         17&Gray Hair&97.48&97.34$\color{red}{\downarrow}$& 97.02 &97.03$\color{green}{\uparrow}$&98.06\\
         18&Heavy Makeup&88.85&88.72$\color{red}{\downarrow}$& 87.58 &87.29$\color{red}{\downarrow}$&90.76\\
         19&High Cheekbones&85.66&85.45$\color{red}{\downarrow}$& 82.62 &82.72$\color{green}{\uparrow}$&86.62\\
         20&Male&95.42&95.70$\color{green}{\uparrow}$& 95.53 &93.17$\color{red}{\downarrow}$&97.46\\
         21&Mouth Slightly Open&92.67&92.74$\color{green}{\uparrow}$& 87.84 &88.48$\color{green}{\uparrow}$&93.07\\
         22&Mustache&96.13&96.13--& 96.08 &95.99$\color{red}{\downarrow}$&96.74\\
         23&Narrow Eyes&85.13&85.13--& 85.14 &85.18$\color{green}{\uparrow}$&86.98\\
         24&No Beard&94.26&94.58$\color{green}{\uparrow}$& 92.29 &92.45$\color{green}{\uparrow}$&95.18\\
         25&Oval Face&70.77&73.05$\color{green}{\uparrow}$& 71.98 &71.25$\color{red}{\downarrow}$&74.62\\
         26&Pale Skin&96.38&96.34$\color{red}{\downarrow}$& 96.15 &96.17$\color{green}{\uparrow}$&96.93\\
         27&Pointy Nose&71.48&73.37$\color{green}{\uparrow}$& 72.23 &73.01$\color{green}{\uparrow}$&75.68\\
         28&Receding Hairline&91.51&91.51--& 91.75 &91.74$\color{red}{\downarrow}$&92.87\\
         29&Rosy Cheeks&93.26&93.35$\color{green}{\uparrow}$& 93.56 &93.35$\color{red}{\downarrow}$&94.86\\
         30&Sideburns&96.46&96.34$\color{red}{\downarrow}$& 96.27 &96.46$\color{green}{\uparrow}$&97.44\\
         31&Smiling&90.82&90.87$\color{green}{\uparrow}$& 88.87 &88.63$\color{red}{\downarrow}$&92.25\\
         32&Straight Hair&79.01&79.01--& 78.78 &78.52$\color{red}{\downarrow}$&80.66\\
         33&Wavy Hair&77.55&78.83$\color{green}{\uparrow}$& 73.58 &73.19$\color{red}{\downarrow}$&79.15\\
         34&Wearing Earrings&87.33&87.50$\color{green}{\uparrow}$& 82.29 &82.20$\color{red}{\downarrow}$&87.56\\
         35&Wearing Hat&98.04&98.11$\color{green}{\uparrow}$& 97.46 &97.31$\color{red}{\downarrow}$&98.68\\
         36&Wearing Lipstick&92.05&90.46$\color{red}{\downarrow}$& 89.87 &90.72$\color{green}{\uparrow}$&93.49\\
         37&Wearing Necklace&86.21&86.21--& 85.93 &85.42$\color{red}{\downarrow}$&86.61\\
         38&Wearing Necktie&95.85&95.94$\color{green}{\uparrow}$& 94.43 &94.08$\color{red}{\downarrow}$&96.30\\
         39&Young&85.19&84.12$\color{red}{\downarrow}$& 82.18 &82.52$\color{green}{\uparrow}$&87.18\\\hline
    \end{tabular}
    \caption{Accuracy on CelebA dataset with settings in \Cref{app:CV settings} from one run. `Single' means each attribute is learned separately as a binary classification task. `Multi' means all attributes are learned jointly as a multi-label classification task. The green arrow indicates AUTO-S is better than Abadi's clipping under the same $\epsilon$ and the same task; the red arrow indicates otherwise; the black bar indicates the same accuracy.}
\end{table}

\section{Code implementation of automatic clipping}\label{app:codebase}
Changing Abadi's clipping to automatic clipping is easy in available codebases. One can set the clipping $R=1$ or any other constant, as explained in \Cref{thm: non-adaptive automatic} and \Cref{thm: adaptive automatic}.

\subsection{Opacus}
For Opacus \cite{opacus} version 1.1.2 (latest), we can implement the all-layer automatic clipping by changing Line 399-401 in \url{https://github.com/pytorch/opacus/blob/main/opacus/optimizers/optimizer.py} to
\begin{verbatim}
per_sample_clip_factor = self.max_grad_norm /(per_sample_norms + 0.01)
\end{verbatim}

The per-layer automatic clipping requires changing Line 61-63 in \url{https://github.com/pytorch/opacus/blob/main/opacus/optimizers/perlayeroptimizer.py} to
\begin{verbatim}
per_sample_clip_factor =max_grad_norm / (per_sample_norms + 0.01)
\end{verbatim}

For older version ($< 1.0$, e.g. 0.15) of Opacus, we can implement the all-layer automatic clipping by changing Line 223-225 in \url{https://github.com/pytorch/opacus/blob/v0.15.0/opacus/utils/clipping.py} to
\begin{verbatim}
per_sample_clip_factor = self.flat_value / (norms[0] + 0.01)
\end{verbatim}
or implement the per-layer automatic clipping by changing Line 301-302 in \url{https://github.com/pytorch/opacus/blob/main/opacus/optimizers/perlayeroptimizer.py} to
\begin{verbatim}
per_sample_clip_factor = threshold / (norm + 0.01)
clipping_factor.append(per_sample_clip_factor)
\end{verbatim}

\subsection{ObJAX}
For ObJAX version 1.6.0 (latest), we can implement the automatic clipping in \url{https://github.com/google/objax/blob/master/objax/privacy/dpsgd/gradient.py} by changing Line 92 to
\begin{verbatim}
idivisor = self.l2_norm_clip / (total_grad_norm+0.01)
\end{verbatim}
and changing Line 145 to
\begin{verbatim}
idivisor = self.l2_norm_clip/(grad_norms+0.01) 
\end{verbatim}

\subsection{Private-transformers}
\label{app:xuechen modification}
To reproduce our experiments for sentence classification and table-to-text generation, we modify the `private-transformers' (version 0.1.0) codebase of \cite{li2021large}. The modification is in \url{https://github.com/lxuechen/private-transformers/blob/main/private_transformers/privacy_utils/privacy_engine.py}, by changing Line 349 to
\begin{verbatim}
return self.max_grad_norm / (norm_sample + 0.01)
\end{verbatim}
and Line 510-512 to
\begin{verbatim}
coef_sample = self.max_grad_norm * scale / (norm_sample + 0.01)
\end{verbatim}

\section{More on related works of per-sample clipping}
\label{app:more related}

We discuss the difference between our work and the related (see the table below).
\begin{table}[H]
\centering
\resizebox{\linewidth}{!}{
\begin{tabular}{c|c|c|c|c}
$C_i$&reference&clipping or not & convergence analysis& experiments \\\hline
$\min(1,\frac{R}{||\g_i||})$&\cite{abadi2016deep,li2021large}&clipping & None & CV and NLP
\\
$\min(\frac{1}{R},\frac{1}{||\g_i||})$&\cite{de2022unlocking}&clipping & None & CV only
\\
$\frac{R}{||\g_i||}$&\cite{das2021convergence}&normalization & convex and federated setting (not per-sample) & CV only
\\
$\frac{1}{||\g_i||+\gamma}$&\cite{yang2022normalized}&normalization & non-convex, relaxed Lipschitz smoothness & CV and NLP
\\
$\frac{1}{||\g_i||+\gamma}$&this work&normalization & non-convex, same smoothness as non-DP & CV and NLP
\end{tabular}
}
\caption{Comparison between clipping functions. CV means computer vision and NLP means natural language processing. Notice that any clipping function with $R$ is not automatic and requires tuning, and that the stability constant $\gamma$ enjoys theoretical and empirical benefits.}
\end{table}

Our work is very different to most works which do not analyze the convergence of DP deep learning in a non-convex setting, but it is very similar to \cite{yang2022normalized}\footnote{We emphasize that \cite{yang2022normalized} is a concurrent work with no known dependency either way, which goes public (to arXiv, on 27 Jun 2022) after ours (on 14 Jun 2022).}. However, \cite{yang2022normalized} assumes a relaxed Lipschitz smootheness in place of our \Cref{assumption: tilde g}, where we instead assume the symmetric gradient noise. In addition, our experiments are more comprehensive, covering over 10 tasks including DP-GPT2, while \cite{yang2022normalized} only experimented with 2 smaller models --- ResNet20 and Roberta-base. 

\subsection{Clarifications}
We now clarify some false or incomplete conclusion in previous literatures that apply the per-sample gradient clipping (re-parameterized or not).

1. Per-sample clipping is not robust to $R$, even with re-parameterization.

In \cite[Figure 8]{de2022unlocking} and our \Cref{fig:AUTO only 1D grid search}, the accuracy of DP optimizer with Abadi's clipping is insensitive to $R$ only if one has found a small enough region (e.g. $R\leq 1$), which takes effort to find or the accuracy will be unacceptably low out of the region. In particular, choosing $R=1$ as in \cite{de2022unlocking} is not universally proper, e.g. \cite{li2021large} uses $R=0.1$ for language models. This dependence on tasks, datasets and optimizers means per-sample clipping still requires the expensive hyperparameter tuning.

In other words, per-sample gradient clipping is at best an approximation of per-sample gradient normalization (i.e. our AUTO-V) and should be considered as semi-automatic, whereas AUTO-V/S is fully automatic in terms of tuning $R$. Although technically we introduce a new hyperparameter $\gamma$ in the place of $R$, we claim that automatic clipping is not sensitive to $\gamma$ (our only hyperparameter) for a large range, e.g. one can multiply $\gamma$ by 10000 times, going from $\gamma=$0.001
 to 10 with learning rate 0.0005 in \Cref{fig:easy to choose gamma}, and the accuracy is similar.

2. Per-sample clipping does not decouple $R$, especially for DP-Adam.

In general, $R$ is not completely decoupled from the re-parameterized per-sample clipping in \cite{de2022unlocking}:
$$C_\text{Abadi}=\min(\frac{R}{\|\g_i\|},1)=R\cdot C_\text{re-param}=R\cdot \min(\frac{1}{\|\g_i\|},\frac{1}{R})$$
Given that $R$ appears in both terms on the right hand side, one can at most say "... when the clipping norm is decreased k
times, the learning rate should be increased k times to maintain \textit{similar} accuracy." by \cite{kurakin2022toward} and "... Using this update, performance becomes \textit{less sensitive} to the choice of clipping norm." by \cite{de2022unlocking}. In contrast, we can state that adjusting the learning rate proportionally, our AUTO-V/S maintains \textit{exactly the same} accuracy and is \textit{completely insensitive} to the choice of $R$.

Additionally and importantly, the understanding in \cite{kurakin2022toward,de2022unlocking} is limited to DP-SGD (as they only experiment with the computer vision tasks), where "... the learning rate $\eta$ absorbs a factor of $R$." by \cite{de2022unlocking}. As rigorously proved in \Cref{thm: non-adaptive automatic} and \Cref{thm: adaptive automatic}, adaptive optimizers like Adam and AdaGrad do not absorb $R$ but rather cancel it. This is visualized in \Cref{fig:my motivation}, where the performance landscape is row-wise for DP-Adam and diagonal for DP-SGD.

3. Re-parameterized per-sample clipping unintentionally changes the weight decay.

Weight decay is a common technique used in any work that uses AdamW and in the re-parameterized trick by \cite{de2022unlocking}. We can see that 
$$\text{Before re-parameterization: }w_{t+1}=w_t-\eta\left(\frac{1}{B}\sum_i\min(1,\frac{R}{||g_i||})g_i+\lambda w_t+\frac{\sigma R}{B}N(0,I)\right)$$

$$\text{After re-parameterization: }w_{t+1}=w_t-\eta\left(\frac{1}{B}\sum_i\min(\frac{1}{R},\frac{1}{||g_i||})g_i+\frac{\lambda}{R} w_t+\frac{\sigma}{B}N(0,I)\right)$$

Therefore, when we move along $R$ like in \cite[Figure 8]{de2022unlocking}, from $R=1$ to $2^{-6}$, the weight decay increases from $\lambda$ to $2^6\cdot\lambda$ by 64 times, which may worsen the accuracy as seen in the blue curve of \cite[Figure 8]{de2022unlocking}! Again, this is due to the incomplete decoupling by per-sample clipping, which is only avoided in AUTO-V/S thanks to theoretical analysis in \Cref{thm: non-adaptive automatic} and \Cref{thm: adaptive automatic}.
\begin{align*}
\text{AUTO-V/S with weight decay: }w_{t+1}&=w_t-\eta\left(\frac{1}{B}\sum_i \frac{1}{\|g_i\|+\gamma}g_i+\lambda w_t+\frac{\sigma}{B} \mathcal{N}(0,I)\right).
\end{align*}

\subsection{Connections to normalized optimisation}

Variants of normalized gradient have been used in optimization \cite{mandic2004generalized,murray2019revisiting,zhao2021convergence,zhao2020stochastic,cutkosky2020momentum}. These normalized optimizers are fundamentally different to our automatic optimizers, because the normalization is on mini-batch not on each sample and noise is not involved:
\begin{align*}
\text{NSGD: }w_{t+1}&=w_t-\eta\left( \frac{\frac{1}{B}\sum_i g_i}{\|\frac{1}{B}\sum_i g_i\|}\right)
\\
\text{AUTO-V: }w_{t+1}&=w_t-\eta\left(\frac{1}{B}\sum_i \frac{g_i}{\|g_i\|}+\frac{\sigma}{B} \mathcal{N}(0,I)\right).
\end{align*}
The main difference lies in the challenge of analyzing per-sample normalization (which is biased) and the batch-gradient normalization (which is unbiased in the direction). That is,
$\frac{\frac{1}{B}\sum_i g_i}{\|\frac{1}{B}\sum_i g_i\|}$ is parallel to the mini-batch gradient $\frac{1}{B}\sum_i g_i$ but $\frac{1}{B}\sum_i \frac{g_i}{\|g_i\|}$ is generally not parallel to it (this conclusion also holds if the normalizaiton is replaced by the clipping). On a side note, it is interesting that \Cref{thm:upper bounding grad norm without r} indeed shows although a bias is introduced by the per-sample clipping, it is not fatal to the asymptotic convergence and hence may not be a concerning matter.

\end{document}